\theoremstyle{plain}
\newtheorem{theorem}{Theorem}[section]
\newtheorem{proposition}[theorem]{Proposition}
\newtheorem{lemma}[theorem]{Lemma}
\theoremstyle{definition}
\newtheorem{definition}[theorem]{Definition}
\newtheorem{assumption}[theorem]{Assumption}
\newtheorem{example}[theorem]{Example}
\theoremstyle{remark}
\newtheorem{remark}[theorem]{Remark}
\newcommand{\shang}[1]{{\color{red} [Shang: {#1}]}} % just an example
\definecolor{rose}{rgb}{1.0, 0.33, 0.64}
\newcommand{\zz}[1]{{\color{rose} [ZZ: {#1}]}}
\renewcommand{\thefootnote}{\fnsymbol{footnote}}
\DeclareMathOperator*{\argmax}{arg\,max}
\DeclareMathOperator*{\argmin}{arg\,min}
\title{Towards Better Statistical Understanding of Watermarking LLMs}
\author{Zhongze Cai$^{\dagger *}$ \ \  Shang Liu$^{\dagger *}$ \ \   Hanzhao Wang$^{\ddagger *}$ \  \ Huaiyang Zhong$^{\diamond}$ \ \ \ Xiaocheng Li$^{\dagger}$}
\date{\small
$^{\dagger}$ Imperial College Business School, Imperial College London\\
$^{\ddagger}$ The University of Sydney Business School, The University of Sydney\\
$^{\diamond}$ Grado Department of Industrial and Systems Engineering, Virginia Tech }
\affil[1]{%
    Imperial College Business School\\
    Imperial College London\\
    London, UK
}
\affil[2]{%
    Grado Department of Industrial and Systems Engineering\\
    Virginia Tech\\
    Blacksburg, VA
}
\begin{document}
\maketitle
\onehalfspacing

\def\thefootnote{*}\relax\footnotetext{Equal contribution.}

\begin{abstract}
In this paper, we study the problem of watermarking large language models (LLMs). We consider the trade-off between model distortion and detection ability and formulate it as a constrained optimization problem based on the red-green list watermarking algorithm. We show that the optimal solution to the optimization problem enjoys a nice analytical property which provides a better understanding and inspires the algorithm design for the watermarking process. We develop an online dual gradient ascent watermarking algorithm in light of this optimization formulation and prove its asymptotic Pareto optimality between model distortion and detection ability. Such a result guarantees an averaged increased green list probability and henceforth detection ability explicitly (in contrast to previous results). Moreover, we provide a systematic discussion on the choice of the model distortion metrics for the watermarking problem. We justify our choice of KL divergence and present issues with the existing criteria of ``distortion-free'' and perplexity. Finally, we empirically evaluate our algorithms on extensive datasets against benchmark algorithms.
\end{abstract}

\section{Introduction}
As the ability of large language models (LLMs) evolves rapidly, their applications have raised concerns about LLMs' abuse, such as launching bots on social media, creating fake news, and cheating on writing school essays. The synthetic data created by the LLMs rather than real humans is also polluting the data pool and should be detected and removed to create a high-quality dataset before training \citep{radford2023robust}. Numerous attempts have been made to make the detection possible, which can mainly be classified into two categories: post hoc detection that does not modify the language model and the watermarking that encodes information in the modified content.

Post hoc detection aims to train models that label the texts without monitoring the generation process using statistical features of the LLMs. For example, when being inspected by another LLM, the statistical properties of machine-generated texts deviate from the human-generated ones in aspects such as the distributions of token log-likelihoods \citep{gehrmann2019gltr, ippolito2019automatic, zellers2019defending, solaiman2019release, tian2023gptzero, mitchell2023detectgpt}. However, post hoc ways usually rely on the fundamental assumption that machine-generated texts deviate from human-generated texts, which could be challenged in two ways. The first threat is that LLMs can mimic humans better as time moves on, leading to a situation where LLMs are more indistinguishable from humans \citep{sadasivan2023can}. Although the detection is possible when LLMs are still imperfect in learning \citep{chakraborty2023possibilities}, such a cat-and-mouse game makes the detection less reliable with new models. Another drawback is that there might not be a universal statistical property of human speech across different groups of people. For instance, \citet{liang2023gpt} points out that GPT detectors are biased against non-native speakers.

It has reached a consensus that the usage of LLMs should be regulated and companies have agreed to add restrictions \citep{paul2023top}. The watermarking scheme has become not only an academic topic \citep{aaronson2023watermarking, christ2023undetectable, kirchenbauer2023watermark, sadasivan2023can} but also a practical solution. As the transformer-based language models \citep{vaswani2017attention} have been widely adopted, the next-token-prediction generating process enables researchers to develop watermarking methods by monitoring the next-token distribution. Existing methods \citep{aaronson2023watermarking, christ2023undetectable, kuditipudi2023robust, kirchenbauer2023watermark, zhao2023provable, fernandez2023three} mainly distort the distribution via a (sequence of) random variable(s) called key(s), and the detector uses the key(s) to examine the statistics of the texts. In short, watermarking algorithms twist the distribution of texts and obtain some level of detection ability.

Among all the existing watermarking algorithms, the red-green list watermarking introduced by \citet{kirchenbauer2023watermark} has been very popular, but there is no consensus on analyzing the trade-off between the model distortion and the detection ability. For example, few characterize the \textit{price} paid to obtain such a watermark -- there is no common definition of the model distortion. For instance, \citet{aaronson2023watermarking, kuditipudi2023robust, hu2023unbiased, wu2024resilient} argue that the ``distortion-free'' criteria should be considered, and suggest the model distortion can be avoided. Some other papers measure the model distortion differently: current empirical validation mostly uses the difference of log-perplexity to measure the model distortion \citep{kirchenbauer2023watermark}, while some use the Kullback-Leibler divergence \citep{zhao2023provable}. Besides, most existing algorithms decide their hyperparameters in a heuristic way \citep{kirchenbauer2023watermark}. In this paper, we try to improve the understanding of the watermarking algorithms by answering the following questions: \textit{Statistically, what is the minimal price for the red-green list watermarking algorithm to attain a certain level of detection ability? Furthermore, what should be considered a good measurement for the price?}

In summary, we address our main contributions as follows: In Section \ref{sec:Pareto_optimal}, we study the \textit{price} paid when applying a generalized version of the soft red-green list watermarking scheme introduced in \citet{kirchenbauer2023watermark}. We formulate the trade-off between the model distortion and the detection ability as a constrained optimization problem that minimizes the Kullback-Leibler divergence of the watermarked model towards the original model subject to lower bounds on the average increased green list probability (as a proxy for the detection ability). Such a target is justified by the natural decomposition of the total KL divergence into the sums of the step-wise KL divergence (Proposition \ref{prop:KL_decompose}). We curve the trade-off and develop an online optimization watermarking algorithm (Algorithm \ref{alg:DualGA}) as an asymptotic Pareto optimum under mild assumptions (Assumption \ref{assum:iid}) as is shown in Theorem \ref{thm:dualGA_performance_iid}. In Section \ref{sec:KL_DISTORT}, we justify our choice of Kullback-Leibler divergence as the model distortion. We show that any watermarked model close to the original model is hardly detectable via an information-theoretic lower bound (Proposition \ref{prop:info_lower_bound}). We also show in Proposition \ref{prop:KL_cvx} that the ``distortion-free'' criterion focuses on ``the distance of the expected model from the original one'' rather than ``the expected distance of the model from the original one''. Such results suggest that the marginal distortion-free criterion should be reconsidered carefully. Later, we justify the choice of KL divergence against the difference of log-perplexity, where the latter could be smaller than zero under distortion. In Section \ref{sec:exp}, we present numerical experiments. We compare our main algorithm Dual Gradient Ascent (Algorithm \ref{alg:DualGA}, abbreviated as DualGA) to existing algorithms on benchmark datasets, validating that it achieves the optimal trade-off between the model distortion and the detection ability and ensures the detectable watermark individually for every prompt.

\subsection{Model Setup}
Language models (LMs) describe the probabilities of a sequence of words. The currently mainstream LMs \citep{radford2019language,brown2020language} work autoregressively by predicting the next token based on a given prompt and all previous tokens. An LM $\mathcal{M}$ is a function that treats both a prompt $x$ and previous $t-1$ tokens $y_{1}, \dots, y_{t-1}$ as \textit{context} (abbreviated as $\bm{y}_{[t-1]}$) and maps the context to a probability vector $\bm{p}_t \in \Delta(\mathcal{V})$, where $\Delta(\mathcal{V})$ denotes the set of probability distributions over the vocabulary $\mathcal{V}$. To obtain $\bm{p}_t$, a typical LM first generates a logit vector $\bm{l}_t=(l_{t,1},...,l_{t,|\mathcal{V}|})$ and applies a softmax layer to output
$p_{t,k} \coloneqq \frac{\exp(l_{t,k})}  {\sum_{k'\in \mathcal{V}} \exp(l_{t,k'})}$. 
The next token $y_t$ is sampled via $\bm{p}_t=(p_{t,1},...,p_{t,|\mathcal{V}|})^\top$. The LM repeats this procedure until a ``terminate'' symbol $v_{\text{term}}\in\mathcal{V}$ is sampled or the token sequence reaches a pre-specified maximum length $T_{\text{max}}$. It does not hurt to assume that after receiving the first $v_{\text{term}}$, the LM continues outputting the same $v_{\text{term}}$ until the maximum length is reached. We denote the whole generating process by the map $\bm{p}_{[T_{\text{max}}]}: \mathcal{X} \rightarrow \Delta(\mathcal{V}^{T_{\text{max}}})$, where $\mathcal{X}$ is the prompt space. For simplicity, we omit the subscript and use $\bm{p}$ to denote the whole mapping/probability model when there is no ambiguity.

We denote the marginal distribution of $\bm{p}$ that generates the first $t$ tokens by $\bm{p}_{[t]}(\cdot)$ and the distribution of the $t$-th token conditioned on the previous $t-1$ tokens by $\bm{p}_{t|[t-1]}(\cdot)$. %We sometimes omit the subscript of $[t]$ when referring the whole generating process for simplicity.

The watermarking of LMs is a procedure that develops a \textit{watermarked} language model accompanied by a detection algorithm. A watermarked model contains an algorithm $\mathcal{A}$ and a key $\mathcal{K}$ to be a mapping from the prompt $x$ and previous tokens $\bm{y}_{[t-1]}$ to another probability vector $\bm{q}_t^{(\mathcal{A}, \mathcal{K})}$ over $\mathcal{V}$. The watermarked LM also works autoregressively but based on modified distributions. In the following, we use $\bm{q}$ to represent the probability vectors of watermarked models and sometimes omit the superscript $(\mathcal{A},\mathcal{K})$ for simplicity. A detection procedure $\mathcal{D}$ receives the detection key $\mathcal{K}$ and a token sequence as inputs and then gives a binary output as a prediction of whether the token sequence is generated from the watermarked LM, i.e., generated by $\bm{q}^{(\mathcal{A}, \mathcal{K})}$ (1 for yes and 0 for no). The two types of errors are:
\[
\alpha_{\bm{p}}(\mathcal{D}, x) \coloneqq \mathbb{P}\left(\mathcal{D}(\mathcal{K}, \bm{y}_{[T_{\text{max}}]}) = 1 \,\big| \,\bm{y}_{[T_{\text{max}}]} \sim \bm{p}(x)\right).
\]
\[
\beta(\mathcal{D}, x) \coloneqq \mathbb{P}\left(\mathcal{D}(\mathcal{K}, \bm{y}_{[T_{\text{max}}]}) = 0 \,\big| \,\bm{y}_{[T_{\text{max}}]} \sim \bm{q}^{(\mathcal{A}, \mathcal{K})}(x)\right).
\]
Here $\alpha_{\bm{p}}(\mathcal{D}, x)$ is the type I error that describes the probability of predicting a normal sequence as generated by the watermarked LM. The conditional part requires that the sequence $\bm{y}_{[T_{\text{max}}]}$ is generated by $\bm{p}$, which can be based on a human, the unwatermarked LM, or other watermarked LMs. $\beta(\mathcal{D}, x)$ is the type II error that describes the probability of predicting an LM-generated sequence as not generated by the watermarked LM. We remark that the two errors are affected by both the watermarking procedure/algorithm and the detection algorithm.

\section{Trade-off between Model Distortion and Detection Ability}
\label{sec:Pareto_optimal}

The two natural desiderata for a watermarking algorithm are (i) the watermarked LM stays close to the unwatermarked LM and (ii) the watermarked text can be easily detected. We aim to provide a precise characterization of the trade-off between these two aspects. Specifically, we adopt the Kullback-Leibler divergence to measure the closeness between the two LMs (i.e., the extent of model distortion caused by watermarking) and we consider the two types of errors defined earlier as a measurement of the detection ability. To begin with, we state a generalized version of the soft watermarking scheme proposed by \cite{kirchenbauer2023watermark}. The theoretical analysis, on the one hand, shows a nice structure of the algorithm, and on the other hand, reveals potential issues of the algorithm.

% MAJOR REVISION, gamma is the real green list ratio instead of lambda

\begin{comment}
\begin{algorithm}[ht!]
\caption{Generalized soft watermarking \citep{kirchenbauer2023watermark}}
    \label{alg:kirchen}
    \begin{algorithmic}[1] 
    \Require Original LM $\bm{p}$, prompt $x$,  green list ratio $\gamma$, pseudorandom function $f$, perturbs $\{\delta_{t,k}>0\}_{t\in [T_{\text{max}}],k\in \mathcal{V}}$, random seed $\mathcal{K}$
    \Ensure $y_1, \dots, y_{T_{\text{max}}}$
    \State Initialize $t\gets1$, $y_0 \gets \mathcal{K}$
    \While{$t \leq T_{\text{max}}$ and $y_{t-1} \neq v_{\text{term}}$}
        
        \State Input the seed $\mathcal{K}$ and (a  hash of) $y_{t-1}$ into a pseudorandom function $f$ to randomly partition the vocabulary into a green list with size $\lambda |\mathcal{V}|$ and a red list with size $(1-\lambda) |\mathcal{V}|$
        \State Compute the logit vector generated by the original LM $\bm{p}$ and modify it to produce a probability vector
        \[
        q_{t,k} \gets 
        \frac{\exp(l_{t,k} + \delta_{t,k} \cdot \mathbbm{1}\{k \text{ green at }t\})}{\sum_{k^\prime=1}^{|\mathcal{V}|} \exp(l_{t,k^\prime} + \delta_{t,k'} \cdot \mathbbm{1}\{k^\prime \text{ green at }t\})}
        \]
        \State Sample a token
        \[y_t \sim \left(q_{t,1},\ldots, q_{t,|\mathcal{V}|}\right),\] 
        \State $t\gets t+1$
    \EndWhile
    \If{$t<T_{\text{max}}$}
        \State Set all remaining $y_{t+1}, \dots, y_{T_{\text{max}}}$ to be $v_{\text{term}}$
    \EndIf
    \end{algorithmic}
\end{algorithm}
\end{comment}

\begin{algorithm}[ht!]
\caption{Generalized soft watermarking \citep{kirchenbauer2023watermark}}
    \label{alg:kirchen}
    \begin{algorithmic}[1] 
    \Require Original LM $\bm{p}$, prompt $x$,  green list ratio $\gamma$, pseudorandom function $f$, perturbations $\{\delta_{t,k}>0\}_{t\in [T_{\text{max}}],k\in \mathcal{V}}$, random seed $\mathcal{K}$
    \Ensure $y_1, \dots, y_{T_{\text{max}}}$
    \State Initialize $t\gets1$, $y_0\gets $ the last token of $x$
    \While{$t \leq T_{\text{max}}$ and $y_{t-1} \neq v_{\text{term}}$}
        
        \State Input the seed $\mathcal{K}$ and (a  hash of) $y_{t-1}$ into a pseudorandom function $f$ to randomly partition the vocabulary into a green list with size $\gamma |\mathcal{V}|$ and a red list with size $(1-\gamma) |\mathcal{V}|$
        \State\label{line:Original_Logit_Vector} Compute the logit vector generated by the original LM $\bm{p}$ and modify it to produce a probability vector
        \[
        q_{t,k} \gets 
        \frac{\exp(l_{t,k} + \delta_{t,k} \cdot \mathbbm{1}\{k \text{ green at }t\})}{\sum_{k^\prime=1}^{|\mathcal{V}|} \exp(l_{t,k^\prime} + \delta_{t,k'} \cdot \mathbbm{1}\{k^\prime \text{ green at }t\})}
        \]
        \State Sample a token
        \[y_t \sim \left(q_{t,1},\ldots, q_{t,|\mathcal{V}|}\right),\] 
        \State $t\gets t+1$
    \EndWhile
    \If{$t<T_{\text{max}}$}
        \State Set all remaining $y_{t+1}, \dots, y_{T_{\text{max}}}$ to be $v_{\text{term}}$
    \EndIf
    \end{algorithmic}
\end{algorithm}

%In this section, we make our attempt to answer the following question: \textit{what is the minimum (statistical) cost to reach a certain watermarking ability?} Following the red-green list formulation of \citet{kirchenbauer2023watermark}, we show that soft watermarking could serve as a Pareto optimum for minimizing the total model distortion against maximizing the detection ability. Specifically, we show the optimal solution of the program must be in the form of a soft watermarking scheme, where the program aims to minimize the total Kullback-Leibler divergence between the watermarked model and the original model subject to lower bounding the difference of green list probability between those two models. We later derive a new algorithm called \textit{Dual Gradient Ascent} that asymptotically reaches the optimum with $O(1/\sqrt{T})$ regret and constraint violation.

Algorithm \ref{alg:kirchen} states a generalized soft watermarking algorithm of \cite{kirchenbauer2023watermark}. At each step $t$, the algorithm randomly partitions the vocabulary into green words (with ratio $\gamma$) and red words (with ratio $1-\gamma$). The partition is based on the previous word $y_{t-1}$ and some random function $f$. Then, the watermarks are inserted into the generated token sequence by increasing the probabilities of the green words. Equivalently, one adds a positive number $\delta_{t,k}$ to the logit value where $t$ is the index of the token location in the sequence, and $k$ is the word index in the vocabulary. The algorithm is stated in a more general way than \cite{kirchenbauer2023watermark} in that it allows different values of $\delta_{t,k}$ while \cite{kirchenbauer2023watermark} keep all of them constant, i.e., $\delta_{t,k}\equiv \delta$.

\subsection{Constrained Optimization Formulation}

%Soft watermarking applies perturbations to the green/red words' probabilities, aiming to increase the total probability of the green list words. At the step of generating the $t$-th token, a perturbation of $\delta_{t,k}$ will be added to the $k$-th word's logit $l_{t,k}$ so that the overall green list probability can increment by a difference. After the softmax layer, the watermarked model now produces a modified probability $\bm{q}_{t|[t-1]}(x,\bm{y}_{[t-1]})$. For simplicity, in this section, we will abbreviate the $t$-th token's distribution $\bm{p}_{t|[t-1]}(x,\bm{y}_{[t-1]})$ as $\bm{p}_{t}$ and $\bm{q}_{t|[t-1]}(x,\bm{y}_{[t-1]})$ as $\bm{q}_{t}$, where each vector is a random vector. 

The design of the algorithm increases the probability of green words appearing in the generated text, which is vital for the detection. Specifically, the detection algorithm uses the key $\mathcal{K}$ to recover the green list and the red list at each token and thus can identify whether each token in the sequence is green or red. If there are significantly more green tokens than red ones over a (sub)sequence of tokens, it indicates a watermark. In this light, the following quantity is an important indicator of the intensity of the watermarking and is closely related to the detection ability. As noted earlier, we use $\bm{q}$ to denote the watermarked LM and $\bm{p}$ to denote the original LM and we define \textit{the difference of the green word probability} (DG) at the $t$-th token by
%The intuition behind such a soft watermarking is to increase the total probability of the green list tokens so that the watermarked distribution's averaged green token probability is greater than the original model by a margin. For any soft watermarking scheme, we formally define the difference of the green list probability (DG) function of each token as
\begin{equation}\label{eq:define_DG}
\mathrm{DG}_t(\bm{q}_t) \coloneqq \sum_{k = \text{green},  k \in \mathcal{V}} q_{t,k} - \sum_{k = \text{green},  k \in \mathcal{V}} p_{t,k}.
\end{equation}
The quantity DG measures the change in terms of the green word probability from the watermarked LM to the original LM. The larger the value of DG, the easier the generated text can be detected. We note that this quantity is a random variable that relies on the context up to token $t-1$, i.e. the prompt $x$ and $y_{[t-1]}$. However, we note that the value of DG can be fully controlled by the parameters $\delta_{t,k}$'s as we can see from Step \ref{line:Original_Logit_Vector} of Algorithm \ref{alg:kirchen}.

% MAJOR REVISION, add definition for \ll
\begin{comment}
To measure the distortion of the watermarking, we consider the KL divergence defined as follows.

\begin{definition}
\label{def:KL}
For two distributions $\mathcal{Q}$ and $\mathcal{P}$ that $\mathcal{Q} \ll \mathcal{P}$, the Kullback-Leibler (KL) divergence is defined by
\[D_{\mathrm{kl}}(\mathcal{Q} \| \mathcal{P}) \coloneqq \int \log\Big(\frac{\mathrm{d}\mathcal{Q}}{\mathrm{d}\mathcal{P}} \Big)\mathrm{d}\mathcal{Q}.\]
\end{definition}
\end{comment}

%------------------------------------------------
% Distortion paragraph with explicit “≪” notation
%------------------------------------------------
To quantify how far a watermarked distribution deviates from the
original model, we use the \emph{Kullback--Leibler} (KL) divergence defined as follows.
\begin{definition}[KL divergence]\label{def:KL}
Let \(\mathcal{Q}\) and \(\mathcal{P}\) be probability measures on the
same measurable space with \(\mathcal{Q}\!\ll\!\mathcal{P}\).
Their \emph{Kullback--Leibler divergence} is
\[
  D_{\mathrm{kl}}\!\bigl(\mathcal{Q}\,\|\,\mathcal{P}\bigr)
  \;:=\;
  \int_{\Omega}
  \log\!\Bigl(
      \frac{\mathrm{d}\mathcal{Q}}{\mathrm{d}\mathcal{P}}
  \Bigr)\,
  \mathrm{d}\mathcal{Q}.
\]
A value of \(D_{\mathrm{kl}}=0\) indicates \(\mathcal{Q}=\mathcal{P}\);
larger values correspond to greater distortion.
\end{definition}
In the definition, we use the notation 
\(
  \mathcal{Q}\!\ll\!\mathcal{P}
\)
to denote that  the measure \(\mathcal{Q}\) is
\emph{absolutely continuous} with respect to \(\mathcal{P}\):  
 for any measurable set \(A\), if \(\mathcal{P}(A)=0\)
then \(\mathcal{Q}(A)=0\).
By the Radon-Nikodým theorem, this condition guarantees the existence of a density function
\(\frac{\mathrm{d}\mathcal{Q}}{\mathrm{d}\mathcal{P}}\) such that
\(\mathrm{d}\mathcal{Q}
     =\bigl(\frac{\mathrm{d}\mathcal{Q}}{\mathrm{d}\mathcal{P}}\bigr)
      \mathrm{d}\mathcal{P}\).

At each step $t$, the original LM generates the token with the distribution $\bm{p}_t$, and the watermarked LM generates the token with the distribution $\bm{q}_t$. The KL divergence between these two quantifies the extent to which the watermarking algorithm twists the $t$-th token. As the key interest is always measuring the distance against the original LM $\bm{p}_t$, we abbreviate and define
$D_{\mathrm{kl},t}(\bm{q}_t)\coloneqq D_{\mathrm{kl}}(\bm{q}_{t} \| \bm{p}_{t}).$

We note from Algorithm \ref{alg:kirchen} and the discussion above that both the difference of green word probability DG and the KL divergence can be fully controlled by the perturbs $\delta_{t,k}$'s under the generalized soft watermarking scheme. Thus we can rewrite both quantities as a function of $\delta_{t,k}$'s. That is, we denote the DG function and KL divergence by $\mathrm{DG}_t(\delta_{t,1},\dots, \delta_{t,|\mathcal{V}|})$ and
$D_{\mathrm{kl},t}(\delta_{t,1},\dots, \delta_{t,|\mathcal{V}|})$.

Consider the following constrained optimization problem:
%We first examine the question of what the minimum KL divergence is given some DG function lower bound constraint in the setting of the red-green list. To give us the maximum freedom and yield the optimum result, we first allow the watermarking algorithm to freely choose the perturb $\delta_{t,k}$ on each word $k$ at each token $t$. By reloading the notations, we denote Such two functions depend on the parameters $p_{t,k}$ for $k \in [|\mathcal{V}|]$ where we omit the dependence on $\bm{p}_t$ in the subscript $t$. Following those notations, we have the Pareto optimality program that minimizes the distortion cost given a certain detection requirement $\Delta$.
\begin{align}
\mathrm{OPT}(\Delta)\coloneqq \min_{\delta_{t,k}} \quad \frac1T \sum_{t=1}^T D_{\mathrm{kl},t}(\delta_{t,1},\dots,\delta_{t,|\mathcal{V}|})\quad
\text{s.t. } \frac1T \sum_{t=1}^T \mathrm{DG}_t(\delta_{t,1},\dots,\delta_{t,|\mathcal{V}|}) \geq \Delta,
\label{opt:free_t_k}
\end{align}
where the decision variables are $\delta_{k,t}$'s. The right-hand side of the constraint $\Delta>0$ imposes a condition that the average change of the green word probability should exceed a threshold. We will see shortly that the quantity $\Delta$ is closely related to the detection ability. In this light, the optimization problem searches from a minimal twisted LM that achieves a certain level of detection ability. It explicitly trade-offs the two desiderata we mentioned at the beginning of this section. In particular, if we extend the domain  $\delta_{t,k} \in [-\infty, \infty]$, the optimization formulation also covers the hard watermarking scheme \citep{kirchenbauer2023watermark}.

Two closely related optimization problems are 
\begin{align}
\mathrm{OPT}_{\text{static-}k}(\Delta) \coloneqq \min_{\delta_{t}} \quad \frac1T \sum_{t=1}^T D_{\mathrm{kl},t}(\delta_{t})\quad
\text{s.t. } \frac1T \sum_{t=1}^T \mathrm{DG}_t(\delta_{t}) \geq \Delta,
\label{opt:free_t}
\end{align}
where we restrict $\delta_{t,1} = \dots = \delta_{t,|\mathcal{V}|} = \delta_t$ for all $t$ and abbreviate $D_{\mathrm{kl},t}(\delta_{t},\dots,\delta_{t})$ and $\mathrm{DG}_t(\delta_{t},...,\delta_{t})$ as $D_{\mathrm{kl},t}(\delta_{t})$ and $\mathrm{DG}_t(\delta_{t})$, respectively.
\begin{align}
\mathrm{OPT}_{\text{static-}t,k}(\Delta) \coloneqq \min_{\delta} \quad \frac1T \sum_{t=1}^T D_{\mathrm{kl},t}(\delta)\quad
\text{s.t. } \frac1T \sum_{t=1}^T \mathrm{DG}_t(\delta) \geq \Delta.
\label{opt:nofree}
\end{align}
where we restrict $\delta_{t,k}=\delta$ for all $t$ and $k$. 

These optimization problems also correspond to different versions of the algorithm. For example, \eqref{opt:free_t_k} corresponds to Algorithm \ref{alg:kirchen} while \eqref{opt:nofree} corresponds to the original soft watermarking algorithm in \cite{kirchenbauer2023watermark}.

%Although such an optimization program reaches the Pareto optimum in the red-green list formulation, it would be undesirable to optimize against so many decision variables in practice. Thus we define an optimization program with fewer decision variables as

%The latter two programs can be viewed as a relaxation of the first one by setting $\delta_{t,k} = \delta_{t}$ and $\delta_{t} = \delta$, respectively. Since the number of decision variables monotonically decreases, we automatically have
It is easy to see that
$\mathrm{OPT}(\Delta) \leq \mathrm{OPT}_{\text{static }k}(\Delta) \leq \mathrm{OPT}_{\text{static }t,k}(\Delta).$
In the next, we will show in Proposition \ref{prop:free_k} and Proposition \ref{prop:free_t} that $\mathrm{OPT}(\Delta) = \mathrm{OPT}_{\text{static }k}(\Delta) = \mathrm{OPT}_{\text{static }t,k}(\Delta)$
and discuss the implications.

\subsection{Analytical Properties of the Optimization Problem}
\label{subsec:Pareto_Optimality}

In the above optimization problems, the objective function calculates the single-step KL divergence for each step and takes the summation. Indeed, we show that this also corresponds to the KL divergence between the two LMs $\bm{p}$ and $\bm{q}$. 

\begin{definition}
\label{def:conditional_KL}
$\mathcal{Q}$ and $\mathcal{P}$ are two (joint) distributions of $(U, V)$ on the space $\mathcal{U} \times \mathcal{V}$. The conditional KL divergence is defined by $D_{\mathrm{kl}}(\mathcal{Q}_{V|U} \| \mathcal{P}_{V|U} | \mathcal{Q}_{U}) \coloneqq \mathbb{E}_{u \sim \mathcal{Q}_U}\big[D_{\mathrm{kl}}(\mathcal{Q}_{V|U=u} \| \mathcal{P}_{V|U=u})\big]$,
where the expectation is taken with respect to $u \sim \mathcal{Q}_U$.
\end{definition}

This definition gives us a tool to represent the KL divergence by the summation of single-step KL divergences as the following proposition.

\begin{proposition}
\label{prop:KL_decompose}
For an LM $\bm{p}$, prompt $x \in \mathcal{X}$ and the watermarked LM $\bm{q}$ watermarked by algorithm $\mathcal{A}$ and key $\mathcal{K}$, we have $D_{\mathrm{kl}}\Big(\bm{q}(x) \big\| \bm{p}(x)\Big)  
 = \sum_{t=1}^{T_{\text{max}}} D_{\mathrm{kl}}\Big(\bm{q}_{t|[t-1]}(x) \big\| \bm{p}_{t|[t-1]}(x)\Big| \bm{q}_{[t-1]}(x)\Big)$.
\end{proposition}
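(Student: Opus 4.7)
The plan is to invoke the chain rule for relative entropy, exploiting the autoregressive factorization that $\bm{p}$ and $\bm{q}$ share. First I would write the joint probability mass function of each model as the product of its conditional token distributions: for any sequence $\bm{y}_{[T_{\text{max}}]}$,
\[
\bm{q}(\bm{y}_{[T_{\text{max}}]}\mid x) = \prod_{t=1}^{T_{\text{max}}} \bm{q}_{t\mid[t-1]}(y_t\mid x, \bm{y}_{[t-1]}),
\]
and analogously for $\bm{p}$. Taking the logarithm of the ratio of these joint densities collapses the product into a sum of per-token log-ratios.

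Second, I would take the expectation with respect to $\bm{y}_{[T_{\text{max}}]} \sim \bm{q}(x)$ on both sides. By linearity of expectation and the tower property of conditional expectation, the expectation of the $t$-th summand equals
\[
\mathbb{E}_{\bm{y}_{[t-1]} \sim \bm{q}_{[t-1]}(x)}\!\Big[\mathbb{E}_{y_t \sim \bm{q}_{t\mid[t-1]}(x)}\log\tfrac{\bm{q}_{t\mid[t-1]}(y_t\mid x,\bm{y}_{[t-1]})}{\bm{p}_{t\mid[t-1]}(y_t\mid x,\bm{y}_{[t-1]})}\Big],
\]
where the terms involving $y_{t+1},\dots,y_{T_{\text{max}}}$ integrate out because each factor in the log-ratio depends only on the history up to $t$. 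The inner expectation is a single-step KL divergence and the outer expectation against $\bm{q}_{[t-1]}(x)$ turns it into the conditional KL divergence of Definition \ref{def:conditional_KL}. Summing over $t$ yields the claimed identity.

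The only subtlety I anticipate concerns the absorbing terminal symbol $v_{\text{term}}$ and the requirement $\bm{q} \ll \bm{p}$. Once $v_{\text{term}}$ has been produced, both models emit $v_{\text{term}}$ deterministically, so their per-token conditionals coincide and contribute zero to the KL sum; likewise the generalized soft watermarking in Algorithm \ref{alg:kirchen} constructs each $q_{t,k}$ as a positive softmax rescaling of $\exp(l_{t,k})$, so $q_{t,k}>0$ whenever $p_{t,k}>0$, ensuring absolute continuity both jointly and token-wise. There is no real obstacle here: the content is the standard chain rule for relative entropy, and the work lies in bookkeeping the conditioning so that it matches Definition \ref{def:conditional_KL} exactly.
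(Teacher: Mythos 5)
Your proposal is correct and is essentially the same argument as the paper's: both are the chain rule for relative entropy, with the joint log-likelihood ratio split token-wise and expectations then reorganized via the tower property into a sum of conditional KL divergences matching Definition~\ref{def:conditional_KL}. The only cosmetic difference is that the paper first proves the two-variable decomposition $D_{\mathrm{kl}}(\mathcal{Q}\|\mathcal{P}) = D_{\mathrm{kl}}(\mathcal{Q}_U\|\mathcal{P}_U) + D_{\mathrm{kl}}(\mathcal{Q}_{V|U}\|\mathcal{P}_{V|U}|\mathcal{Q}_U)$ and iterates $T_{\text{max}}-1$ times, whereas you unroll the full autoregressive factorization in one step; your remark on the terminal symbol and absolute continuity is a harmless bonus the paper leaves implicit.
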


The left-hand side treats the two LMs as distributions over the space of sequences of $T_{\max}$ tokens and measures their KL divergence. The right-hand side is decomposed into a summation where each term is the single-step conditional KL divergence. In the context of watermarking, the left-hand side captures the total amount of distortion between the original LM and the watermarked LM. And the proposition states that this total distortion is equal to the summation of token-wise distortion at each token/time step. Thus the optimization objectives are related to the proposition in the following way
\begin{align*}
   \mathbb{E}\left[\sum_{t=1}^{T_{max}} D_{\mathrm{kl},t}(\delta_{t,1},\dots,\delta_{t,|\mathcal{V}|})\right]=  \sum_{t=1}^{T_{max}} D_{\mathrm{kl}}\Big(\bm{q}_{t|[t-1]}(x) \big\| \bm{p}_{t|[t-1]}(x)\big| \bm{q}_{[t-1]}(x)\Big) = D_{\mathrm{kl}}\Big(\bm{q}(x) \big\| \bm{p}(x)\Big).
\end{align*}
To see this, the objective function is a realization of the conditional KL divergence which replaces the expectation with respect to $\bm{q}_{[t-1]}(x)$ for the first $t-1$ tokens with the realized sequence. In other words, the objective function for the optimization problems can be seen as an unbiased estimator of the KL divergence between the original LM and the watermarked LM.

%An ideal distortion measurement should not only focus on the token-wise distortion of a watermarked model but also the paragraph-wise distortion as the latter captures the overall distortion on a higher semantic level. A major benefit of KL divergence is that it can naturally trace the overall deviation in the \textit{joint} KL divergence sense by summing up (the expectations of) the \textit{token} KL divergence. In other words, KL divergence naturally shows the idea of \textit{expressing total distortion as the sum of local distortions} in the following proposition.

We remark that this decomposition relationship does not hold generally for other divergence measures between two distributions. Apart from this theoretical structure, the optimization problem's objective function also has a nice analytical form: the single-step KL divergence can be expressed in a closed form in terms of $\delta_{k,t}$'s. This enables us to derive the following results on the optimal solution.

%Despite KL divergence's theoretical convenience, it is also a practical measurement. Most popular watermarking algorithms are implemented in a post hoc way: it keeps all the previous layers before the logit layer and only twists $\bm{p}_t$ to get $\bm{q}_t$. This way, the KL divergence of the twisted distribution over the original one can be efficiently computed before sampling out the next token. Then $D_{\mathrm{kl}}(\bm{q}(x) \| \bm{p}(x))$ (rather than swapping the positions) can be naturally evaluated: any token-wise KL-divergence sequence generated by the watermarked model induces an unbiased sample of the joint KL divergence.

%We will prove in this subsection that both directions of freedom in token $t$ and vocabulary $k$ are unnecessary and the Pareto optimality can indeed be reached via optimization program \eqref{opt:nofree} with one global decision variable $\delta$. We first reduce the number of decision variables in $k \in [|\mathcal{V}|]$ by the following proposition.
\begin{proposition}
\label{prop:free_k}
Suppose the problem \eqref{opt:free_t_k} is feasible. The optimal solution satisfies
$
\delta_{t,1}^* = \cdots = \delta_{t,|\mathcal{V}|}^*$ for every $t \in [T].$ 
In other words, the optimal solution of \eqref{opt:free_t_k} shares the same form as that of \eqref{opt:free_t}.
\end{proposition}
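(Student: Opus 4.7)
The plan is to reduce the claim to a pointwise statement across $t$, then invoke the log-sum inequality. Since both the objective $\frac{1}{T}\sum_t D_{\mathrm{kl},t}$ and the constraint $\frac{1}{T}\sum_t \mathrm{DG}_t \geq \Delta$ decompose additively across $t$, I would argue as follows: if for each $t$ we can replace the slice $(\delta_{t,1},\ldots,\delta_{t,|\mathcal{V}|})$ by another choice that keeps $\mathrm{DG}_t$ fixed while weakly decreasing $D_{\mathrm{kl},t}$, the global constraint is preserved and the objective weakly decreases. So it suffices to establish the following single-step claim: for the realized green list $G_t$ and a prescribed value of $\mathrm{DG}_t$, the minimum of $D_{\mathrm{kl},t}$ is attained by a slice with $\delta_{t,k}$ constant in $k$.

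For this per-step problem, I would first pin down the red portion of $\bm{q}_t$. Write $g := \sum_{k \in G_t} p_{t,k}$ and $g' := g + \mathrm{DG}_t$. Step 4 of Algorithm~\ref{alg:kirchen} perturbs only green logits, so $q_{t,k}/p_{t,k} = 1/Z_t$ for every $k \notin G_t$. Combined with $\sum_{k \notin G_t} q_{t,k} = 1-g'$, this forces $Z_t = (1-g)/(1-g')$, which fully determines the red-word probabilities and contributes the fixed amount $(1-g')\log\frac{1-g'}{1-g}$ to $D_{\mathrm{kl},t}(\bm{q}_t)$. For the green portion I would then invoke the log-sum inequality under the constraint $\sum_{k \in G_t} q_{t,k} = g'$:
\[
\sum_{k \in G_t} q_{t,k} \log\frac{q_{t,k}}{p_{t,k}} \;\geq\; g' \log\frac{g'}{g},
\]
with equality if and only if $q_{t,k}/p_{t,k}$ is constant on $G_t$. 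Since $q_{t,k}/p_{t,k} \propto e^{\delta_{t,k}}$ on $G_t$, this equality condition is equivalent to $\delta_{t,k}$ being constant in $k$ across the green list.

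Combining the two contributions yields the lower bound $D_{\mathrm{kl},t}(\bm{q}_t) \geq g'\log\frac{g'}{g} + (1-g')\log\frac{1-g'}{1-g}$, attained only by a uniform green slice. Because the values $\delta_{t,k}$ for $k \notin G_t$ do not affect $\bm{q}_t$ at all, I may freely set them equal to the common green value to obtain $\delta^*_{t,1}=\cdots=\delta^*_{t,|\mathcal{V}|}$, and feasibility of \eqref{opt:free_t_k} guarantees that the required positive $\delta_t$ lies in the admissible range. The main obstacle I anticipate is conceptual rather than technical: the slice-replacement argument must be carried out at the level of realized $(y_{[t-1]}, G_t)$ pairs, so the resulting uniform $\delta_t$ is a context-dependent quantity; once this is granted, the log-sum inequality (equivalently, Jensen applied to the convex function $x \log x$) is the only analytic ingredient needed.
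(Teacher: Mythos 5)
Your proof is correct, and it follows the paper's overall skeleton---reduce to a single-step minimization via the slice-replacement argument---but it proves the single-step lemma by a genuinely different and more elementary route. The paper (its Lemma~\ref{lemma:one_step_k}) reparameterizes in terms of the ratios $r_k = q_{t,k}/p_{t,k}$, writes down the Lagrangian of the equivalent convex program, solves the stationarity conditions to get $r_k^*$ constant on each color, and then computes the optimal dual variables and checks strong duality by hand. You instead observe directly that (i) the algorithm never touches red logits, so the red portion of $\bm{q}_t$ is completely pinned once the green mass $g'$ is fixed and contributes a constant $(1-g')\log\tfrac{1-g'}{1-g}$ to the divergence, and (ii) the green portion, subject to total mass $g'$, is minimized by the log-sum inequality with equality precisely when the green ratios are constant. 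This avoids duality entirely, makes the lower bound $g'\log\frac{g'}{g}+(1-g')\log\frac{1-g'}{1-g}$ (which matches the paper's $\phi(\lambda^*,\mu^*)$) transparent, and also cleanly isolates why the red $\delta_{t,k}$'s are free: they never enter the softmax formula in Step~4 of Algorithm~\ref{alg:kirchen}, so the statement ``$\delta_{t,1}^*=\cdots=\delta_{t,|\mathcal{V}|}^*$'' is genuinely a constraint only on the green coordinates, and you may set the red ones however you like. The one thing worth spelling out when you finalize is the replacement logic at the optimum: if some optimal $\delta^*$ had a non-uniform green slice at step $t_0$, replacing it with the uniform slice at matched $\mathrm{DG}_{t_0}$ strictly decreases $D_{\mathrm{kl},t_0}$ (by the strict equality case of log-sum), contradicting optimality; the weak-decrease phrasing alone leaves a small gap to uniqueness. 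Otherwise the argument is complete and, I would say, cleaner than the paper's.
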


%The following proposition states that the number of decision variables in $t \in [T]$ also does not affect the final optimum.
\begin{proposition}
\label{prop:free_t}
Suppose the problem \eqref{opt:free_t} is feasible. The optimal solution of \eqref{opt:free_t} satisfies
$\delta_{1}^* = \cdots = \delta_{T}^*.$
In other words, the optimal solution of \eqref{opt:free_t} shares the same form as the program \eqref{opt:nofree}.
\end{proposition}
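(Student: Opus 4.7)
The plan is to reduce the per-token problem to a one-dimensional convex optimization in the watermarked green mass and then show that KKT stationarity forces the optimal $\delta_t^\star$ to equal a single Lagrange multiplier, independent of $t$. First I would observe that under the restriction $\delta_{t,k}=\delta_t$, the update rule in Algorithm~\ref{alg:kirchen} rescales the green and red masses but leaves the conditional distribution within each list unchanged. Concretely, with $G_t:=\sum_{k\text{ green}}p_{t,k}$ and $s_t:=\sum_{k\text{ green}}q_{t,k}$, we have $s_t=\frac{G_te^{\delta_t}}{G_te^{\delta_t}+1-G_t}$; the chain rule for KL divergence then collapses $D_{\mathrm{kl},t}$ to the binary KL $\phi(s_t,G_t):=s_t\log\frac{s_t}{G_t}+(1-s_t)\log\frac{1-s_t}{1-G_t}$, while $\mathrm{DG}_t(\delta_t)=s_t-G_t$.

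Next I would compute two derivatives. A direct calculation gives $\frac{ds_t}{d\delta_t}=s_t(1-s_t)$, and since $\frac{\partial\phi}{\partial s}(s_t,G_t)=\log\frac{s_t(1-G_t)}{G_t(1-s_t)}=\delta_t$, the chain rule yields
\[
\frac{d\,\mathrm{DG}_t}{d\delta_t}=s_t(1-s_t),\qquad\frac{d\,D_{\mathrm{kl},t}}{d\delta_t}=\delta_t\cdot s_t(1-s_t).
\]
Attaching a multiplier $\mu\geq 0$ to the DG constraint in \eqref{opt:free_t}, stationarity in each $\delta_t$ reads $\delta_t^\star\cdot s_t(1-s_t)=\mu\cdot s_t(1-s_t)$. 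Whenever $G_t\in(0,1)$ we have $s_t\in(0,1)$, the factor $s_t(1-s_t)$ cancels, and we conclude $\delta_t^\star=\mu$, a value that does not depend on $t$. For degenerate steps with $G_t\in\{0,1\}$, both $\mathrm{DG}_t$ and $D_{\mathrm{kl},t}$ vanish identically in $\delta_t$, so $\delta_t^\star$ may be set to $\mu$ without loss.

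Finally I would justify KKT sufficiency by passing to the coordinates $(s_1,\ldots,s_T)\in[0,1]^T$: the objective $\sum_t\phi(s_t,G_t)$ is a sum of strictly convex functions and the DG constraint $\sum_t s_t\geq T\Delta+\sum_t G_t$ is linear, so the program is convex and Slater's condition holds whenever the original problem is strictly feasible. Since the map $\delta_t\mapsto s_t$ is a strictly increasing bijection onto $(0,1)$ for each fixed $G_t\in(0,1)$, the $\delta$- and $s$-programs share the same optimum, and the stationary point identified above is the unique global minimizer.

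The main obstacle is precisely this sufficiency step: the program is genuinely non-convex in the $\delta_t$-parametrization because $\mathrm{DG}_t(\delta_t)$ is sigmoidal in $\delta_t$, so KKT cannot be invoked directly and the $s$-reparametrization is essential. The clean cancellation $(dD_{\mathrm{kl},t}/d\delta_t)/(d\,\mathrm{DG}_t/d\delta_t)=\delta_t$ is the algebraic heart of the statement and is special to the KL choice of distortion, since $\delta_t$ is exactly the natural score $\partial\phi/\partial s$ at the watermarked distribution; were distortion measured by, e.g., total variation or log-perplexity, this ratio would depend on $G_t$ and the constancy of $\delta_t^\star$ would fail.
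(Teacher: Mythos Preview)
Your argument is correct and shares its core with the paper's proof: both compute the first-order stationarity condition of the Lagrangian and find that it forces $\delta_t^\star$ to equal the multiplier, independently of $t$. The paper works directly with the explicit expressions $D_{\mathrm{kl},t}(\delta)=\frac{G_t\delta e^{\delta}}{G_te^{\delta}-G_t+1}-\log(G_te^{\delta}-G_t+1)$ and $\mathrm{DG}_t(\delta)=\frac{G_t(1-G_t)(e^{\delta}-1)}{G_te^{\delta}-G_t+1}$, differentiates in $\nu_t=e^{\delta_t}$, and then establishes global optimality by explicitly computing the dual optimum $\phi(\lambda^*)$ and matching it to a primal feasible value. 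Your route differs in packaging: the observation that $D_{\mathrm{kl},t}$ collapses to the binary KL $\phi(s_t,G_t)$ makes the cancellation $\tfrac{dD_{\mathrm{kl},t}/d\delta_t}{d\,\mathrm{DG}_t/d\delta_t}=\delta_t$ transparent, and the $s$-reparametrization gives a convex program so that KKT sufficiency replaces the paper's direct strong-duality computation. The paper in fact acknowledges this alternative in the remark following its proof (passing to a probability-ratio variable to obtain a convex program and invoking Slater). One minor caveat: you appeal to Slater under \emph{strict} feasibility, whereas the proposition only assumes feasibility; the boundary case $\Delta=\frac{1}{T}\sum_t(1-G_t)$ (where the unique feasible point is $s_t\equiv 1$, i.e.\ $\delta_t^\star=+\infty$ for all $t$) should be noted separately.
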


These two propositions state that under mild feasibility assumptions, the three optimization problems \eqref{opt:free_t_k}, \eqref{opt:free_t}, and \eqref{opt:nofree} share the same optimal solution. From an algorithm viewpoint, it means the generalization of Algorithm \ref{alg:kirchen} over the original algorithm of \cite{kirchenbauer2023watermark} does not bring additional benefits. We make several remarks on the implications for watermarking LLMs: 
\begin{itemize}
\item The optimal value of $\delta$ for the optimization problems above depends on the prompt $x$, the realized sequence $\bm{y}_{[T_{\text{max}}]}$, and the original LM $\bm{p},$ precisely, on the conditional distribution $\bm{p}(\cdot|x)$. For example, if we want to achieve the same level of detection ability (as the constraint of the optimization problems) for two prompts: (i) ``Where is the capital of U.K.?'' and (ii) ``What is your favorite color?'', we should use different $\delta$. For different prompts, i.e., for different $\bm{p}(\cdot|x)$, it corresponds to different optimal solution $\delta^*$ for the optimization problem. Reversely, if we use a fixed $\delta$ for all the generations as the original design in \citep{kirchenbauer2023watermark}, it will result in a different level of detection ability and KL-divergence (from the original LM) for every prompt.
\item The optimization problems above can only be solved in an online fashion as it depends on the roll-out of the sequence. When we implement the watermarking algorithm, it generates the tokens one by one, and at time $t$, we cannot foresee the future terms (from $t+1$ to $T$) in the optimization problems, which disallows solving them in an offline manner. 
\item The proposition above states the structure of the optimal solution of the optimization problems. If we view minimizing model distortion and maximizing detection ability as a multi-objective optimization problem, one subtle point is that the fixed-$\delta$ version of Algorithm \ref{alg:kirchen} \citep{kirchenbauer2023watermark} does not result in a \textit{Pareto optimum}. This is because although the optimal $\delta^*$ remains fixed over time, it depends on the prompt, the roll-out of the sequence, the LM, and also the green word ratio $\gamma$. Even if we forget about the aspects of the prompt and the LM, it still requires careful coordination between the values of $\gamma$ and $\delta$ to make the resultant generation on the Pareto optimal curve. Our numerical experiments also verify this point. 
\end{itemize}

These discussions motivate our development of an online algorithm for the problem which enables a uniform level of detection ability or 
KL-divergence across all the generations. As we will see later, it also brings other benefits to the problem.

\subsection{Online Algorithm with Adaptive $\delta$}

Motivated by the discussions above, we develop an algorithm that not only ensures Pareto optimality (of model distortion and detection ability) but also achieves a pre-specified detection ability ($\Delta$ in \eqref{opt:nofree}) for every generated token sequence. This requires (i) an adaptive choice of $\delta$ based on the prompt, the LM, and $\gamma$ and (ii) an online implementation that learns the optimal $\delta$ on the fly.

Algorithm \ref{alg:DualGA} gives our online algorithm for adaptive choosing $\delta$. For the optimization problem \eqref{opt:nofree}, the optimal solution $\delta^*$ depends on (i) the prompt $x$, (ii) the LM $\bm{p}$, and (iii) the realized generation of the tokens. Algorithm \ref{alg:DualGA} uses a different $\delta_t$ over time, and ideally, it aims to have $\delta_t\rightarrow \delta^*$ quickly. In this way, it learns the optimal $\delta^*$ online and adaptively (to the prompt). The idea of Algorithm \ref{alg:DualGA} is to perform an online gradient ascent algorithm on the Lagrangian dual function for the dual variable $\lambda_t$, and uses $\lambda_t$ to approximate the optimal dual variable $\lambda^*.$ To see this, we first introduce  the Lagrangian of the optimization problem \eqref{opt:nofree} as follows where the dual variable is denoted by $\lambda \geq 0$,
\begin{align}
L(\delta, \lambda) \coloneqq & \frac1T \sum_{t=1}^T D_{\mathrm{kl},t}(\delta)- \lambda \cdot\left(\frac1T \sum_{t=1}^T \mathrm{DG}_t(\delta) - \Delta\right) \nonumber \\ 
= & \frac1T \sum_{t=1}^T D_{\mathrm{kl},t}(\delta)- \frac1T \sum_{t=1}^T \lambda \cdot \left( \mathrm{DG}_t(\delta) - \Delta\right).
\label{eq:Lagrangian}
\end{align}
Denote the corresponding primal function as
$
f(\delta) \coloneqq \sup_{\lambda \geq 0} \ L(\delta, \lambda),
$
and the dual function as
$
g(\lambda) \coloneqq \inf_{\delta} \ L(\delta, \lambda).
$
The following lemma states some key properties of the optimization problem \eqref{opt:nofree} (which also applies to the other two optimization problems \eqref{opt:free_t_k} and \eqref{opt:free_t}).

\begin{lemma}
\label{lemma:Lagrangian_dual}
\begin{itemize}
\item[(a)] The infimum that defines $g(\lambda)$ can always be achieved by setting $\delta=\lambda$. That is $g(\lambda) = L(\lambda, \lambda)$.
\item[(b)] The Lagrangian dual $g(\lambda)$  can be decomposed token-wise: if we define $L_t(\delta, \lambda) \coloneqq D_{\mathrm{kl},t}(\delta)- \lambda \cdot ( \mathrm{DG}_t(\delta) - \Delta)$ and henceforth $f_t$ and $g_t$ accordingly, then (i) Part (a) also holds for each $g_t(\lambda)$, and (ii) we have
$
g(\lambda) = \frac1T \sum_{t=1}^T g_t(\lambda)
$
and each $g_t$ is concave with $\frac{\mathrm{d} g_t}{\mathrm{d} \lambda} = \Delta - \mathrm{DG}_t(\lambda)$.
\item[(c)] Suppose the primal problem \eqref{opt:nofree} is feasible. Then the strong duality holds
\begin{equation}
\inf_{\delta} \ f(\delta) = \sup_{\lambda \geq 0} \ g(\lambda).
\label{eq:strong_duality}
\end{equation}
with $\delta^* = \lambda^*$, where $\lambda^*$ is the optimal choice of $\lambda$ for maximizing the Lagrangian dual $g(\lambda)$. 
\end{itemize}
\end{lemma}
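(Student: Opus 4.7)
The plan is to derive closed-form expressions for $D_{\mathrm{kl},t}(\delta)$ and $\mathrm{DG}_t(\delta)$ directly from the exponential-tilting step of Algorithm~\ref{alg:kirchen}, and then read off all three parts from those formulas. Writing $Q^G_t(\delta) := \sum_{k \in G_t} q_{t,k}$ for the watermarked green-list probability and $P^G_t := Q^G_t(0)$, a short calculation gives $\mathrm{DG}_t(\delta) = Q^G_t(\delta) - P^G_t$, $D_{\mathrm{kl},t}(\delta) = \delta\, Q^G_t(\delta) - \log(e^\delta P^G_t + 1 - P^G_t)$, and the sigmoid-style identity $dQ^G_t/d\delta = Q^G_t(1-Q^G_t) > 0$. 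Differentiating the KL expression and observing a clean cancellation yields $dD_{\mathrm{kl},t}/d\delta = \delta \cdot dQ^G_t/d\delta$. Plugging this into the token-wise Lagrangian gives the key identity
\[
\frac{\partial L_t}{\partial \delta}(\delta, \lambda) = (\delta - \lambda) \cdot \frac{dQ^G_t}{d\delta},
\]
whose sign coincides with that of $\delta - \lambda$. This forces each $L_t(\cdot, \lambda)$ to be strictly unimodal with a unique global minimum at $\delta = \lambda$; since all these minima coincide across $t$, the average $L(\cdot,\lambda) = \frac{1}{T}\sum_t L_t(\cdot,\lambda)$ is also globally minimized at $\delta = \lambda$, proving (a).

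Part (b) then follows with little extra work. The token-wise decomposition $g(\lambda) = \frac{1}{T}\sum_t g_t(\lambda)$ is immediate once a single $\delta = \lambda$ simultaneously minimizes every $L_t$, so that $g_t(\lambda) = L_t(\lambda,\lambda)$. Concavity of $g_t$ follows from its being the pointwise infimum of the affine-in-$\lambda$ family $\{\lambda \mapsto L_t(\delta,\lambda)\}_\delta$, and the derivative formula $dg_t/d\lambda = \Delta - \mathrm{DG}_t(\lambda)$ drops out either by the envelope theorem at the minimizer $\delta = \lambda$, or by differentiating $g_t(\lambda) = L_t(\lambda,\lambda)$ directly and cancelling the extra terms using the first-order condition from~(a).

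For (c) I would exploit the monotone one-dimensional structure of the primal. Both $\frac{1}{T}\sum_t \mathrm{DG}_t(\delta)$ and, on $\delta \geq 0$, $\frac{1}{T}\sum_t D_{\mathrm{kl},t}(\delta)$ are continuous and non-decreasing in $\delta$ (the latter because $dD_{\mathrm{kl},t}/d\delta \geq 0$ there), and $\mathrm{DG}_t$ is strictly increasing. Under feasibility with $\Delta > 0$, the feasible set is therefore a half-line $[\delta^*, \infty)$ with active constraint $\frac{1}{T}\sum_t \mathrm{DG}_t(\delta^*) = \Delta$, and the primal optimum is $\delta^*$. Setting $\lambda^* := \delta^*$, part~(b) yields $g'(\lambda^*) = \Delta - \frac{1}{T}\sum_t \mathrm{DG}_t(\lambda^*) = 0$, so by concavity $\lambda^*$ maximizes the dual. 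Then
\[
g(\lambda^*) = L(\lambda^*, \lambda^*) = \tfrac{1}{T}\sum_t D_{\mathrm{kl},t}(\delta^*) - \lambda^*\Bigl(\tfrac{1}{T}\sum_t \mathrm{DG}_t(\delta^*) - \Delta\Bigr) = \tfrac{1}{T}\sum_t D_{\mathrm{kl},t}(\delta^*),
\]
which matches the primal value, yielding \eqref{eq:strong_duality} with $\delta^* = \lambda^*$.

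The main obstacle is justifying that $\delta = \lambda$ is a \emph{global} minimizer in part~(a): under the $\delta$-parameterization, $D_{\mathrm{kl},t}(\delta)$ is not convex in general, so a purely second-order argument fails. The resolution is the sign-of-derivative argument above, which works only because of the fortunate cancellation $dD_{\mathrm{kl},t}/d\delta = \delta \cdot dQ^G_t/d\delta$ peculiar to exponential tilting. Once this strict unimodality is in hand, the remaining ingredients---concavity of $g$, the token-wise decomposition, and the active-constraint identification for strong duality---follow from standard one-dimensional convex-analysis reasoning.
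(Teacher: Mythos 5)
Your proposal is correct and follows essentially the same route as the paper: you show that $\partial L_t/\partial\delta$ has the sign of $\delta-\lambda$ (via the cancellation $\mathrm{d}D_{\mathrm{kl},t}/\mathrm{d}\delta = \delta\cdot\mathrm{d}Q^G_t/\mathrm{d}\delta$), which is exactly what the paper establishes after the substitution $\nu=e^\delta$, with the remaining parts (token-wise decomposition, concavity from the infimum of affine-in-$\lambda$ functions, and identifying $\lambda^*=\delta^*$ at the binding constraint) matching the paper's logic. The only cosmetic difference is your direct $\delta$-parameterization versus the paper's change of variables to $\nu=e^\delta$.
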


Lemma \ref{lemma:Lagrangian_dual} justifies the design of Algorithm \ref{alg:DualGA}. The choice of $\delta_t \leftarrow \lambda_t$ in Step 4 of the algorithm serves two purposes: First, if we have $\lambda_t$ converging to $\lambda^*$, then we will also have a converging $\delta_t$ to $\delta^*$ according to Part (c) of Lemma \ref{lemma:Lagrangian_dual}. Second, only the choice of $\delta_t=\lambda_t$ can result in that $\Delta - \mathrm{DG}_t(\delta_t)$ is the gradient of $g_t$ and thus it ensures that the algorithm's update of $\lambda_{t+1}$ performs online gradient ascent in the dual space.  The online gradient ascent procedure will converge because of the concavity of each $g_t(\lambda)$, and this exactly dictates how we update the dual variable in Step 7 and Step 8 of Algorithm \ref{alg:DualGA}, where the hyper-parameter $\eta$ is the step size of the gradient-based updates.

The existing practice of the watermarking algorithm of \cite{kirchenbauer2023watermark} usually involves a hyper-parameter tuning procedure that finds a proper combination of $\gamma$ and $\delta.$ Yet we note that the procedure is conducted on a population level -- once $\gamma$ and $\delta$ are selected, they are applied to all the generations (for all the prompts). And we will see in the numerical experiment that this population-level choice prevents the chosen combination from staying on the Pareto optimality curve (of model distortion and detection ability). In contrast, for our algorithm, we choose $\delta$ adaptively for each prompt $x$. The choice is also adaptive to the LM $\bm{p}$ and the realization of the generation.

Compared with the standard soft red list watermarking algorithm of \citet{kirchenbauer2023watermark}, Algorithm \ref{alg:DualGA} adds only a modest computational overhead, introduced by the adaptive selection of $\delta_t$ at each time step $t$. As detailed in Steps 7 and 8 of Algorithm \ref{alg:DualGA}, the extra cost is evaluating $\mathrm{DG}_t(\delta_t)$, which requires access to the conditional next-token probability vector both before and after the green probability boost. Concretely, given the same logits from the model forward pass, Algorithm \ref{alg:kirchen} forms only the boosted distribution (applying one softmax / normalization over the vocabulary $\mathcal{V}$), whereas Algorithm \ref{alg:DualGA} also needs the original pre‑boost distribution to compute $\mathrm{DG}_t(\delta_t)$, adding at most one additional normalization over $\mathcal{V}$.  Hence, the watermarking step in Algorithm \ref{alg:DualGA} performs no more than twice the vector operations of Algorithm \ref{alg:kirchen} (two normalizations instead of one), and the overall watermarking complexity remains  $O(|\mathcal{V}|\cdot T_{\max})$ (with $|\mathcal{V}|$ denoting the vocabulary size and $T_{\max}$ the maximum sentence length). In practice, the difference in the runtime of Algorithm \ref{alg:DualGA} and Algorithm \ref{alg:kirchen} is almost negligible, primarily due to that the common step of sampling the next token is typically performed on the CPU, which dominates the runtime for both algorithms. Meanwhile, the watermarking step just involves logit manipulations, which are simple vector operations executed on the GPU and especially optimized for batch processing. As a result, the additional computational time is typically negligible.  The empirical computational costs of DualGA, along with those of the other benchmark algorithms for comparison, are provided in Appendix \ref{subapx:computational_table}.

%indeed results in \textit{a Pareto optimum}. But what \citet{kirchenbauer2023watermark} does not solve is \textit{which Pareto optimum} one will reach by implementing the soft watermarking algorithm, which is undesirable in practice because of the unpredictable watermarking consequences. In this subsection, we will propose an algorithm (Algorithm \ref{alg:DualGA}) based on dual gradient ascent \citep{beck2017first} \hz{add a textbook} \shang{Cite some dual GD papers here} that converges to \textit{some} Pareto optimum but also \textit{any} Pareto optimum determined by the watermarking ability. In short, we learn the optimal Lagrangian multiplier of \eqref{opt:free_t} in an online learning fashion by conducting online gradient ascent on the dual function.\hz{Should we say the above offline formulation can't be applied in practice since we can't forecast the output, as the motivation of the online algorithm?}

%\shang{Show some intuitions and explanations here, e.g. why $\delta_t$ is set to $\lambda_t$ (maximize the Lagrangian, briefly explain the Lagrangian dual problem since the audience is unfamiliar with optimization); what the multiplier shows (the marginal cost the model pays for increasing the DG); differences with previous algorithm (always reach the desired DG)}

\begin{algorithm}[ht!]
\caption{Dual Gradient Ascent for Soft Watermarking}
    \label{alg:DualGA}
    \begin{algorithmic}[1] 
    \Require Original LM $\bm{p}$, prompt $x$, pseudorandom function $f_{\gamma}$ to determine the green list with ratio $\gamma$, random seed $\mathcal{K}$, DG constraint $\Delta$, step size $\eta$, initial dual variable $\lambda_1$
    \Ensure $y_1, \dots, y_{T_{\text{max}}}$
    \State Initialize the dual variable $\lambda_1$, $t\gets1$, $y_0\gets $ the last token of $x$
    \While{$t \leq T_{\text{max}}$ and $y_{t-1} \neq v_{\text{term}}$}
        \State Input the seed $\mathcal{K}$ and (a  hash of) $y_{t-1}$ into a pseudorandom function $f$ to randomly partition the vocabulary into a green list with size $\gamma |\mathcal{V}|$ and a red list with size $(1-\gamma) |\mathcal{V}|$. Observe the green list probability
        \[G_t \gets \sum_{k = \text{green},  k \in \mathcal{V}} p_{t,k}\]
        \State Set ${\delta}_t$ according to the Lagrangian dual
        \[{\delta}_t \gets \lambda_t\]
        \State Compute the logit vector generated by the original LM $\bm{p}$ and modify it to produce a probability vector
        \[
        q_{t,k} \gets 
        \frac{\exp(l_{t,k} + {\delta}_t \cdot \mathbbm{1}\{k \text{ green at }t\})}{\sum_{k^\prime=1}^{|\mathcal{V}|} \exp(l_{t,k^\prime} + {\delta}_t \cdot \mathbbm{1}\{k^\prime \text{ green at }t\})}
        \]
        \State Sample out a token
        \[y_t \sim \bm{q}_{t} = \left(q_{t,1},...,q_{t,|\mathcal{V}|}\right)\]
        \State Compute the online gradient of the dual function
        \[\mathrm{gd}_t \gets \Delta - \mathrm{DG}_t(\delta_t)\]
        \State Update the dual variable via online gradient ascent
        \[\lambda_{t+1} \gets \lambda_{t} + \eta \cdot \mathrm{gd}_t\]
        \State $t\gets t+1$
    \EndWhile
    \If{$t<T_{\text{max}}$}
        \State Set all remaining $y_{t+1}, \dots, y_{T_{\text{max}}}$ to be $v_{\text{term}}$
    \EndIf
    \end{algorithmic}
\end{algorithm}

\subsection{Algorithm Analysis}

Now we provide some theoretical analysis of the algorithm. The techniques are not new, and we simply aim to generate more intuitions for this watermarking algorithm. 

\begin{assumption}
\label{assum:iid}
We assume that 

\begin{itemize}
\item[(a)] The green word probabilities under the original LM $G_t \coloneqq \sum_{k = \text{green},  k \in \mathcal{V}} p_{t,k}\in [0,1]$ are i.i.d. random variables for all $t \in [T]$.
\item[(b)] The optimal dual variable $\lambda^*$ is bounded with a known upper bound $M$:
$\lambda^* \in [0, M].$
Furthermore, we assume the initial $\lambda_1$ and the final $\lambda_{T+1}$ are within this range.
\end{itemize}
\end{assumption}

For the assumptions, Part (a) states that the green word probabilities are i.i.d., and it is much milder than to enforce all the conditional probabilities $\bm{p}_{t|[t-1]}(\cdot)$ to be i.i.d. (which is quite unrealistic). We will also relax this assumption shortly. Part (b) imposes a mild condition on the boundedness of the dual optimal solution.

%\paragraph{Justifying i.i.d. assumption}\shang{Left blank for now. Note that some empirical evidence in \citet{fernandez2023three} doubts the i.i.d. assumption.}

\textbf{Remark. } We note that all the three optimization problems \eqref{opt:free_t_k}, \eqref{opt:free_t}, and \eqref{opt:nofree} are feasible if $G_t \in (0,1)$ for all $t \in [T]$ and $\sum_{t=1}^T (1-G_t) \geq T\Delta$. Under Assumption \ref{assum:iid}, both requirements are satisfied with probability at least $1-\varepsilon$ if $\Delta < 1 - \gamma - \tilde{O}(1/\sqrt{T})$ where $\tilde{O}$ hides the poly-logarithmic factors. With a moderate choice of $\Delta$, we can always see the optimization problems as feasible.

\begin{theorem}
\label{thm:dualGA_performance_iid}
Under Assumption \ref{assum:iid}, with the step size $\eta = \Theta(1/\sqrt{T})$, Algorithm \ref{alg:DualGA} satisfies
\[
\frac{1}{T} \sum_{t=1}^T \mathbb{E}[D_{\mathrm{kl}, t}(\delta_t)] \leq \mathbb{E}\big[\mathrm{OPT}(\Delta)\big] + O(1/\sqrt{T})
\]
and
\[
\frac{1}{T} \sum_{t=1}^T \mathbb{E}[\mathrm{DG}_t(\delta_t)] \geq \Delta - O(1/\sqrt{T})\]
where the expectations are taken with respect to the randomness of the algorithm and the language model jointly.
\end{theorem}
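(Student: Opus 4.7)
The plan is to interpret Algorithm \ref{alg:DualGA} as online gradient ascent (OGA) on the sequence of concave dual functions $g_t$ and convert OGA-style regret into both primal guarantees via a stochastic primal-dual argument under Assumption \ref{assum:iid}. By Lemma \ref{lemma:Lagrangian_dual}(b), each $g_t$ is concave with supergradient $g_t'(\lambda) = \Delta - \mathrm{DG}_t(\lambda)$, so the updates in Steps 4, 7, and 8 implement exactly $\lambda_{t+1} = \lambda_t + \eta \, g_t'(\lambda_t)$ with $\delta_t = \lambda_t$. A direct calculation from Step 4 of Algorithm \ref{alg:kirchen} shows that $\mathrm{DG}_t(\delta)$ and $D_{\mathrm{kl},t}(\delta)$ depend on the language model only through the scalar $G_t$, so Assumption \ref{assum:iid}(a) makes them i.i.d.\ as $\delta$-functions, and $\lambda_t$ (being a measurable function of $G_1,\dots,G_{t-1}$ alone) is independent of $G_t$.

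The DG bound follows by simple telescoping of the dual update:
\[
\lambda_{T+1} - \lambda_1 = \eta \sum_{t=1}^T \bigl(\Delta - \mathrm{DG}_t(\delta_t)\bigr).
\]
Assumption \ref{assum:iid}(b) forces the left side into $[-M, M]$, so dividing by $\eta T = \Theta(\sqrt{T})$ and taking expectation yields $\frac{1}{T}\sum_t \mathbb{E}[\mathrm{DG}_t(\delta_t)] \geq \Delta - O(1/\sqrt{T})$.

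For the KL bound I would introduce the expected program $\overline{\mathrm{OPT}}(\Delta) \coloneqq \min_\delta \mathbb{E}[D_{\mathrm{kl},t}(\delta)]$ subject to $\mathbb{E}[\mathrm{DG}_t(\delta)] \geq \Delta$, with primal/dual optima $\bar{\delta}^*, \bar{\lambda}^*$. A standard $\max$--$\mathbb{E}$ swap (Jensen) gives $\overline{\mathrm{OPT}}(\Delta) \leq \mathbb{E}[\mathrm{OPT}(\Delta)]$, so it suffices to bound the algorithm's expected primal value by $T \overline{\mathrm{OPT}}(\Delta) + O(\sqrt{T})$. Since $\delta_t = \lambda_t$ minimizes $L_t(\cdot,\lambda_t)$, comparing to the deterministic competitor $\bar{\delta}^*$ gives the pointwise inequality
\[
D_{\mathrm{kl},t}(\delta_t) \leq D_{\mathrm{kl},t}(\bar{\delta}^*) + \lambda_t\bigl[\mathrm{DG}_t(\delta_t) - \mathrm{DG}_t(\bar{\delta}^*)\bigr].
\]
Summing, taking expectation, and adding and subtracting $\Delta$ decomposes the excess into $T\overline{\mathrm{OPT}}(\Delta)$ plus two cross terms. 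The first, $\mathbb{E}\sum_t \lambda_t[\Delta - \mathrm{DG}_t(\bar{\delta}^*)]$, factors via independence of $\lambda_t$ from $G_t$ into $\sum_t \mathbb{E}[\lambda_t]\,(\Delta - \overline{\mathrm{DG}}(\bar{\delta}^*)) \leq 0$, using feasibility of $\bar{\delta}^*$. The second, $\mathbb{E}\sum_t \lambda_t[\mathrm{DG}_t(\delta_t) - \Delta]$, equals $-\eta^{-1}\mathbb{E}\sum_t \lambda_t(\lambda_{t+1}-\lambda_t)$, and the completed-square identity $\lambda_t(\lambda_{t+1}-\lambda_t) = \tfrac{1}{2}[\lambda_{t+1}^2-\lambda_t^2 - (\lambda_{t+1}-\lambda_t)^2]$ telescopes it into $\frac{1}{2\eta}\mathbb{E}[\lambda_1^2 - \lambda_{T+1}^2] + \frac{\eta}{2}\mathbb{E}\sum_t (\Delta-\mathrm{DG}_t(\delta_t))^2 = O(M^2/\eta + \eta T) = O(\sqrt{T})$.

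The main technical obstacle is bounding this second, $\lambda_t$-weighted, constraint-violation sum: the naive telescope used for the DG bound controls $\sum_t (\Delta - \mathrm{DG}_t(\delta_t))$ but not its $\lambda_t$-weighted counterpart, so the completed-square identity on the dual trajectory is essential, and is the reason both the $M^2/\eta$ and $\eta T$ scalings appear (balanced at $\eta = \Theta(1/\sqrt{T})$). A secondary subtlety is justifying the independence of $\lambda_t$ from $G_t$, which rests on the observation above that $\mathrm{DG}_t$ and $D_{\mathrm{kl},t}$ depend on the instance only through the single scalar $G_t$; the degenerate case $\bar{\lambda}^* = 0$ is handled by the same inequality $\Delta \leq \overline{\mathrm{DG}}(\bar{\delta}^*)$ that kills the first cross term.
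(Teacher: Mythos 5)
Your proposal is correct and is, at its core, the paper's own primal-dual argument, just unpacked: where the paper writes $D_{\mathrm{kl},t}(\delta_t) = \phi_t(\lambda_t) + \lambda_t(\mathrm{DG}_t(\delta_t)-\Delta)$, invokes Zinkevich's OGD regret bound as a citation to handle the $\sum_t \lambda_t(\mathrm{DG}_t(\delta_t)-\Delta)$ term, and then uses Jensen plus concavity of $\bar\phi$ to reach $\bar\phi(\bar\lambda^*)$, you expand the same OGD step explicitly via the completed-square telescope and reach $\overline{\mathrm{OPT}}(\Delta)$ through the primal competitor $\bar\delta^*$ rather than the dual one $\bar\lambda^*$. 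The two cross terms you isolate ($\mathbb{E}\sum_t\lambda_t[\Delta - \mathrm{DG}_t(\bar\delta^*)] \leq 0$ by independence and feasibility, and $\mathbb{E}\sum_t\lambda_t[\mathrm{DG}_t(\delta_t) - \Delta] = O(M^2/\eta + \eta T)$ by telescoping) correspond exactly to the paper's $\bar\phi(\bar\lambda^*) + $ OGD-regret decomposition, and the final $\max$--$\mathbb{E}$ swap reproduces the paper's chain $\bar\phi(\bar\lambda^*) = \mathbb{E}[\phi(\bar\lambda^*)] \leq \mathbb{E}[\phi(\lambda^*)] = \mathbb{E}[\mathrm{OPT}(\Delta)]$ (both steps hinge, as yours implicitly does, on strong duality from Lemma \ref{lemma:Lagrangian_dual}). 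The constraint-violation bound via telescoping $\lambda_{T+1} - \lambda_1$ is identical. One small gap shared with the paper: neither argument addresses whether $\lambda_t$ leaves $[0, M]$ at intermediate steps given that Algorithm \ref{alg:DualGA} has no projection, which matters for the $\mathbb{E}[\lambda_t] \geq 0$ sign you use on the first cross term and for the OGD bound the paper cites; Assumption \ref{assum:iid}(b) only constrains $\lambda_1$ and $\lambda_{T+1}$. But since this is a shared omission, your proposal is on par with the paper's.
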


The theorem states that the expected optimality gap and the expected constraint violation are all on the order of $1/\sqrt{T}$. Specifically, as the number of tokens $T$ generated by the watermarked language model increases, the expected difference between the performance of Algorithm \ref{alg:DualGA} and the theoretical optimal performance decreases at the rate of $O(1/\sqrt{T})$. Similarly, the degree to which the algorithm violates the constraint (of ensuring a certain detection ability) also diminishes at the rate of $O(1/\sqrt{T})$, meaning that the algorithm increasingly adheres to the required detection capability. Note that the algorithm does not require any prior knowledge or any hyper-parameter tuning; it can adaptively fit into whatever prompt and LM and achieve a similar level of near-optimality. The proof is standard in the literature of online stochastic programming and online resource allocation \citep{agrawal2014fast, li2020simple}.

Part (a) of Assumption \ref{assum:iid} is not critical. For Algorithm \ref{alg:DualGA}, we can explicitly relate the sub-optimality with the extent of non-i.i.d.ness. Specifically, we can define a global non-stationarity measure (as the bandits with knapsacks literature \citep{liu2022non})
\[
W_{T,1} \coloneqq \sum_{t=1}^T \|D_{\mathrm{kl}, t} - \overline{D_{\mathrm{kl}}}\|_{\infty}, \ \ \ \ 
W_{T,2} \coloneqq \sum_{t=1}^T \|\mathrm{DG}_{t} - \overline{\mathrm{DG}}\|_{\infty},
\]
where $\overline{D_{\mathrm{kl}}}$ and $\overline{\mathrm{DG}}$ denote the average of the KL divergence and the DG function, respectively. Then the bounds in Theorem \ref{thm:dualGA_performance_iid} accordingly become
\[
\frac{1}{T} \sum_{t=1}^T \mathbb{E}[D_{\mathrm{kl}, t}(\delta_t)] \leq \mathbb{E}\big[\mathrm{OPT}(\Delta)\big] + O(1/\sqrt{T}) + O(W_{T,1}/T),
\]
\[
\frac{1}{T} \sum_{t=1}^T \mathbb{E}[\mathrm{DG}_t(\delta_t)] \geq \Delta - O(1/\sqrt{T}) - O(W_{T,2}/T).\]
In addition to the sub-optimality gap $O(1/\sqrt{T})$, there is an additional term related to the non-i.i.d.ness of the online process. We note that the bounds can be conservative in that in practice, we observe the algorithms perform much better than the bounds prescribe. But at least, these bounds assure that Algorithm \ref{alg:DualGA} still gives a stable performance even if the i.i.d. part of Assumption \ref{assum:iid} does not hold.

\textbf{Remark.} We note that Algorithm \ref{alg:DualGA} is not the only algorithm that works for this online setting. We choose this mainly for its simplicity and nice empirical performance. As the optimization problems \eqref{opt:free_t} and \eqref{opt:nofree} share the same optimum and both of them can be transformed into equivalent convex programs, we can apply the algorithms in the literature of the online convex optimization with constraints to obtain theoretical guarantees without the i.i.d. assumption at all. For example, \citet{neely2017online} propose a virtual queue (also known as backpressure) algorithm that achieves both $O(1/\sqrt{T})$ guarantees for regret and constraint violation without the i.i.d. assumption. Numerically, we find Algorithm \ref{alg:DualGA} performs better than the backpressure algorithm and it is easier to implement as well.

\section{Discussions on Model Distortion and KL Divergence}
\label{sec:KL_DISTORT}

In this section, we provide more discussions on the choice of KL divergence and compare it against the other two popular criteria for the watermarking problem -- perplexity and marginal distortion-free. We hope the discussion calls for more reflections on the rigorousness and properness of the measurements for the watermarked LMs' model distortion. We believe in comparison with these two criteria, the KL divergence is a better one in quantifying the distortion of a watermarked LM and these discussions also further justify our choice of the objective function in our optimization problems.

%In this section, we justify the Kullback-Leibler divergence as a natural distortion measurement for watermarking algorithms that have direct access to the probability of each token. Such a measurement establishes a link between the token-wise distortion and the joint distortion. We later present an information theoretical result that model distortion should be viewed as a necessary price for watermarking and detection. At the end of this section, we compare the KL divergence with two other popular distortion criteria in the field of watermarking language models, calling for a reflection on the choice of such a measurement.

\subsection{Distortion as a Price for Watermarking}

In Section \ref{subsec:Pareto_Optimality}, we establish a connection between the objective function of the optimization problems and the KL divergence between the original LM $\bm{p}$ and the watermarked LM $\bm{q}.$ The following proposition further connects the two types of errors with the model distortion under the KL divergence. 

\begin{proposition}
\label{prop:info_lower_bound}
For any prompt $x$ and any watermarking scheme $\bm{q}$, we have
\[\inf_{\mathcal{D}} \alpha_{\bm{p}}(\mathcal{D}, x)+\beta(\mathcal{D},x) \geq 1- \sqrt{1-\exp\big(-D_{\mathrm{kl}}(\bm{q}(x) \| \bm{p}(x))\big)}.\]
In particular, when the KL divergence tends to zero, the sum of two types of errors under any detection algorithm $\mathcal{D}$ cannot be better than a random decision.
\end{proposition}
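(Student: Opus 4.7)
The plan is to derive the bound by chaining two classical results from statistical hypothesis testing: Le Cam's two-point lemma (which links the best achievable sum of type~I and type~II errors to total variation distance) and the Bretagnolle--Huber inequality (which upper-bounds total variation by a function of KL divergence). The proof then reduces to composing these two inequalities with the appropriate arguments.

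First I would reframe the detection task as a binary hypothesis test on the sequence space $\mathcal{V}^{T_{\text{max}}}$: under $H_0$ the text is drawn from $\bm{p}(x)$, under $H_1$ from $\bm{q}(x)$. Any detector $\mathcal{D}$ (which may use the key $\mathcal{K}$) corresponds, after conditioning on $\mathcal{K}$ and then averaging, to a measurable acceptance region on sequences, so that $\alpha_{\bm{p}}(\mathcal{D},x)$ and $\beta(\mathcal{D},x)$ are exactly the type~I and type~II errors of the test. Le Cam's lemma then yields
\[
\inf_{\mathcal{D}}\big(\alpha_{\bm{p}}(\mathcal{D},x) + \beta(\mathcal{D},x)\big) = 1 - \mathrm{TV}\big(\bm{p}(x),\bm{q}(x)\big),
\]
with the infimum attained by the likelihood-ratio test. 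I would give the one-line proof: for any acceptance region $A$, $\mathbb{P}_{\bm{p}}(A) + \mathbb{P}_{\bm{q}}(A^c) = 1 - (\mathbb{P}_{\bm{q}}(A) - \mathbb{P}_{\bm{p}}(A))$, and taking the supremum over $A$ on the right produces $\mathrm{TV}$.

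Next I would invoke the Bretagnolle--Huber inequality,
\[
\mathrm{TV}(P,Q) \leq \sqrt{1 - \exp\!\big(-D_{\mathrm{kl}}(Q \| P)\big)},
\]
whose quick derivation goes via the Hellinger affinity $\int \sqrt{\mathrm{d}P\,\mathrm{d}Q}$ together with Jensen's inequality applied to $\log$; because $\mathrm{TV}$ is symmetric, either direction of the KL divergence is admissible on the right-hand side. Plugging $P = \bm{p}(x),\ Q = \bm{q}(x)$ and chaining with the Le Cam identity produces the stated bound. The concluding sentence of the proposition is then immediate: as $D_{\mathrm{kl}}(\bm{q}(x) \| \bm{p}(x)) \to 0$ the right-hand side tends to $1$, which is precisely the error sum of a fair coin flip.

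No serious obstacle is anticipated; both ingredients are textbook. The only care points will be (i) aligning the direction of the KL divergence in Bretagnolle--Huber with the direction appearing in the proposition, and (ii) absorbing the key randomness of $\mathcal{K}$ into the joint law before applying Le Cam, which is routine via the tower property.
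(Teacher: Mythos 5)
Your proposal is correct and follows essentially the same route as the paper: both invoke Le Cam's two-point lemma to reduce the optimal error sum to $1-\mathrm{TV}$, and then apply the Bretagnolle--Huber inequality to bound $\mathrm{TV}$ by $\sqrt{1-\exp(-D_{\mathrm{kl}})}$. The only addition you make is spelling out the one-line proofs of the two ingredient lemmas and flagging the key-randomness and KL-direction book-keeping, which the paper leaves implicit.
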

While the above proposition seems similar to the results in \citet{sadasivan2023can, chakraborty2023possibilities}, the intuition behind it is quite different. Specifically, the existing results \citep{sadasivan2023can, chakraborty2023possibilities} characterize the possibility/impossibility of distinguishing AI-generated texts from humans. For the watermarking problem, the setting is quite different in that the watermarking algorithm is allowed to twist the original LM. In this sense, the KL divergence can be viewed as a minimum price for distinguishing the watermarked LM $\bm{q}$ against an arbitrary distribution $\bm{p}.$

The optimization problems \eqref{opt:free_t_k}, \eqref{opt:free_t}, and \eqref{opt:nofree} all concern the minimum model distortion (the objective) to achieve a certain level of detection ability (the constraint). The above proposition further reinforces such a relationship that one has to distort the model (in terms of a positive KL divergence) to be able to distinguish the watermarked LM from all the other LMs.  

\subsection{Comparison with Other Criteria}

The distribution distortion caused by watermarking is mainly characterized in two ways by the existing literature: (i) the distance between the marginal distributions that generate each token $y_t$ and (ii) the difference of expected logarithm perplexity. The former is often used in theoretical analysis \citep{aaronson2023watermarking, christ2023undetectable, kuditipudi2023robust} and the latter in empirical evaluation \citep{tian2023gptzero, liang2023gpt, kirchenbauer2023watermark, wouters2023optimizing}. In this subsection, we compare the KL divergence with those two popular criteria. 

\subsubsection{Comparison with Log-Perplexity}

The notion of perplexity (PPL) has been long used as a performance measure for evaluating language models, and it is also commonly used for the problem detecting AI-generated texts from humans \citep{wallach2009evaluation, beresneva2016computer, tian2023gptzero}. Formally, the perplexity is defined as follows.

\begin{definition}
For an evaluation language model $\bm{p}_{\text{eval}}$, a prompt $x$, and a token sequence $\bm{y}_{[T]}$, the perplexity (PPL) is defined as
\[\mathrm{PPL}(\bm{y}_{[T]} | \bm{p}_{\text{eval}}, x) \coloneqq \exp\Big(-\frac{1}{T}\sum_{t=1}^T \log\big(\bm{p}_{\text{eval}}(x, \bm{y}_{[t-1]})_{y_t}\big) \Big).\]
In practice, the perplexity is often used taking the logarithm, resulting in the logarithm of the perplexity (LoP).
Furthermore, the expected logarithm of the perplexity of a language model $\bm{p}$ can be defined as
\[\mathrm{LoP}(\bm{p} | \bm{p}_{\text{eval}}, x) \coloneqq \mathbb{E}_{\bm{y}\sim \bm{p}}\big[\mathrm{LoP}(\bm{y} | \bm{p}_{\text{eval}}, x)\big],\]
and the model distortion of $\bm{q}$ is measured via the difference of the expected logarithm of the perplexity (DLoP)
\[
\mathrm{DLoP}(\bm{q}, \bm{p}|\bm{p}_{\text{eval}}, x) \coloneqq \mathrm{LoP}(\bm{q} | \bm{p}_{\text{eval}}, x) - \mathrm{LoP}(\bm{p} | \bm{p}_{\text{eval}}, x).
\]
\end{definition}

%It is usually a hard problem to evaluate the exact probability of a given sentence generated by artificial intelligence. Thus, the perplexity is provided as an alternative to give an approximation of the probability when equipped with an evaluation language model, which can be formally defined as follows.

A line of research \citep{kirchenbauer2023watermark, wouters2023optimizing} in the watermarking field utilizes DLoP as a measurement of model distortion in the belief that the model distortion is small if the DLoP is small. In such a measurement the evaluation LM $\bm{p}_{\text{eval}}$ is often chosen to be the original LM or a larger LM. Despite the properness of the perplexity in other NLP tasks, its usage in evaluating model distortion in watermarking can be misleading. The reason is that it may lead to an awkward situation where one could severely twist the language model while making the perplexity even smaller. Such an intuition is formalized as follows.
\begin{comment}\zz{I don't know if this argument is strong enough, as in real LLM implementations, sometimes we will indeed use argmax sampling for the next token (mainly to boost inference speed). A possible way to further support this argument is to go search for the benefit of N-gram sampling, where LM samples the next N tokens with the largest \textbf{joint likelihood}. There must be cases where a greedy scheme for the generation of the next token is globally suboptimal.} \shang{You're right. Would you or Hanzhao mind helping me find some papers on that? Hanzhao, the logic chain here is that both argmax sampling and N-gram sampling are designed to boost the speed of inference. But why do we need N-gram sampling (assume they boost the inference speed equally)? The reason possibly describes some cases where argmax sampling is bad. But I also recommend adding the same reason into the part of criticizing marginal distortion-free because they also use a way like argmax.}
\end{comment}

\begin{proposition}
\label{prop:negative_DLoP}
Consider generating a fixed length of $T$ tokens with a prompt $x$ and a language model $\bm{p}$ and using a (known) evaluation language model $\bm{p}_{\text{eval}}$. Suppose there is no tie in $\mathcal{V}^{T}$ for $\bm{p}_{\text{eval}}(\cdot|x)$ and $|\mathcal{V}^{T}| \geq 3$. Then there exists $\tilde{\bm{q}}$, s.t. 
\[
\mathrm{DLoP}(\tilde{\bm{q}}, \bm{p}|\bm{p}_{\text{eval}}, x) < 0,
\]
while $
D_{\mathrm{kl}}(\tilde{\bm{q}} \|\bm{p}) > 0.$
\end{proposition}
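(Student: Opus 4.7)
The plan is to build $\tilde{\bm{q}}$ as the degenerate distribution concentrated on the most likely joint sequence under $\bm{p}$. First, I will check that the mode $\bm{y}^* \coloneqq \arg\max_{\bm{y}\in \mathcal{V}^T} \bm{p}(\bm{y}\mid x)$ is well-defined and satisfies $0 < \bm{p}(\bm{y}^*\mid x) < 1$. The no-tie assumption forces all $|\mathcal{V}^T|$ joint probabilities to be distinct; combined with $|\mathcal{V}^T|\geq 3$, at most one sequence can have zero probability (otherwise two zeros would tie), so at least two sequences carry strictly positive mass. In particular, the mode is unique and has probability strictly less than $1$.

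Next, I would rewrite $\mathrm{DLoP}$ in terms of entropy and KL divergence. By the autoregressive factorization, $\mathrm{LoP}(\bm{y}\mid \bm{p},x) = -\tfrac{1}{T}\log \bm{p}(\bm{y}\mid x)$, so for any $\bm{q}$,
\[
T\cdot\mathrm{LoP}(\bm{q}\mid \bm{p},x) = -\mathbb{E}_{\bm{y}\sim \bm{q}}\!\left[\log \bm{p}(\bm{y}\mid x)\right] = H(\bm{q}(\cdot\mid x))+D_{\mathrm{kl}}(\bm{q}(\cdot\mid x)\|\bm{p}(\cdot\mid x)),
\]
and therefore
\[
T\cdot\mathrm{DLoP}(\bm{q},\bm{p}\mid \bm{p},x) = H(\bm{q}(\cdot\mid x))+D_{\mathrm{kl}}(\bm{q}(\cdot\mid x)\|\bm{p}(\cdot\mid x))-H(\bm{p}(\cdot\mid x)).
\]
Plugging in the chosen $\tilde{\bm{q}} = \mathbbm{1}\{\bm{y}=\bm{y}^*\}$, which is realizable autoregressively via the conditionals $\tilde{\bm{q}}(y_t\mid \bm{y}_{[t-1]}) = \mathbbm{1}\{y_t=y^*_t\}$ along the path $\bm{y}^*$, gives $H(\tilde{\bm{q}})=0$ and $D_{\mathrm{kl}}(\tilde{\bm{q}}\|\bm{p}) = -\log \bm{p}(\bm{y}^*\mid x)$. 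The second desired inequality $D_{\mathrm{kl}}(\tilde{\bm{q}}\|\bm{p})>0$ then follows at once from $\bm{p}(\bm{y}^*\mid x)<1$.

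It remains to verify $-\log \bm{p}(\bm{y}^*\mid x) < H(\bm{p}(\cdot\mid x))$. Writing
\[
H(\bm{p}(\cdot\mid x)) = \sum_{\bm{y}:\,\bm{p}(\bm{y}\mid x)>0} \bm{p}(\bm{y}\mid x)\bigl(-\log \bm{p}(\bm{y}\mid x)\bigr),
\]
the right-hand side is a convex combination of the values $-\log \bm{p}(\bm{y}\mid x)$ over positive-probability sequences. The mode $\bm{y}^*$ uniquely minimizes $-\log \bm{p}(\cdot\mid x)$ among these, and by the previous paragraph at least one other sequence carries positive mass, so the convex combination strictly exceeds the minimum. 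Combining gives $T\cdot\mathrm{DLoP}(\tilde{\bm{q}},\bm{p}\mid\bm{p},x) = -\log \bm{p}(\bm{y}^*\mid x) - H(\bm{p}(\cdot\mid x))<0$, completing the construction.

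The only mildly delicate step is the bookkeeping at the boundary cases: ensuring the no-tie assumption and $|\mathcal{V}^T|\geq 3$ jointly rule out both the degenerate case $\bm{p}(\bm{y}^*\mid x)=1$ (which would collapse $H(\bm{p})$ to zero and break the strict inequality) and the possibility of multiple modes. Once this is pinned down, the rest of the proof is a routine identity manipulation and a strict Jensen-type comparison, so I do not anticipate any serious obstacle.
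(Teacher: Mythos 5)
Your proof is correct and follows essentially the same route as the paper: both take $\tilde{\bm{q}}$ to be the Dirac mass at the modal sequence $\bm{y}^*$ under $\bm{p}(\cdot\mid x)$, observe that the no-tie and $|\mathcal{V}^T|\geq 3$ assumptions rule out $\bm{p}$ being degenerate, and conclude via the strict inequality between the minimum of $-\log\bm{p}(\bm{y}\mid x)$ over the support and its expectation. Your detour through the identity $T\cdot\mathrm{LoP}(\bm{q}\mid\bm{p},x)=H(\bm{q})+D_{\mathrm{kl}}(\bm{q}\|\bm{p})$ is a slightly more explicit bookkeeping of the same comparison the paper states directly as $\max_{\bm{y}}\log\bm{p}(\bm{y})\geq\mathbb{E}_{\bm{y}\sim\bm{p}}[\log\bm{p}(\bm{y})]$, but the underlying argument is identical.
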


The LM $\tilde{\bm{q}}$ can be constructed by a deterministic LM that always predicts the word with the largest probability under the evaluation LM $\bm{p}_{\text{eval}}.$ The assumption of knowing the evaluation LM in the construction of such a model $\tilde{\bm{q}}$ is not restrictive: a common practice of training the language model $\bm{p}$ is to distill the larger evaluation model $\bm{p}_{\text{eval}}$ into some smaller model so that $\bm{p} \approx \bm{p}_{\text{eval}}$, which means that the counter-example can be constructed by knowing $\bm{p}$ as a proxy. This $\tilde{\bm{q}}$ even decreases the perplexity against the original LM but it is apparently not a good LM in that (i) it is a deterministic one and (ii) it has a positive KL-divergence from the original LM $\bm{p}.$ Therefore, Proposition \ref{prop:negative_DLoP} warns of a potential issue of using the perplexity to measure the performance of a watermarked LM. 

%In conclusion, we must notice that the perplexity is just the choice of no choices under the circumstances that one has no direct access to the generating model's probabilities, such as the problem of AI-generated text detection. In the watermarking case, one knows the original language model's probabilities, indicating there should be a better choice for the measurement of model distortion.

\subsubsection{Comparison with Marginal Distortion-Free}
Another popular criterion for defining a \textit{good} watermarking algorithm is to check if the watermarked LM is marginal distortion-free \citep{aaronson2023watermarking, christ2023undetectable, kuditipudi2023robust, hu2023unbiased, wu2024resilient}. A watermarking algorithm is called \textit{marginal distortion-free} if the average distribution of the next token prediction under the watermarked LM exactly matches the original LM, where the average is taken for the key $\mathcal{K}$. We formally define it as follows.

\begin{definition}
\label{def:margin_distort_free}
A watermarking algorithm $\mathcal{A}$ is called as \textit{marginally distortion-free} if for any language model $\bm{p}$, the following holds for all $ t \leq T_{\text{max}}$ and prompt $x \in \mathcal{X}$
\[\mathbb{E}_{\mathcal{K}}\big[\bm{q}_{t|[t-1]}(x)\big] = \bm{p}_{t|[t-1]}(x).\]
\end{definition}

From this definition, we believe a rigorous name for this property should be \textit{marginally} distortion-free, while the existing literature simply calls it distortion-free. Calling it distortion-free gives a misconception that the watermarking algorithm does not make or need any change to the underlying LM, which could be misleading in some cases. One can think of this marginal distortion-free property as a property of the watermarking algorithm on the \textit{population} level. If we could aggregate all the LMs watermarked by different keys (statistically, marginalizing over the keys), it would match the original LM. However, each watermarked LM is twisted. For example, consider the case where the key is public: the watermarking service can only be provided with some LM watermarked by only one key so that the key is made public and shared between the examiners and the provider. That model is drastically different from the original LM. A guarantee of marginally distortion-free says nothing about any single LM's performance as it only concerns an averaging experience for all the keys. In comparison, the KL-divergence that we study throughout our paper and in all the above optimization problems, ensure the model distortion in a sample-path manner. That is, it ensures every realization of the watermarked LM behaves closely to the original LM. 
 
%When a user requests the answer of a language model, what one gets is only one sample path of the language-generating process. The principle of marking one model as untwisted by averaging all answers given to different users does not match each user's experience. We thus suggest that the expected KL divergence 
%as \textit{the expectation of the distortion} should better capture the essence of model distortion compared to \textit{the distortion of the expectation}. Such an intuition is formalized as the following proposition.

\begin{proposition}
\label{prop:KL_cvx} It holds that $
D_{\mathrm{kl}}\big(\mathbb{E}_{\mathcal{K}}[\bm{q}] \big\| \bm{p}\big)\leq \mathbb{E}_{\mathcal{K}}\big[D_{\mathrm{kl}}(\bm{q} \| \bm{p})\big].$
\end{proposition}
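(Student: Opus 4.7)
The plan is to recognize this as a direct application of Jensen's inequality combined with the convexity of the Kullback--Leibler divergence in its first argument. Concretely, I would view the watermarked distribution $\bm{q}$ as a random object depending on the key $\mathcal{K}$, and the mapping $\mathcal{Q} \mapsto D_{\mathrm{kl}}(\mathcal{Q} \| \bm{p})$ as a functional on distributions whose convexity I want to exploit.

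First, I would recall (or quickly derive) that $D_{\mathrm{kl}}(\cdot \| \bm{p})$ is convex as a function of its first argument; the cleanest route is to note that the map $(a,b) \mapsto a \log(a/b)$ is jointly convex on $(0,\infty)^2$ (by the log-sum inequality or by checking the Hessian), and then sum/integrate pointwise. Since $\bm{p}$ is held fixed across the randomness of $\mathcal{K}$, this yields convexity of $\bm{q} \mapsto D_{\mathrm{kl}}(\bm{q} \| \bm{p})$ for any fixed $\bm{p}$.

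With convexity in hand, Jensen's inequality applied to the distribution of $\bm{q}$ under the randomness of $\mathcal{K}$ gives exactly
\[
D_{\mathrm{kl}}\bigl(\mathbb{E}_{\mathcal{K}}[\bm{q}] \,\big\|\, \bm{p}\bigr) \leq \mathbb{E}_{\mathcal{K}}\bigl[D_{\mathrm{kl}}(\bm{q} \| \bm{p})\bigr].
\]
A minor bookkeeping step is to make sure $\mathbb{E}_{\mathcal{K}}[\bm{q}]$ is indeed a valid probability distribution (it is, since it is a convex mixture), and that the absolute continuity $\mathbb{E}_{\mathcal{K}}[\bm{q}] \ll \bm{p}$ holds whenever $\bm{q} \ll \bm{p}$ almost surely under $\mathcal{K}$.

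There is no real obstacle here: the only thing to be careful about is the precise interpretation of the expectation $\mathbb{E}_{\mathcal{K}}[\bm{q}]$ as a pointwise mixture of distributions rather than an abstract expectation of a random measure, but this is the same interpretation used in the definition of marginally distortion-free. Once that is clarified, the inequality is a one-line consequence of Jensen.
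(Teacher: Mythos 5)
Your proposal is correct and follows essentially the same route as the paper: both invoke the (joint) convexity of the KL divergence and then apply Jensen's inequality over the randomness of the key $\mathcal{K}$, with $\bm{p}$ held fixed. The only cosmetic difference is that you unpack the convexity back to the joint convexity of $(a,b)\mapsto a\log(a/b)$, whereas the paper cites it directly as a lemma.
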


The above proposition characterizes the relationship between the two measurements of model distortion. Our measurement of the KL divergence can be viewed as an upper bound of the marginal distortion of the LM (on the left-hand side). The marginal distortion-free property requires $D_{\mathrm{kl}}\big(\mathbb{E}_{\mathcal{K}}[\bm{q}] \big\| \bm{p}\big)=0$.

To give some concrete examples, we provide a result on two popular marginal distortion-free algorithms: exponential minimum sampling \citep{aaronson2023watermarking, christ2023undetectable, kuditipudi2023robust,li2025statistical} and inverse transform sampling \citep{kuditipudi2023robust,li2025statistical}. We show that they yield an expected KL divergence whose quantity is the same as the entropy of the original model.\\

\noindent\begin{minipage}[t]{\textwidth}
\begin{example} Exponential minimum sampling \citep{aaronson2023watermarking, christ2023undetectable, kuditipudi2023robust,li2025statistical} and inverse transform sampling \citep{kuditipudi2023robust,li2025statistical} are not distortion-free.
\label{eg:ems} We have $D_{\mathrm{kl}}\Big(\mathbb{E}_{\mathcal{K}}\big[\bm{q}_{t|[t-1]}\big] \Big\| \bm{p}_{t|[t-1]}\Big) = 0,$
but $\mathbb{E}_{\mathcal{K}}\Big[D_{\mathrm{kl}}\big(\bm{q}_{t|[t-1]} \big\| \bm{p}_{t|[t-1]}\big)\Big] = \mathrm{Ent}_{y_t}\big(\bm{p}_{t|[t-1]} \big),$
where $\mathrm{Ent}_{y_t}(\cdot)$ denotes the entropy with respect to the $t$-th token $y_t$.\\

\end{example}
\end{minipage}

To interpret this example, it means the watermarked LM received by one user $\bm{q}_{t|[t-1]}$ has a KL divergence towards the original LM $\bm{p}_{t|[t-1]}$ that is as large as the entropy of the original LM. This resolves the seeming contradiction between our impossibility result in Proposition \ref{prop:info_lower_bound} and the existence of ``distortion-free'' algorithms. The marginal distortion-free criterion hides the necessary price paid for the watermarking. Such a criterion itself does not necessarily imply a practical failure, but it does not rule out those possibilities. For example, \citet{kuditipudi2023robust} notice a phenomenon in practice when applying the watermarking scheme of \citet{aaronson2023watermarking} that the watermarked model would be easily trapped in a tautology (that is, constantly repeating the same sentence). The above proposition and example explain such an undesirable phenomenon.

\section{Experiments}
\label{sec:exp}
In this section, we evaluate our Dual Gradient Ascent (DualGA) algorithm (Algorithm \ref{alg:DualGA}).
% The code for the numerical experiments is available at \url{https://github.com/ZhongzeCai/DualGA}.
First, we evaluate the algorithm’s detection capability and its degree of model distortion in comparison to existing benchmarks. Our empirical results demonstrate that it achieves an optimal trade-off between these two aspects, consistently maintaining strong detection performance across all prompts. Second, we investigate the algorithm’s behavior under varying language models (LMs) and prompts. We also assess its robustness against adversarial watermark-removal attacks. Additional experiments are presented in Appendix \ref{subapx:additional_exp},  including examples of generated responses (Appendix \ref{apxsub:Example_response}), the use of the dual variable as a signal for monitoring text repetition, which is a common issue in hashing-based watermarking methods (Appendix \ref{subsec:body_text_repetition_detection}), and the sensitivity to hyperparameter choices (Appendix \ref{appsec:robust_hyper}), among others.

\subsection{Experiment Setup}

\textbf{Language Models.} We employ several language models to generate the watermarked sentences, including Llama-3-8B \citep{grattafiori2024llama}, Llama-7B \citep{touvron2023llama},  and OPT-2.7B \citep{zhang2022opt}. Unless otherwise specified, Llama-3-8B is used as the default model for most experiments. 

%However, as we will show, the observed phenomena are generally consistent across different models. We also leverage the Huggingface Library \citep{wolf2019huggingface} for text generation and data management.

\textbf{Datasets.} The experiments test the algorithms in two datasets: C4 \citep{raffel2020exploring} and LFQA \citep{fan2019eli5}. The C4 dataset is an extensive collection of English-language texts sourced from the public Common Crawl web scrape. The LFQA dataset, designed for long-form question-answering, is compiled by picking questions with their longest answers from Reddit. More details about the dataset are provided in Appendix \ref{subapx:data_construct}.

%, and its ``realnewslike'' subset is where we extract prompt samples. Specifically, we select text data from its ``\emph{text}'' domain, where the last $200$ tokens are taken as the baseline completion and the rest of tokens are used to construct the prompt dataset.  We directly take the questions as prompts and the best answers as baseline completions. Unless otherwise specified, LFQA is used as the default evaluation dataset for most experiments. More details about the dataset construction are provided in Appendix \ref{subapx:data_construct}.
\textbf{Benchmark Algorithms.} We consider the Soft Red List (SRL) watermarking algorithm \citep{kirchenbauer2023watermark} (which keeps $\delta_{t,k}\equiv\delta$ in Algorithm~\ref{alg:kirchen}), the Exponential Minimum Sampling (EMS) \citep{aaronson2023watermarking}, the Unbiased Watermark (UW) \citep{hu2023unbiased}, and the Adaptive Text Watermark (ATW) \citep{liu2024adaptive}. For each algorithm, we report results under multiple hyperparameter settings. Due to space constraints, full implementation details are deferred to Appendix~\ref{subapx:algs_implement}.

%For the SRL algorithm, we tune the hyper-parameter $\gamma$, i.e., the proportion of the green list, and the parameter $\delta$, i.e., the constant perturbations added to all green list's words. For the EMS algorithm, as suggested by \cite{fernandez2023three}, we tune the generation temperature $\tau$, which modifies the logits used to produce watermarked text. The UW family of watermarking algorithms includes both the $\delta$-reweight and $\gamma$-reweight variants, and we adopt the likelihood-based detection score proposed in their main text. The ATW method injects the perturbation $\delta$ only into tokens with entropy exceeding a threshold $\alpha$; in our demonstration, we use their best-performing hyperparameter configurations for $\delta$ and $\alpha$. 

\textbf{Our Algorithm.} The hyperparameters of the DualGA algorithm include the green list ratio $\gamma$, the DG constraint $\Delta$, the step size $\eta$, and the initial dual variable $\lambda_1$. Among these, the most important parameter that governs the algorithm's behavior is $\Delta$ (in Appendix \ref{subapx:choose_green_ratio}, we explain how $\gamma$ is selected based on $\Delta$). Further implementation details of DualGA are provided in Appendix \ref{subapx:algs_implement}.

\subsection{Detection Ability, Distortion, and Pareto Optimality}

In this subsection, we evaluate the detection ability and distortion level of different algorithms. To test the detection ability, we control the false positive rate (FPR), i.e., the probability of predicting an unwatermarked text as watermarked,  at pre-specified levels ($10^{-4}$ or $10^{-6}$)
and quantify the detection ability by the true positive rates (TPR), i.e., the probability of successfully detecting a watermarked text. The model distortion is measured by the KL-divergence (KL) (averaged over tokens) between the watermarked and unwatermarked text distributions, as defined in Section \ref{sec:KL_DISTORT}. We also provide more discussions on the detection ability metrics in Appendix \ref{subapx:detection_ps}.

Table \ref{Tab:exp_performance} shows the evaluation of the detection ability (TPR) alongside the model distortion (KL). For each algorithm family we set multiple configurations, with each of them tested on two datasets with two significance levels ($10^{-4}$ and $10^{-6}$). %(with the results under more configurations deferred to Table \ref{Tab:exp_performance_ext} in Appendix \ref{subapx:addi_detectVSdistort}). 
Each configuration of these algorithms corresponds to a different combination of the detection ability and the model distortion. In comparison, our algorithm of DualGA exhibits a better trade-off between the two aspects; for a certain level of model distortion, our algorithm achieves a higher TPR, and conversely, for a certain level of TPR, our algorithm exhibits a smaller model distortion. All algorithms in Table \ref{Tab:exp_performance} are evaluated using Llama-3-8B as the text generation model, and the advantage of our algorithm also holds for other models such as Llama-7B (the results are provided in Appendix \ref{subapx:addi_detectVSdistort}).

\begin{table}[h!]
  \centering
\resizebox{\textwidth}{!}{
  \begin{tabular}{ll|ccc|ccc}
    \toprule
    \multirow{3}{*}{Method} & \multirow{3}{*}{Configuration}&\multicolumn{3}{c|}{C4} &\multicolumn{3}{c}{LFQA}  \\
    & & \multicolumn{2}{c}{TPR $\uparrow$}& \multirow{2}{*}{KL$\downarrow$} & \multicolumn{2}{c}{TPR$\uparrow$}& \multirow{2}{*}{KL$\downarrow$} \\ 
    \cmidrule(lr){3-4} \cmidrule(lr){6-7}
& &FPR$<10^{-4}$ &FPR$<10^{-6}$& &FPR$<10^{-4}$ &FPR$<10^{-6}$& \\
    \midrule
\multirow{9}{*}{SRL}   
& $\delta=1, \gamma=0.1$&0.08 & 0.02 & 0.02 & 0.17 & 0.06 & 0.02 \\
 &  $\delta=1,\gamma=0.5$&0.27 & 0.09 & 0.05 & 0.31 & 0.11 & 0.05 \\
 & $\delta=1,\gamma=0.7$& 0.17 & 0.03 & 0.04 & 0.18 & 0.07 & 0.03 \\
 
& $\delta=5,\gamma=0.1$&1.0 & 0.99 & 1.2 & 1.0 & 1.0 & 1.2 \\
   & $\delta=5,\gamma=0.5$&1.0 & 1.0 & 0.65 & 1.0 & 1.0 & 0.65 \\
   & $\delta=5,\gamma=0.7$&0.99 & 0.97 & 0.35 & 0.99 & 0.96 & 0.35 \\
& $\delta=10,\gamma=0.1$&0.99 & 0.99 & 1.4 & 1.0 & 0.99 & 1.4 \\
& $\delta=10,\gamma=0.5$&1.0 & 1.0 & 1.2 & 1.0 & 0.99 & 1.2 \\
& $\delta=10,\gamma=0.7$&0.98 & 0.93 & 0.57 & 0.98 & 0.92 & 0.54 \\
    \midrule
 \multirow{3}{*}{EMS} 
 & $\tau=0.5$&0.38 & 0.16 & 0.71 & 0.22 & 0.14 & 0.54 \\
 & $\tau=1$&0.90 & 0.82 & 1.03 & 0.92 & 0.87 & 0.46 \\
 & $\tau=1.5$&1.0 & 1.0 & 2.6 & 1.0 & 1.0 & 2.7 \\
    \midrule
\multirow{2}{*}{UW} 
 & $\delta$-reweight & 1.0 & 1.0 & 2.3 & 1.0 & 1.0 & 2.1 \\
 & $\gamma$-reweight & 1.0 & 1.0 & 1.1 & 1.0 & 0.99 & 1.2 \\
 \midrule
 \multirow{2}{*}{ATW} 
 & $\delta=1,\alpha=2$ & 0.51 & 0.29 & 0.07 & 0.56 & 0.33 & 0.07\\
 & $\delta=5,\alpha=2$ & 0.99 & 0.92 & 0.36 & 1.0 & 0.96 & 0.36 \\
\midrule 
 \multirow{4}{*}{DualGA} 
 & $\Delta=0.2$&0.92 & 0.91 & 0.28 & 0.89 & 0.87 & 0.27 \\
 & $\Delta=0.3$&0.99 & 0.98 & 0.34 & 0.98 & 0.98 & 0.31 \\
 & $\Delta=0.4$&1.0 & 1.0 & 0.57 & 1.0 & 0.99 & 0.59 \\
 & $\Delta=0.5$ &1.0 & 1.0 & 1.1 & 1.0 & 1.0 & 0.97 \\
    \bottomrule
  \end{tabular}}
  \caption{The detection ability (TPR) and model distortion (KL) of watermarking algorithms across different hyperparameter configurations. The TPR under different FPR thresholds measures the detection ability (the higher the better). The KL measures the distortion of the watermarked text (the lower the better).}\label{Tab:exp_performance}
\end{table}

Figure \ref{fig:LFQA_Pareto} visualizes the trade-off between the detection ability and the model distortion. For two language models, Llama-3-8B (left) and OPT-2.7B (right), evaluated on LFQA,  Figure \ref{fig:LFQA_Pareto} plots respectively the realized DG against the KL on the population level, where each point represents one algorithm configuration's averaged performance of 500 test prompts. Here, the realized DG is calculated as the realized difference in the green word probability (defined in \eqref{eq:define_DG} and computed by the average over $t\in [1, T_{\mathrm{max}}]$ for each prompt).

Both plots in Figure \ref{fig:LFQA_Pareto} show that our algorithm achieves a Pareto optimality for whatever choice of the constraint budget parameter $\Delta$. In other words, the parameter $\Delta$ gives a handle to balance the trade-off but does not affect the algorithm's Pareto optimality. While for SRL, although it can also achieve Pareto optimality under specific choices of hyperparameters, it exhibits suboptimal performance when the parameters $\delta$ and $\gamma$ are not well-aligned (e.g., $\gamma=0.5, \delta=5$). Moreover, even when a particular SRL configuration achieves an optimal trade-off for a specific language model (LM), there is no guarantee that the same configuration remains optimal for a different LM. For instance, the configuration $\gamma = 0.5, \delta = 2$ is nearly optimal for Llama-3-8B (see the left plot of Figure \ref{fig:LFQA_Pareto}), but not for OPT-2.7B (see the right plot of Figure \ref{fig:LFQA_Pareto}), whereas $\gamma=0.7, \delta=5$ is nearly optimal for OPT-2.7B but not for Llama-3-8B. This observation reinforces the theoretical result in Section \ref{subsec:Pareto_Optimality}, which highlights that the optimal choice of $\delta$ depends on multiple factors, including the selection of LM and the value of $\gamma$.

\begin{figure}[!htb]
\centering
\includegraphics[width=0.7\linewidth]{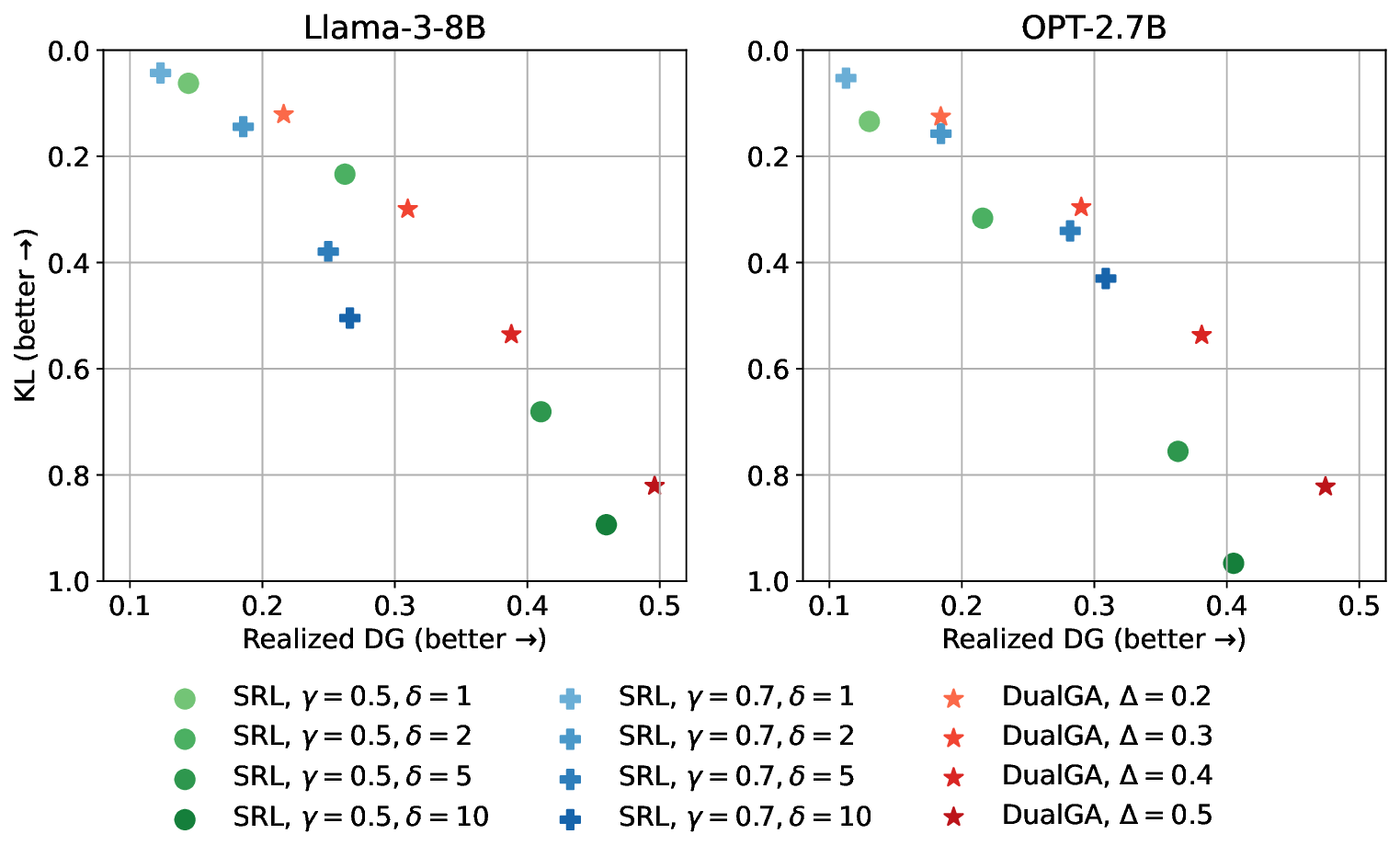}
\caption{Detection ability (Realized DG) and distortion (KL) on the LFQA dataset by Llama-3-8B (left) and OPT-2.7B (right). Each point represents the population-level values (i.e., averaged over all samples) of DG and KL for a given algorithm configuration. In both plots, several SRL configurations and all DualGA configurations lie on the Pareto-optimal curve, indicating an effective balance between detection ability and distortion. However, not all SRL configurations achieve this optimal trade-off: a configuration that is Pareto-optimal for one language model (e.g., SRL with $\gamma=0.5, \delta=2$ in the left plot) may fall notably below the curve for another model (e.g., the same configuration in the right plot).}\label{fig:LFQA_Pareto}
\end{figure}

\subsection{Consistent Performance across Diverse Scenarios}
\label{sec:Consistent_Performance_exp}
A major strength of our algorithm lies in its adaptiveness. For different generation tasks and LMs, it automatically achieves Pareto optimality. In this subsection, we empirically demonstrate that DualGA demonstrates consistent performance across diverse deployment scenarios, requiring only minimal computational overhead to search for the optimal hyperparameters.

Figure \ref{fig:scatter_4} visualizes the performance of these algorithms at the individual prompt level. Each plot represents a specific combination of dataset and LM, with each point indicating the outcome from a single prompt. In every experimental scenario, the average performance of our algorithm lies on the Pareto-optimal curve, which agrees with the results shown in Figure \ref{fig:LFQA_Pareto}. Moreover, when tracking the performance of a fixed algorithm configuration across the four plots, we observe that for any fixed value of $\Delta$, DualGA maintains consistent performance across different datasets and language models. In contrast, SRL (also under a fixed parameter configuration) shows much less stability across different experimental settings. At the individual prompt level, DualGA exhibits a very concentrated DG level (near the target $\Delta$) for different prompts, while the SRL algorithm has a highly varied level of model distortion and detection ability for each prompt even under the same parameter configuration of $\delta$ and $\gamma$. These results suggest that DualGA demonstrates a more consistent ability to ensure uniform detection across individual prompts and a diverse range of experiment scenarios.

Another interesting phenomenon in Figure \ref{fig:scatter_4} is that, for our DualGA algorithm, the variation of its distortion (KL) and the realized DG exhibit a positive correlation. We note that the algorithm minimizes model distortion subject to a specific detection ability, and this suggests that prompts that are inherently more challenging to watermark (e.g., factual questions like ``Where is the capital of the U.K.?'') require a greater distortion change to achieve the same level of detection abilities compared to inherently easier prompts (e.g., opinion questions like ``What is your favorite color?''). For a small detection target $\Delta$, both types of prompts may only need some minor modification to the original response to meet the detection target and thus both allow low distortion. However, to achieve higher detection ability, more complex prompts necessitate stronger watermarking efforts, potentially skewing the correct answer or incorporating redundant tokens, whereas easier prompts allow more flexibility for embedding watermarks and thus still a small distortion. Additional comparisons with the EMS algorithm are provided in Appendix \ref{subapx:stability_apx}.

\begin{figure}[!htb]
\centering
\includegraphics[width=1\linewidth]{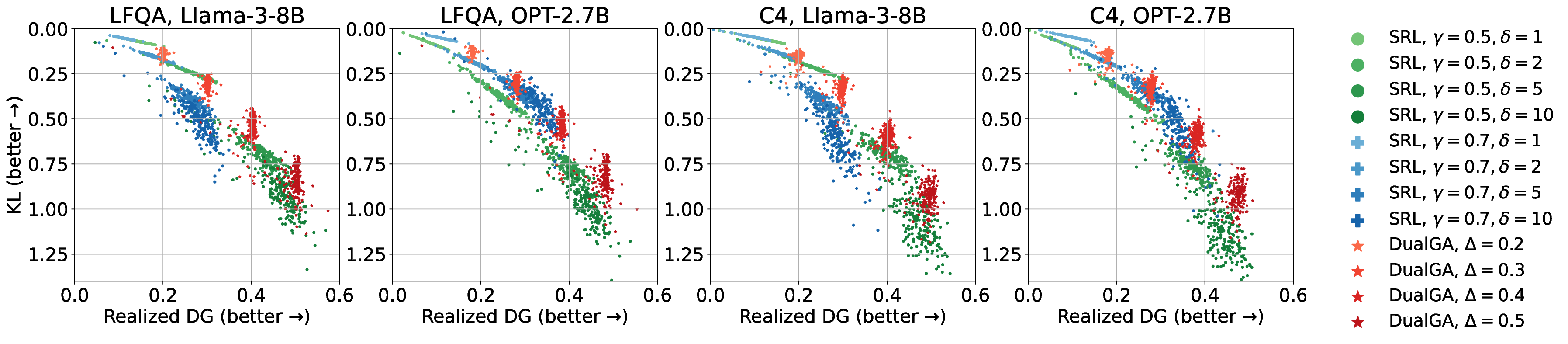}
\caption{Detection ability (Realized DG) and distortion (KL) at the individual prompt level. Each point represents an algorithm's outcome on a single prompt, and each plot corresponds to a specific dataset–LM pairing. Red points indicate the DualGA method, while blue and green points represent the SRL baselines. Across all four plots, the shape and position of the DualGA point clusters remain largely consistent, whereas some of the SRL clusters shift and reshape significantly. This suggests that DualGA consistently achieves uniform detection across different prompts, tasks, and LMs. 
}\label{fig:scatter_4}
\end{figure}

\subsection{Robustness under Attacks}\label{subsec:body_text_robust}

We evaluate the robustness of DualGA and several benchmark methods against three standard watermark-removal attacks: deletion, insertion, and substitution, as well as three more advanced attacks: paraphrasing, entropy-based substitution, and color-based substitution. In paraphrasing, GPT‑4 is prompted to rewrite each sentence by replacing a specified percentage of words; entropy-based substitution first identifies high-entropy positions and then substitutes tokens in these positions; color-based substitution prompts GPT‑4 to generate multiple paraphrases that replace only the ``green'' tokens, and retains the version achieving the most green-to-red substitutions. The methodology and implementation details for each attack type are provided in Appendix \ref{subapx:robustness_attack}. Attack rate is defined as the percentage of tokens affected. The performance of all considered algorithms under varying attack types and attack rates is shown in Figure \ref{fig:robustness_6_attacks}, where detection ability is quantified using $p$-values (defined in Appendix \ref{subapx:detection_ps}), with lower values indicating stronger detection.

Some of the key observations include that: (i) Under mild attack scenarios (Attack Rate $\leq 30\%$), most methods maintain strong performance, as reflected by their median $p$-values below $10^{-4}$ (the black dashed line in the figures). (ii) Typically, enhanced robustness appears with the increased distortion, as evidenced by higher KL values noted in the legend. (iii) Specifically for DualGA, we observe an improvement in robustness as the parameter $\Delta$ increases. Since $\Delta$ is the target DG and directly controls the watermarking strength, texts watermarked by DualGA with higher $\Delta$ are easier to detect and more robust to withstand attacks. (iv) Compared to attacks on randomly selected tokens (the first four types), non-random attacks (entropy-based substitution and color-based substitution) cause a more significant reduction in detection ability at the same attack rate.

\begin{figure}[!h]
\centering
\includegraphics[width=0.8\linewidth]{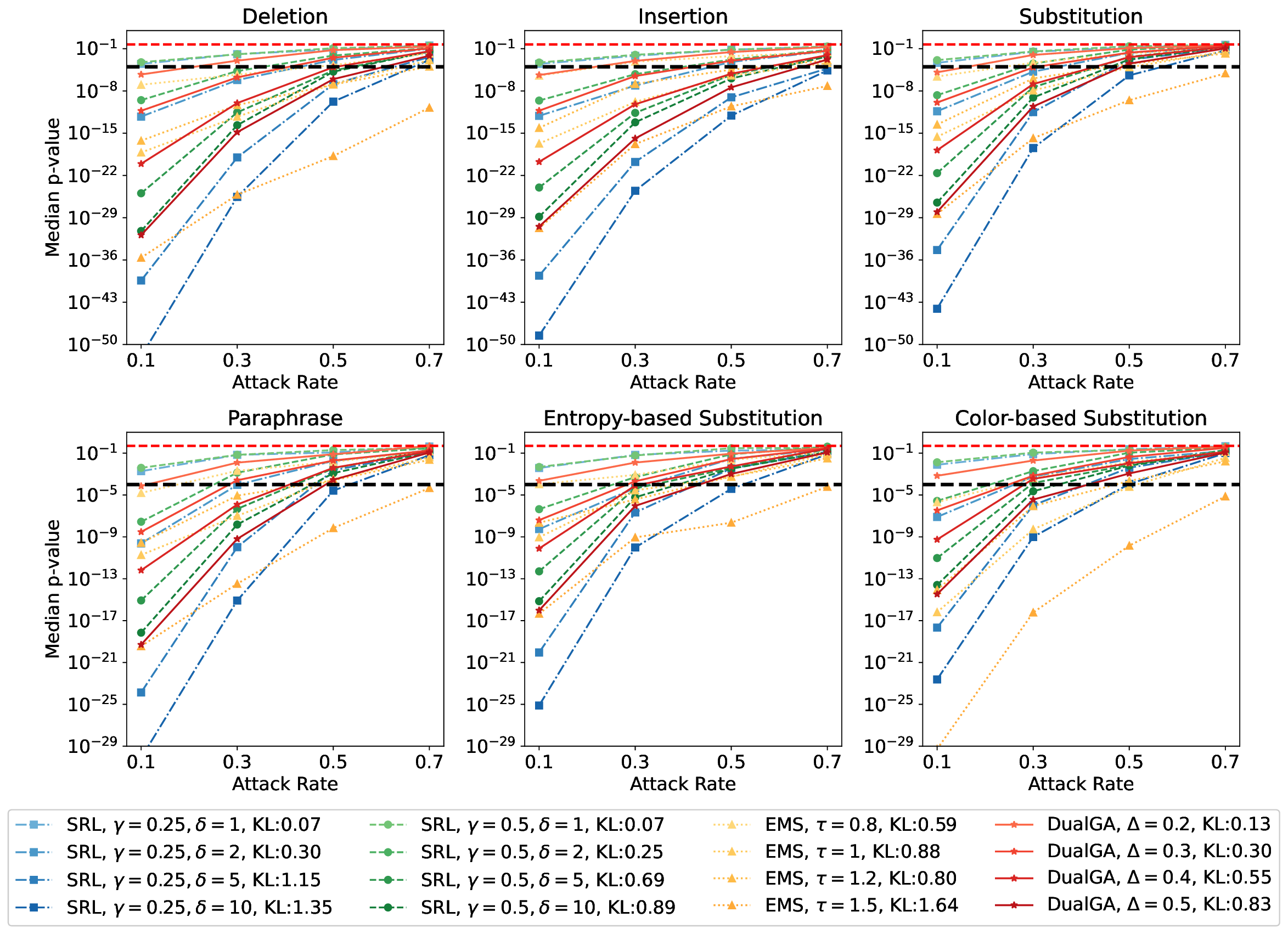}
\caption{The robustness of the detection ability across different types of attack on the LFQA dataset. The black dashed horizontal line is $p=10^{-4}$, and the red dashed line is $p=0.5$. Attack rate is the proportion of tokens that are deleted from, inserted into, or substituted/paraphrased from the original text. All algorithms lose their detection ability as the attack rate gets larger, but generally, watermarking configurations that induce heavier model distortion are more robust to attacks.}
\label{fig:robustness_6_attacks}
\end{figure}

\section{Conclusion}
We study watermarking for large language models through a statistical lens, casting the core trade-off between detection ability and model distortion as a constrained optimization problem, and showing that a simple, adaptive dual–gradient ascent procedure (DualGA) achieves near–Pareto-optimal performance. Our analysis further justifies why KL divergence is a more appropriate measure of model distortion than others. Empirically, DualGA delivers a stronger distortion–detection frontier than popular baselines across datasets and language models, maintains uniform per-prompt detection ability, and exhibits competitive robustness under common attack models.

\noindent \textbf{Conflict of Interest} The authors report there are no competing interests to declare.

% References
\bibliographystyle{informs2014}
\bibliography{main}

\newpage
\appendix
\section{Related Literature}
\paragraph{Watermarking Natural Languages.}
Unlike the computer vision tasks where the underlying data is \textit{continuous}, watermarking natural languages is deemed more difficult due to its discrete nature \citep{johnson2001information}. Traditional watermarking schemes try to transform the text into a whole new text to encode information, such as syntactic structure restructuring \citep{atallah2001natural}, paraphrasing \citep{atallah2002natural}, semantic substitution \citep{chiang2004natural}, synonym substitution \citep{topkara2006natural}, morphosyntactic alterations \citep{meral2009natural}, and so on. Recent advancements in modern language models such as Transformer \citep{vaswani2017attention} enable researchers to construct new approaches based on monitoring the token generation. \citet{aaronson2023watermarking, christ2023undetectable} propose an exponential minimum sampling algorithm that maintains the marginal distribution per token the same. When generating the $t$-th token, they use the previously observed $n$ tokens as a seed for the pseudo-random function to generate a sequence of uniformly distributed random variables $r_{t, k}$ over $[0, 1]$ for each $k \in |\mathcal{V}|$, and select the token to minimize $-\frac{\text{log}(r_{t,k})}{p_{t,k}}$. By showing that the marginal distribution of each token is identical to the original distribution when averaged across the whole space of $r_{t,k}$, the authors argue that such a property is desirable when conducting watermarking. Such a criterion is formulated in Definition \ref{def:margin_distort_free}. \citet{kuditipudi2023robust} later derive a watermarking scheme by assuming a sequence of secret keys so that each token can be uniquely assigned a pseudo-random function, resulting in a more robust watermarking at the cost of higher requirements of sharing the secret keys. \citet{kirchenbauer2023watermark} develop a soft watermarking scheme based on pseudo-randomly dividing the vocabulary into a red list and a green list and increasing the probabilities in the green list. \citet{zhao2023provable} propose fitting the red-green list across the whole sequence, leading to a provably more robust watermarking in comparison to the original one in \citet{kirchenbauer2023watermark}. \citet{fernandez2023three} point out that the $z$-score test used in previous research \citep{kirchenbauer2023watermark} relies on a fundamental assumption of i.i.d. tokens, which may not hold in some cases, leading to an underestimated false positive rate. As for the marginal distortion-free criterion (Definition \ref{def:margin_distort_free}), different follow-up works \citet{hu2023unbiased, wu2024resilient} are developed for the red-green list watermarking scheme of \citet{kirchenbauer2023watermark}.

\paragraph{Optimal Condition of Watermarking Language Models.} 
Different attempts have been made to certify the optimal condition of watermarking LLMs. The most straightforward way is to find the optimal trade-off between the type I and the type II errors \citep{huang2023towards}. The uniformly most powerful (UMP) watermark specified in \citet{huang2023towards} is a Neyman-Pearson type likelihood ratio test and requires the knowledge of the distribution of the full sequence output, which is impractical since the service provider does not generate the full distribution but in an autoregressive way and the detector cannot know the rejection region associated with some texts generated at some arbitrary time points. Pursuing the UMP watermarking algorithm is more of a theoretical interest rather than a practical one, suggesting that achieving the optimal Type I/Type II error trade-off may be difficult in practice. Some literature takes one step back by not designing the watermarking algorithm directly but the detection rules. \citet{li2024statistical} provides a framework for computing the asymptotic efficiency and examines the optimal condition of the detection rules under two watermarking algorithms \citet{aaronson2023watermarking, kuditipudi2023robust}. For the red-green list watermarking scheme, \citet{wouters2023optimizing} sets the goal to be maximizing the difference of the green list probability (DG) under the same intuition as ours that the larger the DG, the better the Type I/Type II error trade-off one can achieve. \citet{wouters2023optimizing} curves the Pareto optimum of the DG against the difference of the log-perplexity, arguing that the optimal strategy is to apply no watermarking at all when the expected change of log-perplexity is greater than some threshold while applying the hard watermarking in the opposite cases. While our Pareto optimum result measured by the KL divergence suggests that one should set the strength of the watermarking equal across the whole sequence, the proposal by \citet{wouters2023optimizing} is optimized against the difference of log-perplexity, which could be even negative and should not be regarded as a good measurement of distortion for the watermarking problem (See Section \ref{sec:KL_DISTORT}). \citet{huo2024token} proposes another objective of maximizing the semantic coherence as higher-level preservation of the original model. The algorithm minimizes the semantic loss against maximizing the DG yet lacks theoretical guarantees.

\paragraph{Post Hoc Detection.}
The difference between the statistical properties of machine-generated texts and human-generated texts has been noticed for a long time, and researchers are making various attempts to distinguish the machine-generated texts using those features. For instance, \citet{gehrmann2019gltr} build a model named GLTR to use the information of the histograms of the log-likelihoods to detect machine-generated texts. \citet{solaiman2019release} make a simpler proposal by inspecting the total log-likelihood of the whole sequence. Researchers also attempt to use another language model to distinguish the texts: \citet{ippolito2019automatic} employ a fine-tuned BERT \citep{devlin2018bert} to classify the texts; \citet{zellers2019defending} train a model called Grover to generate texts given titles and use the Grover model itself to detect the texts. A more recent example is \citet{tian2023gptzero} which detects the abnormally low variation in perplexity when evaluated by corresponding LLMs. \citet{mitchell2023detectgpt} observe that the outputs of LLMs tend to stay in the negative curvature regions of the log-likelihood functions and derive a classifier based on that observation.

\paragraph{Possibility/Impossibility Results.}
There are also some debates on whether the detection of machine-generated texts is possible. \citet{sadasivan2023can} prove that if the total variation distance (denoted by $\mathrm{TV}$) between the language model's distribution and the human's language distribution vanishes, then any detector cannot get a result better than a random decision with respect to AUROC. Their impossibility result shares the same spirit of Le Cam's method with our Proposition \ref{prop:info_lower_bound}, which could also be regarded as another proof supporting the argument that no detection ability could be obtained without model distortion. Later, \citet{chakraborty2023possibilities} mitigates the impossibility result by showing the term $1-\mathrm{TV}$ degrades exponentially fast as the number of tokens increases under the i.i.d. assumption. This implies the possibility of detection by increasing the sequence length under the circumstances that the language model deviates from humans by only a small margin per token. However, their result involving the total variation distance requires the unnatural i.i.d. assumption over the distribution per token. In contrast, our Proposition \ref{prop:KL_decompose} decomposes the joint KL divergence token-wise without any specifying assumption, supporting the natural choice of the KL divergence as model distortion measurement.

\section{Detection Ability and Robustness}
\label{sec:more_analyses}

In this section, we develop error guarantees for our algorithm/the optimization problems and discuss the robustness aspect of the algorithm.

\subsection{Type I and Type II Error}

When we present the optimization problems, the constraint is interpreted as a proxy for detection ability. Now we formally establish the connection between the cumulative difference in green word probability (DG) and the detection error. The result shows that such an optimization formulation and Algorithm \ref{alg:DualGA} provide not only a better theoretical guarantee but also an explicit handle for controlling the \textit{individual} guarantee for both types of error of the watermarked LM \textit{regardless of} the received prompt in contrast to previous algorithms.

As noted in \citet{fernandez2023three}, the $z$-test used in \citet{kirchenbauer2023watermark} only serves as an asymptotic approximation of the true positive rate. Thus, we follow the Beta distribution way in \citet{fernandez2023three} in practice. For theoretical simplicity, we present two types of error bounds based on setting an explicit threshold for the $z$-score on the green token numbers. We define the $z$-score as
\begin{equation}\label{eq:z_score}
z \gets \frac{|\bm{y}_{2:T}|_G - \gamma (T-1)}{\sqrt{(T-1)\gamma(1-\gamma)}},
\end{equation}
where $|\bm{y}_{2:T}|_G$ is defined by the count of green words from the 2-nd token to the $T$-th token. Then we have the following bound.

\begin{proposition}
\label{prop:type_I&II}
Suppose that the constraint of the optimization problem \eqref{opt:nofree} is met. Under Assumption \ref{assum:iid}, for any prompt $x$ and any decision threshold $z_0$ that a sequence is labeled as generated by the watermarked LM if $z \geq z_0$, we have
\[
\alpha_{\bm{p}}(\mathcal{D}, x) \leq \exp\big(-2(\gamma (1-\gamma) z_0^2)\big), \ \ \ 
\beta(\mathcal{D}, x) \leq \exp\big(-2((\Delta \sqrt{T-1} - \sqrt{\gamma (1-\gamma)} z_0)^2)\big).
\]
\end{proposition}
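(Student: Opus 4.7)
The plan is to observe that the decision rule $\{z \ge z_0\}$ is equivalent, by rearranging the $z$-score definition \eqref{eq:z_score}, to the count-based event $\{|\bm{y}_{2:T}|_G \ge \gamma(T-1) + \sqrt{(T-1)\gamma(1-\gamma)}\,z_0\}$. Both error bounds then reduce to one-sided Hoeffding-type deviation inequalities for the green-token count $|\bm{y}_{2:T}|_G$, which is a sum of $T-1$ indicators each lying in $\{0,1\}$, and the announced constants will drop out by direct simplification.

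For the Type I bound, take $\bm{y} \sim \bm{p}(x)$. Conditional on the history, $\mathbbm{1}\{y_t \text{ green at }t\}$ is Bernoulli with parameter $G_t$, and under Assumption \ref{assum:iid}(a) the marginal mean of $G_t$ over the pseudorandom partition is $\gamma$, so $\mathbb{E}_{\bm{p}}[|\bm{y}_{2:T}|_G] = \gamma(T-1)$. Hoeffding's one-sided upper tail inequality for a sum of $T-1$ variables valued in $[0,1]$ gives
\begin{equation*}
\Pr_{\bm{p}}\bigl(|\bm{y}_{2:T}|_G \ge \gamma(T-1) + s\bigr) \le \exp\!\bigl(-2 s^{2}/(T-1)\bigr),
\end{equation*}
and substituting $s = \sqrt{(T-1)\gamma(1-\gamma)}\,z_0$ immediately yields the stated Type I bound $\exp(-2\gamma(1-\gamma)z_0^2)$.

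For the Type II bound, now take $\bm{y} \sim \bm{q}(x)$; conditional on history, $\mathbbm{1}\{y_t \text{ green}\}$ is Bernoulli with parameter $G_t + \mathrm{DG}_t(\delta_t)$, the increment being the quantity defined in \eqref{eq:define_DG}. Feasibility of \eqref{opt:nofree} guarantees $\sum_{t=1}^{T}\mathrm{DG}_t(\delta_t) \ge T\Delta$, so, after absorbing the single $t=1$ boundary term, $\mathbb{E}_{\bm{q}}[|\bm{y}_{2:T}|_G] \ge \gamma(T-1) + (T-1)\Delta$. The event $\{z < z_0\}$ is therefore contained in a downward deviation of $|\bm{y}_{2:T}|_G$ from its mean of size at least $(T-1)\Delta - \sqrt{(T-1)\gamma(1-\gamma)}\,z_0$, and Hoeffding's lower tail inequality delivers
\begin{equation*}
\beta(\mathcal{D},x) \le \exp\!\Bigl(-2\bigl((T-1)\Delta - \sqrt{(T-1)\gamma(1-\gamma)}\,z_0\bigr)^{2}/(T-1)\Bigr) = \exp\!\Bigl(-2\bigl(\Delta\sqrt{T-1} - \sqrt{\gamma(1-\gamma)}\,z_0\bigr)^{2}\Bigr).
\end{equation*}

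The main subtlety is that the indicators $\mathbbm{1}\{y_t \text{ green}\}$ are not jointly independent, since the partition at position $t$ is a deterministic function of $y_{t-1}$ and $y_t$ depends on the entire past. I would handle this by passing to the martingale-difference sequence $\mathbbm{1}\{y_t \text{ green}\} - \mathbb{E}[\mathbbm{1}\{y_t \text{ green}\}\mid \mathcal{F}_{t-1}]$, whose increments lie in $[-1,1]$, so that Azuma-Hoeffding replaces vanilla Hoeffding with the same constant $2/(T-1)$ in the exponent and both tail bounds survive verbatim. In this setup, Assumption \ref{assum:iid}(a) is used only to pin the unconditional conditional mean under $\bm{p}$ at $\gamma$ and to write $\sum_t \mathbb{E}[G_t] = (T-1)\gamma$ cleanly inside the Type I mean computation; everything else is driven by the feasibility constraint on $\sum_t \mathrm{DG}_t(\delta_t)$ and the bounded-indicator concentration.
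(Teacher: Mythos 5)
Your proof takes essentially the same route as the paper's: both bounds follow from applying Hoeffding-type concentration to the count of green tokens in $\bm{y}_{2:T}$, after rewriting the decision event $\{z \geq z_0\}$ in terms of that count. The paper's proof is terse --- it simply asserts that the indicators $\mathbbm{1}\{\text{$t$-th token is green}\}$ are independent Bernoulli variables and cites Hoeffding's inequality. You are right to flag that this independence is not automatic, especially under $\bm{q}$ where the sampled token depends on the green list, which in turn depends on the previous token; under the null $\bm{p}$ the claim is more defensible because the text is generated without reference to the key and the green/red coloring is the only source of randomness, but under $\bm{q}$ the martingale reduction is the correct repair. Your Azuma--Hoeffding step does produce the same exponential constant, but only because the increment $\mathbbm{1}\{y_t \text{ green}\} - \mathbb{E}[\,\cdot\,|\mathcal{F}_{t-1}]$ is confined to an interval of width one (namely $[-p_t, 1-p_t]$); as literally phrased, ``increments lie in $[-1,1]$'' would, with the usual form of Azuma, cost a factor of four in the exponent, so you need the sharper one-sided form that tracks interval widths rather than absolute bounds.

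One small residual gap --- present in the paper's argument as well --- is the index offset in the Type II bound. The feasibility constraint reads $\frac{1}{T}\sum_{t=1}^{T}\mathrm{DG}_t(\delta_t) \geq \Delta$ over $T$ positions, while the $z$-statistic is computed from positions $2$ through $T$. From the constraint you can only conclude $\sum_{t=2}^{T}\mathrm{DG}_t(\delta_t) \geq T\Delta - \mathrm{DG}_1(\delta_1) \geq T\Delta - 1$, which is strictly less than $(T-1)\Delta$ whenever $\Delta<1$. This perturbs $\Delta\sqrt{T-1}$ by an additive $O(1/\sqrt{T-1})$ term that vanishes asymptotically but does not yield the displayed inequality exactly. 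To recover it verbatim one should either read the feasibility constraint over $t=2,\dots,T$ (consistent with the $z$-score's indexing) or carry the $-1/\sqrt{T-1}$ correction explicitly. Your phrase ``after absorbing the single $t=1$ boundary term'' acknowledges this but does not actually resolve it in the direction the bound requires.
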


Proposition \ref{prop:type_I&II} states that if we fix the type I error rate $\alpha$, the type II error rate decreases exponentially fast in terms of $\Delta^2 T$. Importantly, this result entails an individual control of the actual $\Delta$, i.e, to meet the constraint of the optimization problem for each prompt, LM, and $\gamma$. This requires solving the optimization program \eqref{opt:nofree} in hindsight. While this is impossible generally, our algorithm ensures the constraint is violated up to the order of $\frac{1}{\sqrt{T}}$. Comparatively, this logic also explains why 
\citet{kirchenbauer2023watermark}'s heuristic and fixed choice of $\delta$ does not lead to any type II error rate \textit{until} it has witnessed all the realized tokens; such an intuition is also shown in the original type II error analysis of \citet{kirchenbauer2023watermark}, which relies on the average entropy of the token sequence and cannot be guaranteed until the full sequence is generated. 

%To derive a type II error guarantee without dependence on the full observation of the sequence, one could only solve the optimization problem \eqref{opt:nofree} in an online fashion, which leads to our algorithm DualGA (Algorithm \ref{alg:DualGA}). \shang{Why do we need an individual guarantee? I am not so sure... perhaps the reviewer would argue that an individual guarantee for those very determined token sequences is undesirable because it would twist the model very much.}

\begin{figure}[ht!]
\begin{adjustbox}{valign=t}
\begin{minipage}{0.9\textwidth}
\centering
\includegraphics[width=0.8\textwidth]{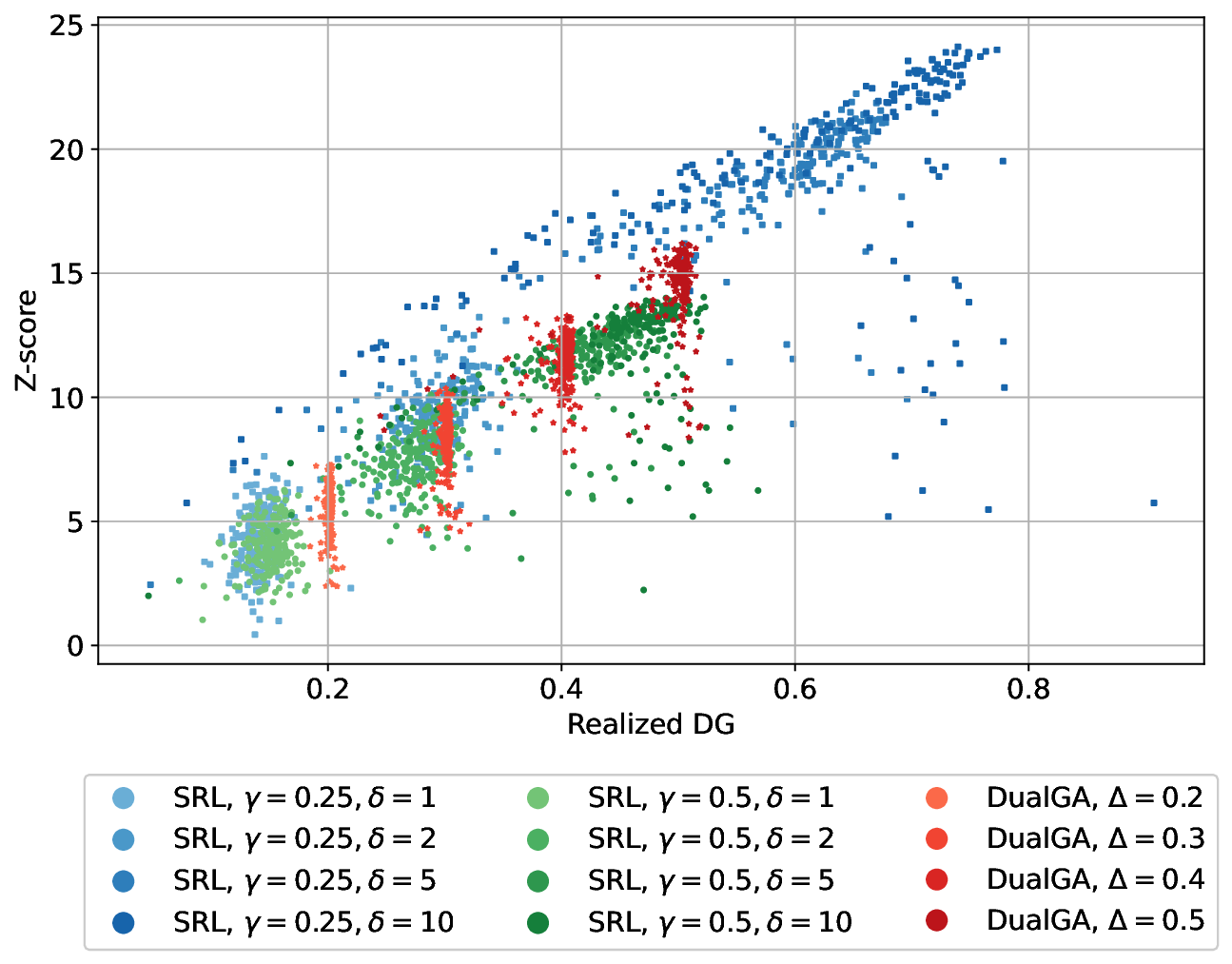}
\caption{The scatter plot of $z$-score v.s. realized DG for different algorithms. SRL stands for the algorithm in \cite{kirchenbauer2023watermark} and DualGA stands for our Algorithm \ref{alg:DualGA} under different parameter combinations. Each point represents one generated sequence, and for each algorithm, 200 sequences are generated.}
    \label{fig:LFQA_scatter}
\end{minipage}
\end{adjustbox}
\end{figure}

Figure \ref{fig:LFQA_scatter} provides a visualization of the relationship between the realized DG and the $z$-score. First, we observe that a larger realized DG corresponds to a larger $z$-score which justifies the choice of using DG as the constraint of the optimization problem. Second, we observe that our algorithm, though implemented in an online fashion, meets the (offline) constraint well (that the realized DG $\geq \Delta$). Third, our adaptive choice of $\delta$ gives a more precise control of the realized DG and the $z$-score. For any combination of $\gamma$ and $\delta$, the sequences generated by SRL have a quite dispersed DG and realized $z$-score.  

%A similar trend for all three algorithms is that the $z$-score approximately increases with DG in a linear way. This result provides empirical evidence that supports our usage of DG as a proxy for controlling the watermarking strength, i.e. larger DG implies that the $z$-score tends to be larger, and the watermark is more detectable.

%\subsection{Model distortion}

%Our theoretical framework shows that the red-green list scheme indeed reaches a better model distortion-watermarking ability trade-off compared to exponential minimum sampling (EMS). For instance, EMS's guaranteed average difference of probability \citep{aaronson2023watermarking} is $(\frac{\pi^2}{6} - 1) \mathrm{Ent} \approx 0.645 \cdot \mathrm{Ent}$ with average KL divergence of $\mathrm{Ent}$. It is easy to see that as $\delta \rightarrow \infty$ (note that $\mathrm{DG}/D_{\mathrm{kl}}$ is monotonically decreasing w.r.t. $\delta$), the soft watermarking scheme has a DG of $1-G_t$ and a KL divergence of $-\log(G_t)$. By setting $\gamma = 0.5$ and assuming most $G_t \approx \gamma$ via concentration, we can easily have a ratio of $\frac{\mathrm{DG}}{D_{\mathrm{kl}}} \approx \frac{0.5}{-\log(0.5)} \approx 0.721$, which is better than exponential minimum sampling's guarantee. \shang{Need refinement.}

\subsection{Robustness}
%\shang{Actually I don't think this subsection is very useful. Percy's paper \citep{kuditipudi2023robust} writes the robustness guarantee because their algorithm is very robust to insertion, deletion, and substitution: they generate a sequence of secret keys $\xi_t$'s, and the $p$-value of the token sequence is computed via $\Pi_{t=1}^T P(s_t > s | \xi_t)$, where $s_t$ is some statistics of $y_t$. Even if you edit half of the tokens (say, the second half), the unedited tokens still produce very small $p$-values after multiplication for at most the square-root of the original one. To get an intuition, \citet{kuditipudi2023robust}'s $p$-value is approximately $p^{1-l}$, where $l$ is the ratio of edition. But in our model setup, edition does harm to the $\Delta$ linearly since we are using the $p$-value of the normalized mean; if the adversary can edit at least $\Delta$ proportion of the original text, then the watermark could be in vain. If it is necessary to provide some guarantees, then it's fine to keep this subsection; otherwise, this subsection seems a little bit trivial.}

The following proposition states the robustness of our watermarked algorithm. Specifically, it says how the detection ability changes under different adversarial attacks that aim to undermine the effectiveness of the watermarking algorithm.

\begin{proposition}
\label{prop:adv_robust}
Consider the following three adversarial attacks:

\begin{itemize}
\item Deletion: One adversarially deletes a certain length of the token sequence from the sequence generated by the watermarked LM of Algorithm \ref{alg:DualGA}. Let the length of the deleted sequence being $T^\prime = l \cdot T$, and define
\[
\Delta^\prime = \frac{\Delta - l + l\gamma}{1-l}.
\]
\item Insertion: One adversarially inserts a certain length of token sequence into the sequence generated by the watermarked LM of Algorithm \ref{alg:DualGA}. Let the inserted sequence length $T^\prime = l \cdot T$, and define
\[
\Delta^\prime = \frac{\Delta - l \gamma - \frac1T}{1+l}.
\]
\item  Edit: One adversarially edits a certain length of the token sequence generated by the watermarked LM of Algorithm \ref{alg:DualGA}. Let the edited sequence length $T^\prime = l \cdot T$, and define
\[
\Delta^\prime = \Delta - l - \frac1T.
\]
\end{itemize}
The error bound in Proposition \ref{prop:type_I&II} holds for $\Delta'$ instead of $\Delta$ and new sequence length correspondingly.
\end{proposition}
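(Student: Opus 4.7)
The plan is to reduce each of the three adversarial attacks to an instance of Proposition \ref{prop:type_I&II} applied with a new pair $(\Delta',T')$ in place of $(\Delta,T)$. The common scaffolding has three steps: (i) translate the DG constraint into a lower bound on the green-token count of the un-attacked sequence; (ii) for each attack type, upper-bound the worst-case loss in green tokens and the change in total length; (iii) re-normalize to obtain the stated $\Delta'$, after which the $z$-score bounds in Proposition \ref{prop:type_I&II} carry over verbatim.

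For step (i), the constraint $\frac{1}{T}\sum_{t=1}^T \mathrm{DG}_t(\delta_t) \ge \Delta$ combined with $\mathbb{E}[G_t]=\gamma$ under Assumption \ref{assum:iid} gives that the watermarked sequence has expected green-token count at least $T(\gamma+\Delta)$; a Hoeffding-type concentration converts this into a high-probability lower bound on the realized green count, so the remaining task is purely combinatorial bookkeeping of how many green positions the attack can destroy. We take the attack to be a contiguous block at an adversarially chosen location, which is the natural worst case for the formulas stated in the proposition.

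The case analysis then proceeds as follows. For \emph{deletion} of a block of length $lT$, the worst case makes every deleted token green, so the green count drops by at most $lT$ to $T(\gamma+\Delta)-lT$ over a new length $T(1-l)$; the new excess green rate is $\frac{T(\gamma+\Delta)-lT}{T(1-l)}-\gamma = \frac{\Delta-l+l\gamma}{1-l}$, which is exactly the stated $\Delta'$. For \emph{insertion} of a block of length $lT$, the worst case has none of the inserted tokens land in the green list under the detector's hash, while the hash cascade at the boundary between the insertion and the resumed original sequence can void at most one originally green token; the green count is therefore at least $T(\gamma+\Delta)-1$ over a new length $T(1+l)$, which rearranges to $\Delta' = \frac{\Delta-l\gamma-1/T}{1+l}$. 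For \emph{edit} of a block of length $lT$, the length is preserved and each edited position can be pushed to red, with an additional single boundary loss from the cascade at the right end of the block; this yields a green count of at least $T(\gamma+\Delta)-lT-1$ and hence $\Delta' = \Delta - l - 1/T$. Substituting $(\Delta',T')$ into Proposition \ref{prop:type_I&II} completes the proof in each case.

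The main obstacle is the careful worst-case accounting of hash cascades. Because the green-list partition at position $t$ is determined by a pseudorandom hash of $y_{t-1}$, each local modification perturbs not only its own green-status but potentially that of the immediately following position; for contiguous-block attacks this cascade is confined to a single boundary token, which is precisely what produces the $1/T$ slack in the insertion and edit formulas. Once this bookkeeping is pinned down, everything else is algebraic rearrangement and direct substitution into Proposition \ref{prop:type_I&II}.
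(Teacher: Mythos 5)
Your case analysis is exactly the paper's: all deleted tokens assumed green, all inserted tokens red with a single boundary cascade, and all edits flipping green to red with a single boundary cascade, followed by renormalization and substitution into Proposition~\ref{prop:type_I&II}. The only stylistic wrinkle is that the paper skips the intermediate Hoeffding step you invoke in (i): it observes that the post-attack tokens remain Bernoulli and lower-bounds the post-attack \emph{expected} green count directly, so the concentration of Proposition~\ref{prop:type_I&II} is applied once rather than layered on top of a separate concentration bound for the pre-attack realized count (which would introduce extra slack not present in the stated $\Delta'$).
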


The proof of the proposition follows the same logic as that of Proposition \ref{prop:type_I&II}. It implies that the watermarked text is robust against an adversarial attack up to linearly many changes to the generated text.

\section{Proofs and Discussions in Section \ref{sec:Pareto_optimal}}
\label{apd:proof_sec3}
\subsection{Proof of Proposition \ref{prop:free_k}}
\begin{proof}
We consider the following one-step optimization program with respect to $\delta_{t,1}, \dots, \delta_{t,|\mathcal{V}|}$ where we omit the subscript of $t$ for notation simiplicity:
\begin{align}
\begin{split}
\min_{\delta_1, \dots, \delta_{|\mathcal{V}|} \in [-\infty, +\infty]} \quad & D_{\mathrm{kl}}(\delta_1,\dots,\delta_{|\mathcal{V}|}) \\
\text{s.t. }& \mathrm{DG}(\delta_1,\dots,\delta_{|\mathcal{V}|}) \geq \Delta.
\end{split}
\label{opt:one_step_k}
\end{align}
By checking the Lagrangian dual, we have the following lemma.
\begin{lemma}
\label{lemma:one_step_k}
Denote the optimal solution of the optimization program \eqref{opt:one_step_k} by $\delta_1^*, \dots, \delta_{|\mathcal{V}|}^*$. Assume the program is feasible with $\Delta \leq \sum_{k \text{ is red}} p_k$. Then
\[
\delta_1^* = \dots = \delta_{|\mathcal{V}|}^*.
\]
\end{lemma}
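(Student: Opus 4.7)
The plan is to reduce the $|\mathcal{V}|$-dimensional problem to a one-dimensional inner subproblem that is handled by a single application of Jensen's inequality. First, I would note that the perturbations $\delta_k$ for red-list words do not appear in the expression for $q_k$ at all (the indicator in Step~4 of Algorithm~\ref{alg:kirchen} vanishes), so they enter neither $D_{\mathrm{kl}}$ nor $\mathrm{DG}$; they may be set to the common green value at the end. Introducing the reparameterization $w_k := \exp(\delta_k)$ for $k$ in the green list $G$, and writing $P_G := \sum_{k\in G} p_k$, a direct calculation gives $q_k = p_k w_k / Z$ for $k\in G$ and $q_k = p_k / Z$ for $k\notin G$, where $Z := \sum_{k \in G} p_k w_k + (1-P_G)$.

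Second, and this is the key structural observation, I would show that $\mathrm{DG}$ depends on $(w_k)_{k\in G}$ only through the scalar $S := \sum_{k\in G} p_k w_k$. Summing the above gives $\sum_{k\in G} q_k = S/Z$, so $\mathrm{DG} = S/Z - P_G = (1-P_G)(S-P_G)/Z$, a strictly increasing function of $S$. On the other hand, the KL divergence can be written as $D_{\mathrm{kl}} = \tfrac{1}{Z}\sum_{k\in G} p_k w_k \log w_k - \log Z$, and since $Z$ is pinned down by $S$, the program splits cleanly into an outer optimization over the scalar $S$ (subject to $\mathrm{DG}(S)\ge \Delta$, which under the feasibility hypothesis $\Delta \le 1-P_G$ defines a nonempty interval) and an inner minimization of $\sum_{k\in G} p_k w_k \log w_k$ subject to $\sum_{k\in G} p_k w_k = S$.

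Third, for the inner problem I would apply Jensen's inequality to the convex function $f(x) = x\log x$ against the probability measure $\mu_k := p_k/P_G$ on $G$, obtaining $\sum_{k\in G} p_k w_k \log w_k = P_G\,\mathbb{E}_\mu[f(w)] \ge P_G\, f(\mathbb{E}_\mu[w]) = S\log(S/P_G)$, with equality if and only if $w_k$ is constant on the support of $\mu$. Hence for every feasible $S$, the inner minimizer takes all green-word perturbations equal to $\log(S/P_G)$. Choosing the optimal $S^*$ from the (one-dimensional) outer problem and then setting the red-word perturbations to the same value $\log(S^*/P_G)$ yields an optimal $(\delta_1^*,\dots,\delta_{|\mathcal{V}|}^*)$ with all entries equal, which is the claim.

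The main obstacle is the key aggregation observation in the second step: despite the apparent coupling of the $w_k$ through both $\mathrm{DG}$ and $D_{\mathrm{kl}}$, the constraint depends only on the aggregate $S$, which enables the decomposition. Once this is in hand, convexity of $x\log x$ finishes the argument without any direct KKT manipulation over $|\mathcal{V}|$ variables; the residual checks (that the outer problem has a minimizer under the feasibility hypothesis, and that boundary $\delta_k\in\{\pm\infty\}$ can be handled by a limiting argument) are routine.
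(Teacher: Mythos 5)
Your proof is correct, and it takes a genuinely different route from the paper's. The paper first relaxes the problem by replacing the softmax parameterization with arbitrary multipliers $r_k = q_k/p_k$ (constrained only by $\sum_k r_k p_k = 1$), writes the two-multiplier Lagrangian with dual variables for both the DG constraint and the normalization constraint, solves the first-order conditions for a closed-form dual optimum, and then verifies strong duality by matching primal and dual values; the structure of the optimizer (constant within each list) emerges from reading off the closed form. Your argument instead stays in the original parameterization, uses the observation that $\mathrm{DG}$ and the normalizer $Z$ are functions of the single scalar $S=\sum_{k\in G}p_k w_k$ to peel off a one-dimensional outer problem, and then dispatches the inner problem with one application of Jensen to $x\log x$, avoiding any KKT machinery over $|\mathcal{V}|$ variables. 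What the paper's route buys is that it also produces the Lagrangian dual function and the optimal dual pair $(\lambda^*,\mu^*)$ explicitly, which is recycled in the analysis of Proposition~\ref{prop:free_t} and Lemma~\ref{lemma:Lagrangian_dual} (the dual is the object the online algorithm actually ascends on). What your route buys is a shorter and more transparent proof of the structural claim itself, and it makes visible the key aggregation fact --- that the constraint couples the green perturbations only through their $p$-weighted average --- which the paper's duality computation passes through silently. One minor remark: as you correctly note, the red-word $\delta_k$ never enter $\bm{q}$, so the lemma's conclusion is best read as asserting the existence of an optimizer with all coordinates equal rather than uniqueness over the full vector; your final step of setting red perturbations equal to the green value is the appropriate canonical choice, and the paper's argument implicitly does the same when it projects to the $r_k$ coordinates.
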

With the above lemma in mind, we shall easily see why the optimal solution must be in the form of
\[\delta_{t,1}^* = \cdots = \delta_{t,|\mathcal{V}|}^*, \quad \forall t \in [T].\]
The reason is straightforward: if there were an optimal solution $\delta^\prime$ deviating from the above form, say, $\exists t_0 \in [T]$, such that the above equality does not hold for $t_0$. Then, one can easily get another feasible solution $\delta''$ by setting
\[
\delta''_{t,k} \coloneqq
\begin{cases}
\tilde{\delta}_{t_0,k}, \quad &\text{if } t=t_0,\\
\delta^\prime_{t,k}, \quad &\text{otherwise},
\end{cases}
\]
where $\tilde{\delta}_{t_0}$ is the optimal solution of the following program
\begin{align*}
\min_{\delta_1, \dots, \delta_{|\mathcal{V}|}} \quad & D_{\mathrm{kl},t_0}(\delta_1,\dots,\delta_{|\mathcal{V}|}) \\
\text{s.t. }& \mathrm{DG}_{t_0}(\delta_1,\dots,\delta_{|\mathcal{V}|}) \geq \mathrm{DG}_{t_0}(\delta^\prime_{t_0,1},\dots,\delta^\prime_{t_0,|\mathcal{V}|}).
\end{align*}
From Lemma \ref{lemma:one_step_k}, we know that $\delta^\prime_{t_0, 1},\dots,\delta^\prime_{t_0, |\mathcal{V}|}$ cannot be the optimal solution of the above program. Thus, the newly derived solution $\delta''$ is strictly better than $\delta^\prime$, which contradicts the assumption.
\end{proof}

\begin{proof}[Proof of Lemma \ref{lemma:one_step_k}]
First, we transform the optimization problem \eqref{opt:one_step_k} into another equivalent form, where each probability $p_k$ is watermarked to be $r_k p_k$.
\begin{align}
\begin{split}
\min_{r_1,\dots,r_{|\mathcal{V}|}} \quad & \sum_{k=1}^{|\mathcal{V}|} r_k p_k \log(r_k)\\
\text{s.t. }& \sum_{k\in [|\mathcal{V}|], k \text{ is green}} (r_k - 1) p_k \geq \Delta,\\
& \sum_{k=1}^{|\mathcal{V}|} (r_k - 1) p_k = 0.
\end{split}
\label{opt:transformed_one_step_k}
\end{align}
We write down the Lagrangian of the optimization program \eqref{opt:transformed_one_step_k} as
\begin{align*}
&L(r_1, \dots, r_{|\mathcal{V}|}, \lambda, \mu) \\ 
& \coloneqq \sum_{k} r_k p_k \log(r_k) + \lambda (\Delta - \sum_{k \text{ is green}} (r_k - 1) p_k) + \mu (\sum_{k} (r_k - 1) p_k)\\
& = \sum_{k \text{ is green}} r_k p_k (\log(r_k) - \lambda + \mu) + \sum_{k \text{ is red}} r_k p_k (\log(r_k) + \mu) + \lambda(\Delta + \sum_{k \text{ is green}} p_k) - \mu \sum_{k} p_k\\
& = G \cdot r_k(\log(r_k) -\lambda + \mu) + R \cdot r_k(\log(r_k) + \mu) + \lambda (\Delta + G) - \mu (G+R),
\end{align*}
where we have denoted $\sum_{k \text{ is green}} p_k$ by $G$ and $\sum_{k \text{ is red}}$ by $R$ (since we assume feasibility, we automatically have $G+R = 1$ and $\Delta \leq R$).
Then we can write the Lagrangian dual function as
\[
\phi(\lambda, \mu) \coloneqq \inf_{r_1, \dots, r_{|\mathcal{V}|}} L(r_1, \dots, r_{|\mathcal{V}|}, \lambda, \mu),
\]
where $\lambda \geq 0$ and $\mu \in \mathbb{R}$.

We derive the first-order partial derivative of Lagrangian to be
\[\frac{\partial L}{\partial r_k} = 
\begin{cases}
p_k\big(\log(r_k) + 1 - \lambda + \mu \big), \quad & k \text{ green},\\
p_k\big(\log(r_k) + 1 + \mu \big), \quad & k \text{ red}.
\end{cases}\]
Then the supremum (note that the supremum is taken for unconstrained $r_k \geq 0$ now) is reached when the first-order condition is satisfied, that is
\[
r_k^* = 
\begin{cases}
\exp( -1 + \lambda - \mu), \quad & k \text{ green},\\
\exp( -1 - \mu), \quad & k \text{ red}.
\end{cases}
\]
Substituting into the Lagrangian, we have the Lagrangian dual function as
\[
\phi(\lambda, \mu) = -G e^{-1+\lambda-\mu} - R e^{-1-\mu} + \lambda(\Delta + G) - \mu(G+R).
\]
Note that the dual function is always concave, thus we can solve the dual problem
\begin{align*}
\max_{\lambda \geq 0, \mu \in \mathbb{R}} \phi(\lambda, \mu)
\end{align*}
by inspecting the first-order condition
\[
\begin{cases}
e^{-1+\lambda-\mu} G + e^{-1-\mu} R - (G+R) = 0,\\
-e^{-1+\lambda-\mu} + \Delta + G = 0.
\end{cases}
\]
The dual optimum is attained at
\[
\begin{cases}
\lambda^* = \log\big(\frac{(\Delta+G)R}{(R-\Delta)G}\big),\\
\mu^* = -1+\log\big(\frac{R}{R-\Delta}\big),
\end{cases}
\]
where the optimum is
\[\phi(\lambda^*, \mu^*) = (R-\Delta)\log\left(\frac{R-\Delta}{R}\right) + (\Delta+G)\log\left(\frac{\Delta+G}{G}\right).\]

To prove the strong duality, one just needs to notice that
\[\phi(\lambda^*,\mu^*) = \sum_{k=1}^{|\mathcal{V}|} r^*_k p_k \log(r^*_k).\]

Therefore, the optimal solution of the optimization program \eqref{opt:transformed_one_step_k} satisfies $r_1^* = \dots = r_{|\mathcal{V}|}^*$, which concludes the proof.
\end{proof}
\begin{remark}
A quick note on the optimization problem \eqref{opt:transformed_one_step_k} is that the primal program itself is a convex program, yet Slater's condition does not necessarily hold if $\Delta = R$. This case corresponds to the original program's case of $\delta_{k} = +\infty$ for some green $k$. Thus, if we rule out the possibility of DG constraint $\Delta$ to be no less than $R$, the strong duality can be directly derived from Slater's condition.
\end{remark}

\subsection{Proof of Proposition \ref{prop:free_t}}
\label{subapd:proof_free_t}
\begin{proof}
Similar to the notations in the proof of Proposition \ref{prop:free_k}, we write down the optimization program as
\begin{align*}
\min_{\delta_1,\dots,\delta_T} \quad & \frac{1}{T} \sum_{t=1}^T \Big[\frac{G_t \delta_t \cdot e^{\delta_t}}{G_t e^{\delta_t} - G_t + 1} - \log(G_t e^{\delta_t} - G_t + 1)\Big]\\
\text{s.t. } & \frac{1}{T} \sum_{t=1}^T \frac{G_t(1-G_t)(e^{\delta_t} - 1)}{G_t e^{\delta_t} - G_t + 1} \geq \Delta.
\end{align*}
The Lagrangian of the program is
\[
L(\delta_1, \dots, \delta_T, \lambda) \coloneqq \sum_{t=1}^T \frac{G_t \delta_t \cdot e^{\delta_t}}{G_t e^{\delta_t} - G_t + 1} - \log(G_t e^{\delta_t} - G_t + 1) + \lambda\cdot \left(\Delta - \frac{1}{T} \sum_{t=1}^T \frac{G_t(1-G_t)(e^{\delta_t} - 1)}{G_t e^{\delta_t} - G_t + 1}\right).
\]
The next step is to compute the Lagrangian dual
\[
\phi(\lambda) \coloneqq \inf_{\delta_1, \dots, \delta_T \in [-\infty, +\infty]} L(\delta_1,\dots,\delta_T,\lambda).
\]
We first check the first-order derivative
\[
\frac{\partial L}{\partial \delta_t} = \frac{1}{T} \left[\frac{\partial D_{\mathrm{kl},t}}{\partial \delta_t} - \lambda \frac{\partial \mathrm{DG}_t}{\partial \delta_t}\right].
\]
Since the exponential transformation keeps the monotonicity, we transform the partial derivative into the form of $\nu_t = e^{\delta_t}$ as
\begin{align*}
\frac{\partial L}{\partial \nu_t} & = \frac{1}{T} \left[ \frac{\partial D_{\mathrm{kl},t}}{\partial \nu_t} - \lambda \frac{\partial \mathrm{DG}_t}{\partial \nu_t} \right] \\
& = \frac{1}{T} \cdot \frac{G_t(1-G_t)}{(G_t \nu_t- G_t + 1)^2} \cdot (\log(\nu_t) - \lambda).
\end{align*}
Thus we can see explicitly that the first-order condition suffices to reach the minimum with $\nu_t^* = e^\lambda$, equal to
\[
\delta_t^* = \lambda.
\]
We substitute it into the Lagrangian to derive the expression of the Lagrangian dual:
\begin{align*}
\phi(\lambda) & = L(\lambda, \dots, \lambda, \lambda) \\
& =\lambda \cdot \left(\frac{1}{T} \sum_{t=1}^T G_t + \Delta \right) - \frac{1}{T} \sum_{t=1}^T \log(G_t e^{\lambda} - G_t + 1).
\end{align*}
To derive the supremum of the Lagrangian dual, we calculate its first-order derivative as
\begin{align*}
\frac{\partial \phi}{\partial \lambda} & = \Delta - \frac{1}{T} \sum_{t=1}^T \frac{G_t(1-G_t)(e^{\lambda} - 1)}{G_t e^{\lambda} - G_t + 1} \\
& = \Delta - \frac{1}{T} \sum_{t=1}^T \mathrm{DG}_t(\lambda).
\end{align*}
Since we have assumed the feasibility of the primal program, we have
\[
 \frac{1}{T} \sum_{t=1}^T \mathrm{DG}_t(\lambda)\ge \Delta.
\]
Combining it with the fact that $\mathrm{DG}_t$ is a monotonically increasing function, we conclude that the unique supremum of the dual function is reached when $\lambda = \lambda^* > 0$ such that
\[
\frac{1}{T} \sum_{t=1}^T \mathrm{DG}_t(\lambda^*) = \Delta.
\]
Substituting it into the expression of the Lagrangian dual, we have
\[
\phi(\lambda^*) = \sum_{t=1}^T D_{\text{kl}, t} (\lambda^*).
\]
Note that if we set all $\delta_1, \dots, \delta_T$ to be $\lambda^*$, then we have the primal target function to be exactly the same as $\phi(\lambda^*)$. We now conclude that the strong duality holds. Therefore, the optimum of the primal program is reached at
\[
\delta_1^* = \dots = \delta_T^* = \lambda^*.
\]
\end{proof}
\begin{remark}
We note that the strong duality can also be derived in a similar way to that in the proof of Proposition \ref{prop:free_k}. To see that, we can transform the program with respect to $\delta$ to the probability ratio $r$, forming a convex program. Then if we assume $\Delta < \frac1T \sum_{t=1}^T R_t$, we can directly get the strong duality from Slater's condition.
\end{remark}

\subsection{Proof of Lemma \ref{lemma:Lagrangian_dual}}
\begin{proof}[Proof of Part (a)]
We first examine the first-order condition of $L(\delta, \lambda)$ w.r.t. $\delta$. Since $\nu \coloneqq e^{\delta}$ is monotonically increasing w.r.t. $\delta$, we can check the first-order condition w.r.t. $\nu$ instead:
\begin{align*}
\frac{\partial L}{\partial \nu} & = \frac1T \sum_{t=1}^T \frac{\partial D_{\mathrm{kl},t}}{\partial \nu} - \lambda \cdot \frac1T \sum_{t=1}^T \frac{\partial \mathrm{DG}_t}{\partial \nu}\\
& = \frac1T \sum_{t=1}^T \frac{(1-G_t)G_t}{(G_t \nu - G_t + 1)^2} \cdot (\log(\nu) - \lambda)\\
& = \frac1T \sum_{t=1}^T \frac{(1-G_t)G_t}{(G_t \nu - G_t + 1)^2} \cdot (\delta - \lambda).
\end{align*}
Since $G_t \in [0, 1]$ and $\nu \geq 0$, we have the conclusion that the minimum of $L$ w.r.t. $\delta$ is taken when setting $\delta \leftarrow \lambda$, implying
\[
g(\lambda) = \inf_{\delta} L(\delta, \lambda) = L(\lambda, \lambda).
\]
\end{proof}

\begin{proof}[Proof of Part (b)]
The token-wise conclusion (for example, for $t_0$) can be easily verified by setting all $D_{\mathrm{kl}, t}$ and $\mathrm{DG}_t$ to be one specific token case (for example, replacing all $D_{\mathrm{kl}, t}$ and $\mathrm{DG}_t$ by $D_{\mathrm{kl}, t_0}$ and $\mathrm{DG}_{t_0}$). The decomposition automatically holds if we set all $\delta_t$ to be one specific value of $\lambda$.
\end{proof}

\begin{proof}[Proof of Part (c)]
By straightforward calculation,
\[
g_t(\lambda) = G_t \lambda + \Delta \lambda - \log(G_t e^{\lambda} - G_t + 1).
\]
One can check the first-order derivative by
\begin{align*}
\frac{\mathrm{d} g_t}{\mathrm{d} \lambda} & = G_t + \Delta - \frac{G_t e^{\lambda}}{G_t e^{\lambda} - G_t + 1}\\
& = \Delta + \frac{-(1-G_t)G_t e^{\lambda} + (1-G_t)G_t}{G_t e^{\lambda} - G_t + 1}\\
& = \Delta - \mathrm{DG}_t(\lambda).
\end{align*}
The concavity can be derived by the fact that $\mathrm{DG}_t(\cdot)$ is a monotonically increasing function for each $G_t \in [0, 1]$.
\end{proof}

\begin{proof}[Proof of Part (d)]
The strong duality can actually be derived from the proof of Proposition \ref{prop:free_t} (see Appendix \ref{subapd:proof_free_t}). However, we present the whole proof here for completeness.
From the conclusions of Part (b) and Part (c), we know that
\[
\frac{\mathrm{d} g}{\mathrm{d} \lambda} = \Delta - \frac1T \sum_{t=1}^T \mathrm{DG}_t(\lambda).
\]
If the primal problem \eqref{opt:nofree} is feasible, then $\Delta$ should be moderately small so that $\frac1T \sum_{t=1}^T \mathrm{DG}_t(\infty) > \Delta$. In that case, there always exists one single $\lambda^*$ that achieves the supremum of $g(\lambda)$ with
\[
\Delta - \frac1T \sum_{t=1}^T \mathrm{DG}_t(\lambda^*) = 0.
\]
Substituting the above into the original definition of Lagrangian \eqref{eq:Lagrangian}, we have
\begin{align*}
g(\lambda^*) & = \frac1T \sum_{t=1}^T \big( (G_t + \Delta) \cdot \lambda^* - \log(G_t e^{\lambda} - G_t + 1) \big) \\
& = \frac1T \sum_{t=1}^T \big( (G_t + \mathrm{DG}_t(\lambda^*)) \cdot \lambda^* - \log(G_t e^{\lambda} - G_t + 1) \big) \\
& = \frac1T \sum_{t=1}^T \big( \frac{G_t e^{\lambda}}{G_t e^{\lambda} - G_t + 1} \cdot \lambda^* - \log(G_t e^{\lambda} - G_t + 1) \big)\\
& = \frac1T \sum_{t=1}^T D_{\mathrm{kl}, t}(\lambda^*) \\
& \geq \frac1T \sum_{t=1}^T D_{\mathrm{kl}, t}(\delta^*) \\
& \geq \frac1T \sum_{t=1}^T f_t(\delta^*) \\
& = \inf_{\delta}\ f(\delta).
\end{align*}
Combining the inequality with the weak duality \citep{boyd2004convex} such that $\sup_{\lambda \geq 0} g(\lambda) \leq \inf_{\delta} f(\delta)$, we have the conclusion of strong duality. The equality $\delta^* = \lambda^*$ easily follows from Part (a).
\end{proof}

\subsection{Proof of Theorem \ref{thm:dualGA_performance_iid}}
\begin{proof}
Recall that we have defined the Lagrangian dual function as $\phi(\lambda)$. We also define the step-wise Lagrangian dual function
\[
\phi_t(\lambda) \coloneqq \inf_{\delta}\{D_{\mathrm{kl},t}(\delta)+\lambda(\Delta-\mathrm{DG}_t(\delta))\},
\]
where we directly have
\[
\phi(\lambda) = \frac1T \sum_{t=1}^T \phi_t(\lambda),
\]
and all infimums are taken at $\delta^*(\lambda) = \lambda$.
At every step, since we are setting $\delta_t = \lambda = \delta^*(\lambda)$, we have
\begin{equation}
D_{\mathrm{kl}, t}(\delta_t) = \phi_t(\lambda_t) + \lambda_t(\mathrm{DG}_t(\delta_t) - \Delta).
\label{eq:update_equality}
\end{equation}
Since we are running gradient ascent on the Lagrangian dual function $\phi_t$, we have
\[
\lambda_{t+1} = \lambda_t + \eta (\Delta - \mathrm{DG}_t(\delta_t)),
\]
which is identical to running a gradient descent algorithm on
\[\xi_t(\lambda) \coloneqq \lambda(\mathrm{DG}_t(\delta_t) - \Delta).\]
Since we have chosen $\eta = \Theta(1/\sqrt{T})$, by standard Online Gradient Descent analysis \citep{zinkevich2003online}, we have for any $\lambda^\prime \in [0, M]$,
\begin{align*}
\frac1T\sum_{t=1}^T \xi_t(\lambda_t) &\leq \frac1T\sum_{t=1}^T \xi_t(\lambda^\prime) + O(1/\sqrt{T})\\
& = \lambda^\prime\Big(\frac1T\sum_{t=1}^T \mathrm{DG}_t(\delta_t) - \Delta\Big) + O(1/\sqrt{T}).
\end{align*}
Setting $\lambda^\prime = 0$, we have
\begin{equation}
\frac1T\sum_{t=1}^T \xi_t(\lambda_t) \leq O(1/\sqrt{T}).
\label{eq:OGD_sqrt_T}
\end{equation}
Hence (defining $\bar{\phi} \coloneqq \mathbb{E}[\phi]$ and $\bar{\lambda}^* \coloneqq \argmax_{\lambda \in [0, M]} \bar{\phi}(\lambda)$)
\allowdisplaybreaks
\begin{align*}
\frac1T \sum_{t=1}^T \mathbb{E}[D_{\mathrm{kl},t}(\delta_t)] & = \frac1T \sum_{t=1}^T \mathbb{E}[\phi_t(\lambda_t)] + \frac1T \sum_{t=1}^T \mathbb{E}[\xi_t(\lambda_t)] \\
& \leq \frac1T \sum_{t=1}^T \mathbb{E}[\phi_t(\lambda_t)] + O(1/\sqrt{T})\\
& = \frac1T \sum_{t=1}^T \mathbb{E}\big[\mathbb{E}[\phi_t(\lambda_t)|\mathcal{F}_{t-1}]\big] + O(1/\sqrt{T})\\
& = \frac1T \sum_{t=1}^T \mathbb{E}\big[\bar{\phi}(\lambda_t)\big] + O(1/\sqrt{T})\\
& \leq \mathbb{E}\bigg[\bar{\phi}\Big(\frac1T \lambda_t\Big)\bigg] + O(1/\sqrt{T})\\
& \leq \bar{\phi}(\bar{\lambda}^*) + O(1/\sqrt{T})\\
& = \mathbb{E}[\phi(\bar{\lambda}^*)] + O(1/\sqrt{T})\\
& \leq \mathbb{E}[\phi(\lambda^*)] + O(1/\sqrt{T})\\
& = \frac1T \mathbb{E}\Big[\sum_{t=1}^T D_{\mathrm{kl},t}(\delta^*)\Big] + O(1/\sqrt{T})\\
& = \mathbb{E}\big[\mathrm{OPT}(\Delta)\big] + O(1/\sqrt{T}),
\end{align*}
where the first equality comes from the linearity of the expectation and \eqref{eq:update_equality}, the first inequality from the OGD analysis \eqref{eq:OGD_sqrt_T}, the second equality from the fact that $\lambda_t$ is adapted to $\mathcal{F}_{t-1}$ (that is, $\lambda_t$ is determined by previous $t-1$ steps' outcome), the third equality from that $\phi_t$ is uniquely determined by $G_t$ that is adapted to $\mathcal{F}_{t}$ and independent of previous $t-1$ steps' outcome by Assumption \ref{assum:iid}, the second inequality from the fact that $\phi_t$'s are i.i.d. according to Assumption \ref{assum:iid} and the concavity of the Lagrangian dual function, the third inequality from the optimality of $\bar{\lambda}^*$ w.r.t. $\mathbb{E}[\phi] = \mathbb{E}[\phi_t] = \bar{\phi}$ according to our definition, the fourth equality again from the i.i.d. assumption, the fourth inequality from the optimality of $\lambda^*$ w.r.t. $\phi = \frac1T \sum_{t=1}^T \phi_t$, the fifth equality from the strong duality of the program \eqref{opt:nofree}, and the last equality from the definition of $\mathrm{OPT}$ and Proposition \ref{prop:free_k}, \ref{prop:free_t}.

As for the constraint violation, we notice that
\[
\lambda_{t+1} - \lambda_t = \eta (\Delta - \mathrm{DG}_t(\delta_t)), \quad \forall t \in [T].
\]
Summing the above from $t=1$ to $T$, we have
\begin{align*}
M \geq \lambda_{T+1} - \lambda_1 & = \eta \sum_{t=1}^T (\Delta - \mathrm{DG}_t(\delta_t))\\
& = \Theta(\sqrt{T}) \sum_{t=1}^T (\Delta - \mathrm{DG}_t(\delta_t)),
\end{align*}
which verifies the proof.
\end{proof}

\subsection{Discussions on Choosing Green List Ratio $\gamma$}
\label{subapx:choose_green_ratio}
The Pareto optimality of the KL divergence-difference of green probability trade-off \eqref{opt:nofree} holds for the case in regards to any sequence of $\{G_t\}_{t=1}^T$. However, if the green list ratio $\gamma$ (that is, the expectation of $G_t$) is also considered as a decision variable, then the heuristic way of choosing $\gamma$ \citep{kirchenbauer2023watermark} is no more Pareto optimal. To derive the universal Pareto optimum, one needs to solve the optimization problem \eqref{opt:nofree} for each possible sample path $\{G_t\}_{t=1}^T$ and minimize its expectation to find the optimal $\gamma$. But it is generally a hard problem to precisely find the optimal $\gamma$ as the sequence of $\{G_t\}_{t=1}^T$ is random and cannot be foreseen in hindsight. Thus, we consider the \textit{certainty equivalent} version of the optimization problem \eqref{opt:nofree} where each $G_t$ is treated as $\gamma$ by taking the expectation, and derive the (approximate) optimal $\gamma$ as follows. To give the full details, we first define the certainty equivalent problem of \eqref{opt:nofree} as
\begin{align}
\begin{split}
\mathrm{OPT}_{\text{CE}}(\Delta, \gamma) \coloneqq \ \min_{\delta} \quad & \frac{\gamma \delta e^{\delta}}{\gamma e^{\delta} - \gamma + 1} - \log(\gamma e^{\delta} - \gamma + 1)\\
\text{s.t. } & \frac{(1-\gamma)\gamma (e^{\delta} - 1)}{\gamma e^{\delta} - \gamma + 1} \geq \Delta.
\end{split}
\label{opt:certainty_equivalent}
\end{align}
The optimal delta $\delta^*(\Delta, \gamma)$ of the certainty equivalent problem \eqref{opt:certainty_equivalent} can be calculated straightforwardly by considering the binding constraint such that
\[
\frac{(1-\gamma)\gamma (e^{\delta^*} - 1)}{\gamma e^{\delta^*} - \gamma + 1} = \Delta,
\]
where $\delta^*(\Delta, \gamma) = \log(1+\frac{\Delta}{\gamma}) + \log(1+\frac{\Delta}{1-\gamma-\Delta})$. Substituting that into the KL divergence as the objective, we have
\[
D_{\mathrm{kl}}(\delta^*(\Delta, \gamma)) = (\Delta + \gamma) \big[\log(\gamma + \Delta) - \log(\gamma)\big] + (1 - \Delta - \gamma)\big[\log(1 - \gamma - \Delta) - \log(1 - \gamma)\big].
\]
Calculating its first-order partial derivative w.r.t. $\gamma$, we have
\[
\frac{\partial D_{\mathrm{kl}}(\delta^*(\Delta, \gamma))}{\partial \gamma} = -\frac{\Delta}{\gamma} - \frac{\Delta}{1-\gamma} + \log(\gamma + \Delta) - \log(\gamma) - \log(1-\gamma-\Delta) + \log(1-\gamma).
\]
The second-order partial derivative w.r.t. $\gamma$ is
\[
\frac{\partial^2 D_{\mathrm{kl}}(\delta^*(\Delta, \gamma))}{\partial \gamma^2} = \frac{\Delta^2}{\gamma^2(\gamma+\Delta)} + \frac{\Delta^2}{(1-\gamma)^2 (1-\gamma-\Delta)} > 0.
\]
Hence, the optimal $\gamma^*$ to minimize the KL divergence is derived from the first-order condition s.t.
\begin{equation}
-\frac{\Delta}{\gamma^*} - \frac{\Delta}{1-\gamma^*} + \log(\gamma^* + \Delta) - \log(\gamma^*) - \log(1-\gamma^*-\Delta) + \log(1-\gamma^*) = 0.
\label{eq:certainty_equivalent}
\end{equation}
In practice, we numerically solve the above equation \eqref{eq:certainty_equivalent} and guide our choice of $\gamma$. We find our choice of $\gamma$ performs better than the heuristic choice in \citet{kirchenbauer2023watermark} (for example, see Figure \ref{fig:LFQA_Pareto}).

\section{Proofs in Section \ref{sec:KL_DISTORT}}
\label{apd:proof_sec2}
\subsection{Proof of Proposition \ref{prop:KL_decompose}}
\begin{proof}
Notice that we just need to prove the following inequality for any two distributions $\mathcal{Q}$ and $\mathcal{P}$ on $\mathcal{U} \times \mathcal{V}$ with $\mathcal{Q} \ll \mathcal{P}$:
\[
D_{\mathrm{kl}}(\mathcal{Q} \| \mathcal{P}) = D_{\mathrm{kl}}(\mathcal{Q}_{U} \| \mathcal{P}_{U}) + D_{\mathrm{kl}}(\mathcal{Q}_{V|U} \| \mathcal{P}_{V|U} | \mathcal{Q}_U).
\]
Such an equality is the direct result of the following computation
\allowdisplaybreaks
\begin{align*}
D_{\mathrm{kl}}(\mathcal{Q}_{U} \| \mathcal{P}_{U}) 
+ D_{\mathrm{kl}}(\mathcal{Q}_{V|U} \| \mathcal{P}_{V|U} | \mathcal{Q}_U) 
& = D_{\mathrm{kl}}(\mathcal{Q}_{U} \| \mathcal{P}_{U}) 
+ \mathbb{E}_{u \sim \mathcal{Q}_{U}} [D_{\mathrm{kl}}(\mathcal{Q}_{V|U=u} \| \mathcal{P}_{V|U=u})]\\
& = \int_{\mathcal{U}} \log\Big(\frac{\mathrm{d}\mathcal{Q}_{U}}{\mathrm{d}\mathcal{P}_{U}}\Big)\mathrm{d}\mathcal{Q}_{U}
+ \int_{\mathcal{U}} \mathrm{d}\mathcal{Q}_{U}(u) \int_{\mathcal{V}} \log\Big(\frac{\mathrm{d}\mathcal{Q}_{V|U=u}}{\mathrm{d}\mathcal{P}_{V|U=u}}\Big)\mathrm{d}\mathcal{Q}_{V|U=u}\\
& = \int_{\mathcal{U}}\mathrm{d}\mathcal{Q}_{U}(u) \int_{\mathcal{V}} \mathrm{d}\mathcal{Q}_{V|U=u}(v) \log\Big(\frac{\mathrm{d}\mathcal{Q}_{U}(u)}{\mathrm{d}\mathcal{P}_{U}(u)}\cdot \frac{\mathrm{d}\mathcal{Q}_{V|U=u}(v)}{\mathrm{d}\mathcal{P}_{V|U=u}(v)}\Big) \\
& = \int_{\mathcal{U}}\mathrm{d}\mathcal{Q}_{U}(u) \int_{\mathcal{V}} \mathrm{d}\mathcal{Q}_{V|U=u}(v) \log\Big(\frac{\mathrm{d}\mathcal{Q}(u,v)}{\mathrm{d}\mathcal{P}(u,v)}\Big) \\
& = \int_{\mathcal{U}\times \mathcal{V}} \mathrm{d}\mathcal{Q}(u, v) \log\Big(\frac{\mathrm{d}\mathcal{Q}(u,v)}{\mathrm{d}\mathcal{P}(u,v)}\Big)\\
& = D_{\mathrm{kl}}(\mathcal{Q} \| \mathcal{P}),
\end{align*}
where the first equality comes from the definition of conditional KL divergence, the second and the sixth from the definition of KL divergence, the third from the linearity of integrals, and the fourth and the fifth from the definition of conditional distribution.

Repeat the decomposition for $T_{\text{max}} - 1$ times and we shall finish the proof.
\end{proof}

\subsection{Proof of Proposition \ref{prop:info_lower_bound}}
Before we state the proof, we first define the total variation distance as
\begin{definition}
For any two distributions $\mathcal{Q}$ and $\mathcal{P}$ over the measurable space $(\Omega, \mathcal{F})$, the total variation distance is defined as
\[
\mathrm{TV} \coloneqq \sup_{A \in \mathcal{F}} \{|\mathcal{Q}(A) - \mathcal{P}(A)|\}.
\]
\end{definition}

\begin{proof}
Proposition \ref{prop:info_lower_bound} is a direct consequence of the following two lemmas.
\begin{lemma}[Le Cam's Lemma \citep{le2012asymptotic}]
\label{lemma:le_cam}
For any two distributions $\mathcal{Q}$ and $\mathcal{P}$ over the space $(\Omega, \mathcal{F})$, and denote $\psi$ as a measurable function from $\Omega$ to $\{0,1\}$. Then
\[
\inf_{\psi} \big\{\mathcal{Q}(\psi(\omega) = 0) + \mathcal{P}(\psi(\omega) = 1) \big\} = 1 - \mathrm{TV}(\mathcal{Q}, \mathcal{P}).
\]
Furthermore, such an infimum is met with the following function
\[
\psi^*(s) \coloneqq \mathbbm{1}\Big\{\frac{\mathrm{d}\mathcal{Q}}{\mathrm{d}\mathcal{P}}(s) \geq 1\Big\}.
\]
\end{lemma}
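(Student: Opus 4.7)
The plan is to reduce Le Cam's Lemma to the variational characterization of total variation. Any binary test $\psi$ is determined by the set $A \coloneqq \{\omega : \psi(\omega)=1\} \in \mathcal{F}$, so the first move is the straightforward rewriting
\[
\mathcal{Q}(\psi(\omega) = 0) + \mathcal{P}(\psi(\omega) = 1) = \mathcal{Q}(A^c) + \mathcal{P}(A) = 1 - \bigl(\mathcal{Q}(A) - \mathcal{P}(A)\bigr).
\]
Minimizing the left-hand side over $\psi$ is therefore the same as maximizing $\mathcal{Q}(A) - \mathcal{P}(A)$ over $A \in \mathcal{F}$, so the lemma reduces to identifying this supremum with $\mathrm{TV}(\mathcal{Q}, \mathcal{P})$ and exhibiting an attaining set.

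For the second step I would introduce a common dominating measure $\mu \coloneqq \mathcal{Q}+\mathcal{P}$ with densities $q \coloneqq \mathrm{d}\mathcal{Q}/\mathrm{d}\mu$ and $p \coloneqq \mathrm{d}\mathcal{P}/\mathrm{d}\mu$. The standard identity
\[
\sup_{A \in \mathcal{F}} \bigl(\mathcal{Q}(A)-\mathcal{P}(A)\bigr) = \int (q-p)_+ \, \mathrm{d}\mu = \mathrm{TV}(\mathcal{Q},\mathcal{P})
\]
is obtained by observing that the integrand is non-negative precisely on $A^\star \coloneqq \{q \geq p\}$, so for any $A$ one has $\mathcal{Q}(A)-\mathcal{P}(A) = \int_{A}(q-p)\,\mathrm{d}\mu \leq \int (q-p)_+ \,\mathrm{d}\mu$, with equality at $A = A^\star$. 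The equivalence of $\int (q-p)_+\,\mathrm{d}\mu$ with the classical $\sup_A |\mathcal{Q}(A)-\mathcal{P}(A)|$ definition of $\mathrm{TV}$ follows from $\int(q-p)_+\,\mathrm{d}\mu = \int(p-q)_+\,\mathrm{d}\mu$, since $q$ and $p$ both integrate to one. Substituting back yields $\inf_{\psi}\{\mathcal{Q}(\psi=0)+\mathcal{P}(\psi=1)\} = 1-\mathrm{TV}(\mathcal{Q},\mathcal{P})$.

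Finally, the optimal test is read off from the attaining set: $\psi^\star(s) = \mathbbm{1}\{s \in A^\star\} = \mathbbm{1}\{q(s) \geq p(s)\}$. Whenever $\mathcal{Q} \ll \mathcal{P}$ (or on the intersection of supports) this coincides with $\mathbbm{1}\{\mathrm{d}\mathcal{Q}/\mathrm{d}\mathcal{P}(s) \geq 1\}$ as stated; on the $\mathcal{P}$-null part of $\{q>0\}$ any choice is optimal, and assigning $\psi^\star=1$ there is consistent with the convention $\mathrm{d}\mathcal{Q}/\mathrm{d}\mathcal{P} = +\infty$.

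I do not anticipate a genuine obstacle in this proof; it is essentially an unpacking of the variational definition of $\mathrm{TV}$, and the only mildly delicate point is the measure-theoretic handling when $\mathcal{Q}$ and $\mathcal{P}$ are not mutually absolutely continuous, which is resolved by the choice $\mu = \mathcal{Q}+\mathcal{P}$. The real work in proving Proposition \ref{prop:info_lower_bound} lies not in this lemma but in the next step of the argument, where Le Cam must be combined with a KL-to-TV bound (specifically the Bretagnolle--Huber inequality $\mathrm{TV}(\mathcal{Q},\mathcal{P})^2 \leq 1-\exp(-D_{\mathrm{kl}}(\mathcal{Q}\|\mathcal{P}))$) to produce the $1-\sqrt{1-\exp(-D_{\mathrm{kl}})}$ bound appearing in the proposition.
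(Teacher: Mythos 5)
The paper does not actually supply a proof of this lemma; it is stated purely as a citation to \citet{le2012asymptotic} and used as an ingredient (together with the Bretagnolle--Huber inequality) in the two-line proof of Proposition~\ref{prop:info_lower_bound}. Your argument is the standard and correct derivation: rewriting the test as an event $A=\{\psi=1\}$, introducing the dominating measure $\mu=\mathcal{Q}+\mathcal{P}$, and identifying $\sup_A\bigl(\mathcal{Q}(A)-\mathcal{P}(A)\bigr)=\int(q-p)_+\,\mathrm{d}\mu$ with the paper's $\sup_A|\mathcal{Q}(A)-\mathcal{P}(A)|$ definition of $\mathrm{TV}$ (the two agree because $q$ and $p$ both integrate to one). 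Your handling of the optimal set $A^\star=\{q\geq p\}$, and of the $\mathcal{P}$-null piece when $\mathcal{Q}\not\ll\mathcal{P}$, is also careful and correct; the lemma's statement writes $\mathrm{d}\mathcal{Q}/\mathrm{d}\mathcal{P}$ somewhat loosely, and your remark that any assignment on $\{p=0,q>0\}$ is optimal and that $\psi^\star=1$ there matches the convention $\mathrm{d}\mathcal{Q}/\mathrm{d}\mathcal{P}=+\infty$ is exactly the right clarification. You are also right that the substance of Proposition~\ref{prop:info_lower_bound} sits in combining this with Bretagnolle--Huber, not in this lemma itself.
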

\begin{lemma}[Bretagnolle-Huber's Inequality \citep{bretagnolle1978estimation}]
\label{lemma:BH_ineq}
For any two distributions $\mathcal{Q}$ and $\mathcal{P}$, we have
\[
\mathrm{TV}(\mathcal{Q}, \mathcal{P}) \leq \sqrt{1 - e^{-D_{\mathrm{kl}}(\mathcal{Q}\|\mathcal{P})}}.
\]
\end{lemma}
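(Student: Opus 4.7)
The plan is to prove the Bretagnolle--Huber inequality by relating both quantities to the Hellinger affinity $A(\mathcal{Q},\mathcal{P}) \coloneqq \int \sqrt{\mathrm{d}\mathcal{Q}\,\mathrm{d}\mathcal{P}}$ (equivalently, $\int \sqrt{pq}\,\mathrm{d}\mu$ where $p,q$ are densities with respect to any dominating measure $\mu$, e.g., $\mu = \mathcal{P}+\mathcal{Q}$). If $\mathcal{Q}$ is not absolutely continuous with respect to $\mathcal{P}$ the right-hand side equals $1 \geq \mathrm{TV}$ and the claim is trivial, so assume $\mathcal{Q} \ll \mathcal{P}$.

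First I would bound $\mathrm{TV}$ in terms of $A$. Using the standard identities $\int \min(p,q)\,\mathrm{d}\mu = 1 - \mathrm{TV}(\mathcal{Q},\mathcal{P})$ and $\int \max(p,q)\,\mathrm{d}\mu = 1 + \mathrm{TV}(\mathcal{Q},\mathcal{P})$ (which follow from $\min + \max = p+q$ and $\max - \min = |p-q|$), Cauchy--Schwarz gives
\[
A^2 = \left(\int \sqrt{\min(p,q)\cdot \max(p,q)}\,\mathrm{d}\mu\right)^2 \leq \int \min(p,q)\,\mathrm{d}\mu \cdot \int \max(p,q)\,\mathrm{d}\mu = 1 - \mathrm{TV}(\mathcal{Q},\mathcal{P})^2,
\]
so that $\mathrm{TV}(\mathcal{Q},\mathcal{P})^2 \leq 1 - A^2$.

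Second, I would lower-bound $A$ in terms of the KL divergence via Jensen's inequality. Writing $A = \mathbb{E}_{\mathcal{Q}}\big[\sqrt{\mathrm{d}\mathcal{P}/\mathrm{d}\mathcal{Q}}\big]$ and applying Jensen to the concave function $\log$,
\[
\log A \geq \mathbb{E}_{\mathcal{Q}}\!\left[\tfrac{1}{2}\log\tfrac{\mathrm{d}\mathcal{P}}{\mathrm{d}\mathcal{Q}}\right] = -\tfrac{1}{2} D_{\mathrm{kl}}(\mathcal{Q}\|\mathcal{P}),
\]
hence $A^2 \geq e^{-D_{\mathrm{kl}}(\mathcal{Q}\|\mathcal{P})}$. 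Combining the two bounds yields $\mathrm{TV}(\mathcal{Q},\mathcal{P})^2 \leq 1 - e^{-D_{\mathrm{kl}}(\mathcal{Q}\|\mathcal{P})}$, and taking square roots gives the claim.

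I do not expect a major obstacle here: the only place to be careful is the measure-theoretic setup (choosing a dominating measure and handling $\mathcal{Q} \not\ll \mathcal{P}$ separately) and justifying the Jensen step, which requires $A > 0$; if $A = 0$ then $\mathcal{Q}$ and $\mathcal{P}$ are mutually singular, $D_{\mathrm{kl}}(\mathcal{Q}\|\mathcal{P}) = \infty$, and the bound again holds trivially.
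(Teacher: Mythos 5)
Your proof is correct. Note, however, that the paper does not actually supply a proof of this lemma at all: it is stated as a cited black box (to \citet{bretagnolle1978estimation}) inside the proof of Proposition~\ref{prop:info_lower_bound}, so there is no in-paper argument to compare against. What you have written is the standard textbook derivation (essentially Tsybakov's Lemma~2.6): introduce the Hellinger affinity $A=\int\sqrt{pq}\,\mathrm{d}\mu$, bound $\mathrm{TV}^2 \le 1-A^2$ by factoring $pq = \min(p,q)\max(p,q)$ together with the identities $\int\min(p,q) = 1-\mathrm{TV}$, $\int\max(p,q)=1+\mathrm{TV}$ and Cauchy--Schwarz, and then bound $A^2 \ge e^{-D_{\mathrm{kl}}(\mathcal{Q}\|\mathcal{P})}$ by writing $A=\mathbb{E}_{\mathcal{Q}}\bigl[\sqrt{\mathrm{d}\mathcal{P}/\mathrm{d}\mathcal{Q}}\bigr]$ and applying Jensen to $\log$. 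The edge-case handling is also sound: $\mathcal{Q}\not\ll\mathcal{P}$ forces $D_{\mathrm{kl}}=\infty$ and makes the bound trivial, and under $\mathcal{Q}\ll\mathcal{P}$ mutual singularity (hence $A=0$) cannot occur, so the Jensen step is justified. There is nothing to fix; you have simply filled in a proof that the paper chose to leave to a reference.
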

\end{proof}

\subsection{Proof of Proposition \ref{prop:negative_DLoP}}
\begin{proof}
With the knowledge of the full generating process of $\bm{p}_{\text{eval}}(x)$, we can find the token sequence with the largest likelihood as
\[
\bm{y}^* \coloneqq \ \argmax_{\bm{y} \in \mathcal{V}^T} \prod_{t=1}^T \bm{p}_{\text{eval}, t|[t-1]}(x, \bm{y}_{[t-1]})_{y_t},
\]
where we assume there is no tie for simplicity. Then we can construct a modified model $\tilde{\bm{q}}$, s.t.
\[
\bm{q}(x,\bm{y}_{[t-1]})_k = 
\begin{cases}
1, \quad & \text{if } k = \bm{y}^*_t,\\
0, \quad & \text{otherwise}.
\end{cases}
\]
Then 
\begin{align}
\mathrm{LoP}(\tilde{\bm{q}}|\bm{p}_{\text{eval}},x) & = -\frac1T \max_{\bm{y} \in \mathcal{V}^T} \sum_{t=1}^T \log(\bm{p}_{\text{eval}, t|[t-1]}(x, \bm{y}_{[t-1]})_{y_t}) \nonumber \\
& \leq -\frac1T \mathbb{E}_{\bm{y}\sim \bm{p} }\big[\sum_{t=1}^T \log(\bm{p}_{\text{eval}, t|[t-1]}(x, \bm{y}_{[t-1]})_{y_t})\big] \nonumber \\
& = \mathrm{LoP}(\bm{p}|\bm{p}_{\text{eval}},x),
\label{eq:DLoP_nonpositve}
\end{align}
where the equality only holds if $\bm{p} = \tilde{\bm{q}}$.

By the assumption that $|\mathcal{V}^T| \geq 3$ and there is no tie in $\bm{p}_{\text{eval}}(x, \bm{y})$, we shall see that $\bm{p}(x,\cdot)$ cannot be a Dirac delta function while $\tilde{\bm{q}}(x,\cdot)$ itself is a Dirac delta function, implying
\[\bm{p} \neq \tilde{\bm{q}}.\]
Thus, we can conclude the proof with \eqref{eq:DLoP_nonpositve} and the fact that KL divergence is only zero if the two distributions are identical.
\end{proof}

\subsection{Proof of Proposition \ref{prop:KL_cvx}}
\begin{proof}
The proposition is the direct result of the following convexity lemma and Jensen's inequality.
\begin{lemma}[Convexity of KL divergence, e.g., Theorem 4.1 of \citet{polyanskiy2014lecture}]
\label{lemma:kl_cvx}
Kullback-Leibler divergence $D_{\mathrm{kl}}(\mathcal{Q} \| \mathcal{P})$ is convex for the joint argument $(\mathcal{Q}, \mathcal{P})$.
\end{lemma}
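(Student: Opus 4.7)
My plan is to prove the joint convexity of $D_{\mathrm{kl}}(\mathcal{Q}\|\mathcal{P})$ by reducing it to the pointwise joint convexity of the integrand $(q,p)\mapsto q\log(q/p)$ on $\{q\ge 0,\, p>0\}$, and then integrating. Fix $\lambda\in[0,1]$ and two pairs $(\mathcal{Q}_1,\mathcal{P}_1)$ and $(\mathcal{Q}_2,\mathcal{P}_2)$ with $\mathcal{Q}_i \ll \mathcal{P}_i$. The convexity claim to establish is
\[
D_{\mathrm{kl}}\bigl(\lambda\mathcal{Q}_1+(1-\lambda)\mathcal{Q}_2 \,\big\|\, \lambda\mathcal{P}_1+(1-\lambda)\mathcal{P}_2\bigr)
\;\le\; \lambda D_{\mathrm{kl}}(\mathcal{Q}_1\|\mathcal{P}_1) + (1-\lambda) D_{\mathrm{kl}}(\mathcal{Q}_2\|\mathcal{P}_2).
\]

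First, I would pick a common dominating measure $\mu$ (for instance, $\mu=\tfrac12(\mathcal{P}_1+\mathcal{P}_2)+\tfrac12(\mathcal{Q}_1+\mathcal{Q}_2)$) so that each $\mathcal{Q}_i$ and $\mathcal{P}_i$ admits a density $q_i,\,p_i$ with respect to $\mu$. In this representation the KL divergence becomes $\int q_i\log(q_i/p_i)\,\mathrm{d}\mu$, and both mixture distributions have densities $\lambda q_1+(1-\lambda) q_2$ and $\lambda p_1+(1-\lambda) p_2$, so it suffices to prove the inequality pointwise in $x$ and then integrate against $\mu$.

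The core pointwise inequality is the log-sum inequality: for nonnegative $a_1,a_2$ and positive $b_1,b_2$,
\[
a_1\log\frac{a_1}{b_1}+a_2\log\frac{a_2}{b_2}\;\ge\;(a_1+a_2)\log\frac{a_1+a_2}{b_1+b_2}.
\]
I would derive this from Jensen's inequality applied to the convex function $\varphi(t)=t\log t$, using the probability weights $w_i=b_i/(b_1+b_2)$ and arguments $t_i=a_i/b_i$, yielding $\varphi(\sum w_i t_i)\le \sum w_i \varphi(t_i)$ which rearranges into the log-sum form. Applied with $a_i=\lambda_i q_i(x)$ and $b_i=\lambda_i p_i(x)$ (where $\lambda_1=\lambda$, $\lambda_2=1-\lambda$), the $\lambda_i$ factors cancel inside the logarithm, delivering exactly the pointwise convexity
\[
\bigl(\lambda q_1(x)+(1-\lambda) q_2(x)\bigr)\log\frac{\lambda q_1(x)+(1-\lambda) q_2(x)}{\lambda p_1(x)+(1-\lambda) p_2(x)}
\;\le\; \lambda\, q_1(x)\log\frac{q_1(x)}{p_1(x)}+(1-\lambda)\,q_2(x)\log\frac{q_2(x)}{p_2(x)}.
\]
Integrating against $\mu$ and invoking linearity of expectation then yields the desired inequality.

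The main technical subtlety, rather than a serious obstacle, is handling the boundary cases where some $p_i$ or $q_i$ vanishes; one resolves this via the standard conventions $0\log 0=0$ and $0\log(0/0)=0$, plus the observation that $\lambda\mathcal{Q}_1+(1-\lambda)\mathcal{Q}_2\ll \lambda\mathcal{P}_1+(1-\lambda)\mathcal{P}_2$ whenever each $\mathcal{Q}_i\ll\mathcal{P}_i$, so the left-hand KL is well-defined. An alternative route, worth mentioning as a sanity check, is to observe that $(q,p)\mapsto q\log(q/p)$ is the perspective of the convex function $f(q)=q\log q$, and perspectives of convex functions are jointly convex; this reproduces the same conclusion without going through the log-sum inequality, and would serve as a cleaner one-line justification if one is willing to cite the perspective construction.
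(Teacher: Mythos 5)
Your proof is correct, and both routes you sketch (the log-sum inequality via Jensen on $\varphi(t)=t\log t$, and the perspective-function observation) are standard and complete. One small note: the paper itself does not prove this lemma at all --- it is stated as a citation to Theorem~4.1 of Polyanskiy's lecture notes and invoked as known. Your argument via the log-sum inequality reconstructs essentially the textbook proof that underlies that reference; the reduction to a pointwise inequality under a common dominating measure, the handling of $0\log 0$ and $0\log(0/0)$ conventions, and the observation that $\lambda\mathcal{Q}_1+(1-\lambda)\mathcal{Q}_2 \ll \lambda\mathcal{P}_1+(1-\lambda)\mathcal{P}_2$ are exactly the details one needs, and you have them right. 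The perspective-function remark is a clean alternative and also correct, since $p\,f(q/p)=q\log(q/p)$ with $f(u)=u\log u$ convex. No gaps.
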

\end{proof}

\subsection{Computation of Example \ref{eg:ems}}
\begin{itemize}
    \item \textbf{Exponential minimum sampling.} The exponential minimum sampling generates a sequence of $r_{t,k} \sim \mathrm{Unif}[0, 1]$ random variables as the key. For any sequence of $r_{t,k}$'s, the next word is deterministically chosen as
\[
y_{t} = \argmin_{k \in [|\mathcal{V}|]} -\frac{\log(r_{t,k})}{p_{t,k}}.
\]
The intuition behind this sampling procedure is that the larger $p_{t,k}$ is, the larger the chance $k$ is chosen at the $t$-th token. 
\item \textbf{Inverse transform sampling.} The inverse transform sampling generates a random variable $r_t\sim \mathrm{Unif}[0, 1]$ as the key. Then for any $r_t$, the next word is deterministically chosen as 
$$y_t=\min\left\{k\in [|\mathcal{V}|]: \sum_{k'\in [|\mathcal{V}|]}p_{t,k'}\mathbbm{1}_{\{k'\leq k\}}\geq r_t\right\}.$$
The intuition behind this sampling procedure is that the $k$'th word is chosen only when $r_t$ is located in the interval $(\sum_{k'<k}p_{t,k'},\sum_{k'\leq k}p_{t,k'}]$, which has the probability $p_{t,k}$. We present a simplified version of inverse transform sampling for illustration. In the original algorithm of \citet{kuditipudi2023robust}, an arbitrary permutation over the vocabulary may also be applied. However, such a permutation does not affect the calculation presented below.
\end{itemize}

\citet{aaronson2023watermarking, kuditipudi2023robust, christ2023undetectable,li2025statistical} show that both exponential minimum sampling and inverse transform sampling are marginally distortion-free with respect to the key $r=r_{t,k}$ or $r=r_t$:
\begin{equation}
\mathbb{E}_{r}[\bm{q}_t] = \bm{p}_t
\label{eq:marginal_distort_free}
\end{equation}
While the distortion of the marginal model is zero, the expectation of its model distortion is never zero on any trajectory of the $r_{t,k}$ (or $r_t$). The watermarked model is now a delta distribution whose KL divergence is
\[
D_{\mathrm{kl},t} = \sum_{k\in [|\mathcal{V}|]} -\log(p_{t,k}) \mathbbm{1}\{k \text{ is chosen at }t\},
\]
which directly leads the our conclusion when combined with \eqref{eq:marginal_distort_free}:
\[
\mathbb{E}_{r_{t,k}} \big[D_{\mathrm{kl},t} \big] = \sum_{k\in [|\mathcal{V}|]} -\log(p_{t,k}) p_{t,k} = \mathrm{Ent}_{y_t}\big(\bm{p}_t\big).
\]
The above computation can be applied to any marginally distortion-free watermarking algorithm whose next-word sampling is deterministic if the (random) key $r$ is given.

\section{Proofs in Section \ref{sec:more_analyses}}
\subsection{Proof of Proposition \ref{prop:type_I&II}}
\begin{proof}
Note that the random variables $\mathbbm{1}\{\text{$t$-th token is green}\}$'s are independent and Bernoulli random variables. Thus, those two inequalities are the direct consequences of Hoeffding's inequality (which is stated below for completeness).
\begin{lemma}[Hoeffding's inequality \citep{hoeffding1994probability}]
Suppose $X_i$'s are $n$ independent random variables which are almost surely bounded by $[a_i, b_i]$'s. Then
\[
\mathbb{P}\Big(\frac1n \sum_{i=1}^n X_i - \frac1n \sum_{i=1}^n \mathbb{E}[X_i] \geq \varepsilon \Big) \leq \exp\Big(-2\cdot\frac{n^2\varepsilon^2}{\sum_{i=1}^n (b_i - a_i)^2}\Big).
\]
\end{lemma}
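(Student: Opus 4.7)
The plan is to apply the standard Chernoff bounding technique combined with Hoeffding's Lemma for bounded random variables. Let me denote $Y_i \coloneqq X_i - \mathbb{E}[X_i]$, so each $Y_i$ is a zero-mean random variable almost surely bounded in $[a_i - \mathbb{E}[X_i], b_i - \mathbb{E}[X_i]]$, an interval of length $b_i - a_i$. The event of interest can be rewritten as $\sum_{i=1}^n Y_i \geq n \varepsilon$. Using Markov's inequality on the exponentiated random variable, for any free parameter $t > 0$ I would obtain
\[
\mathbb{P}\Big(\sum_{i=1}^n Y_i \geq n\varepsilon\Big) = \mathbb{P}\big(e^{t \sum_i Y_i} \geq e^{tn\varepsilon}\big) \leq e^{-tn\varepsilon} \cdot \mathbb{E}\big[e^{t \sum_i Y_i}\big].
\]

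The next step exploits independence of the $X_i$'s (and hence of the $Y_i$'s) to factor the moment generating function: $\mathbb{E}[e^{t\sum_i Y_i}] = \prod_{i=1}^n \mathbb{E}[e^{tY_i}]$. The crux of the argument is then the following auxiliary bound, often called Hoeffding's Lemma: for any zero-mean random variable $Y$ almost surely bounded in $[\alpha, \beta]$, one has $\mathbb{E}[e^{tY}] \leq \exp\bigl(t^2(\beta - \alpha)^2/8\bigr)$. I would prove this by writing an arbitrary value $y \in [\alpha,\beta]$ as the convex combination $y = \lambda \alpha + (1-\lambda)\beta$ with $\lambda = (\beta - y)/(\beta - \alpha)$, applying convexity of the exponential to get $e^{ty} \leq \lambda e^{t\alpha} + (1-\lambda) e^{t\beta}$, taking expectation using $\mathbb{E}[Y] = 0$, and then showing via a Taylor expansion of the resulting function $\varphi(s) \coloneqq -ps + \log(1 - p + p e^s)$ around $s = 0$ that $\varphi(s) \leq s^2/8$. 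The Taylor remainder bound $\varphi''(s) \leq 1/4$ follows from writing $\varphi''(s)$ as $u(1-u)$ with $u \in [0,1]$.

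Applying Hoeffding's Lemma to each $Y_i$ with $(\alpha_i, \beta_i)$ having width $b_i - a_i$ yields $\mathbb{E}[e^{tY_i}] \leq \exp\bigl(t^2(b_i - a_i)^2/8\bigr)$. Plugging back, the tail bound becomes
\[
\mathbb{P}\Big(\tfrac{1}{n}\sum_{i=1}^n (X_i - \mathbb{E}[X_i]) \geq \varepsilon\Big) \leq \exp\Big(-tn\varepsilon + \tfrac{t^2}{8}\sum_{i=1}^n (b_i - a_i)^2\Big).
\]
The final step is to optimize the bound over $t > 0$: the right-hand side is a quadratic in $t$ minimized at $t^* = 4n\varepsilon / \sum_i (b_i - a_i)^2$, which on substitution yields the claimed bound $\exp\bigl(-2 n^2 \varepsilon^2 / \sum_{i=1}^n (b_i - a_i)^2\bigr)$.

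The main obstacle is the proof of Hoeffding's Lemma itself, specifically verifying the uniform bound $\varphi''(s) \leq 1/4$ cleanly; the rest of the argument is a mechanical combination of Markov's inequality, independence, and one-dimensional optimization. Since the statement in the excerpt is one-sided, no union bound is needed; if the two-sided version were desired, one would simply apply the same argument to $-X_i$ and take a union bound.
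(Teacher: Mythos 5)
Your proof is correct and is in fact the standard textbook proof of Hoeffding's inequality: Chernoff bounding via Markov's inequality on the moment generating function, factoring by independence, applying Hoeffding's Lemma ($\mathbb{E}[e^{tY}]\leq \exp(t^2(\beta-\alpha)^2/8)$ for mean-zero $Y\in[\alpha,\beta]$) whose proof via convexity of $e^{ty}$ and the second-derivative bound $\varphi''(s)=u(1-u)\leq 1/4$ you have right, and finally optimizing the quadratic in $t$ at $t^*=4n\varepsilon/\sum_i(b_i-a_i)^2$. The paper does not supply a proof of this lemma at all --- it simply cites it as a classical result from Hoeffding (1963) --- so there is nothing to contrast against; your argument fills that gap correctly with the canonical argument.
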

\end{proof}

\subsection{Proof of Proposition \ref{prop:adv_robust}}
\begin{proof}
Since the event that each token is green or not is a Bernoulli random variable and is uniformly bounded no matter which adversarial attack is taken, the high probability guarantee in Proposition \ref{prop:type_I&II} still holds as long as we can lower bound the expectation of the difference of the number of green tokens for the new sequence. We thus analyze the average difference of green list probability.
\begin{itemize}
\item Deletion. For the watermarked sequence, the expected number of green tokens is $(\Delta + \gamma) T$, while the expected number of red tokens is $(1-\Delta-\gamma)T$. Suppose the worst case happens where the deleted $T^\prime = l\cdot T$ tokens are all green. Then the expected green token ratio in the modified sequence is $\frac{\Delta + \gamma - l}{1 - l}$. Subtracting the green token ratio in the unwatermarked sequence (which is $\gamma$ on expectation), we complete the proof.
\item Insertion. Similar to the deletion case, the worst case of insertion happens when all the inserted $T^\prime = l \cdot T$ are red tokens. The next token after the insertion could also be adversarially changed to be a red token. Then the expected green token ratio in the modified sequence is at least $\frac{\Delta + \gamma + \frac1T}{1 + l}$. Subtracting the term $\gamma$ completes the proof.
\item Substitution. Similar to the insertion case, the worst case of substitution takes place if one substitutes $T^\prime = l \cdot T$ green tokens to be red and an additional green token is changed to red right after the substituted substring. The expected green token ratio in the modified sequence is at least $\Delta + \gamma - l - \frac1T$, leading to our conclusion.
\end{itemize}
\end{proof}

\section{Experiments Details}
\label{apx:experiments}
\subsection{Experiment Settings}
\label{subapx:exp_settings}
In this section, we provide more details on the experiment setting. 

% When evaluating each configuration of algorithms, we randomly pick $500$ samples from the target dataset generated by the following steps. 
\begin{comment}
   All the algorithms we test have similar generation speeds: we copy the language model to $4$ NVIDIA A100 GPUs and distribute the prompt data (with a batch size of $32$) evenly into $4$ parts to load into each GPU (a batch size of $8$ for each GPU) and do the inference parallelly. It takes less than $30$ minutes to finish the generation of $500$ samples, with a maximum generation length of $200$ tokens for each generation.  
\end{comment}

\subsubsection{Dataset Construction}
\label{subapx:data_construct}
We extract the prompt data from the following two datasets:
\begin{itemize}
    \item Colossal Common Crawl Cleaned corpus \citep[abbreviated as C4]{raffel2020exploring}: Following the data processing method in \cite{kirchenbauer2023watermark}, we randomly select 500 samples from the ``text'' field of the ``realnewslike'' subset of the C4 dataset, under the condition that each text has more than 250 tokens. From each selected text, we create a prompt by trimming a fixed number of $200$ tokens from the end of the text and using the left tokens as the prompt. 
    \item Long-Form Question Answering \citep[abbreviated as LFQA]{fan2019eli5}:  We generate prompts by randomly selecting $500$ questions  from the ``prefix'' field of the dataset. The ``gold\_completion'' is used as the baseline completion. This generation method is the same as the one used in \cite{kirchenbauer2023reliability}.
\end{itemize}

\subsubsection{Watermarking Algorithms}
\label{subapx:algs_implement}
The implementation details of watermarking algorithms are presented below.
\begin{itemize}
    \item Soft Red List \citep[abbreviated as SRL]{kirchenbauer2023watermark}: As given in Algorithm \ref{alg:kirchen} (with $\delta_{t,k}\equiv \delta$ for some hyperparameter $\delta$), the SRL algorithm partitions the vocabulary list into a green/red list with ratio $\gamma/(1-\gamma)$ when sampling the next token $y_t$, where $\gamma$ is the hyperparameter deciding the green list ratio. The partition scheme is determined pseudo-randomly by a random number generator seeded by a hash of token $y_{t-1}$, where we use the same hash function employed in \citet{kirchenbauer2023watermark} in the experiments. The algorithm adds $\delta$ to the ``green-list'' logits and skews the sampling towards the green list, where $\delta$ is a pre-specified hyperparameter representing the watermarking strength. The next token $y_t$ is sampled from the softmax distribution of this ``skewed'' logit vector. We set the hash to be determined by the previous 1 token (also called 1-gram) as is done in \citet{kirchenbauer2023watermark}. We also conduct the beam search with 1 beam.
    \item Exponential Minimum Sampling 
    \citep[abbreviated as EMS]{aaronson2023watermarking}: EMS generates the next token by $y_t=\argmin_{k\in \mathcal{V}} -ln(r_{t,k})/p_{t,k}$, where $p_{t,k}$ is the original probability of token $k$ following the language model, and $\{r_{t,k}\}_{k=1}^{|\mathcal{V}|}$ are generated pseudo-randomly as $\mathrm{Uniform}(0,1)$ variables from a random generator whose seed is from the hash of the previous token $y_{t-1}$. The Gumbel-trick \citep{yellott1977relationship} guarantees that if $\{r_{t,k}\}_{k=1}^{|\mathcal{V}|}$ are indeed i.i.d. random (instead of pseudo-random) variable from $\mathrm{Uniform}(0,1)$ then $\argmin_{k\in \mathcal{V}} -ln(r_{t,k})/p_{t,k}$ marginally recovers the original distribution of $p_{t,k}$. As suggested in \cite{fernandez2023three}, we use the generating temperature $\tau$ (to adjust the actual logits used for generation) as a hyperparameter of EMS. Intuitively, a higher temperature flattens the original probability $p_{t,k}$ and thus reduces its power in deciding the final watermarked token. In this way, the watermarked token depends more on the generated key $r_{t,k}$ and thus is easier to detect.
    \item Unbiased Watermark (\citet{hu2023unbiased}, abbreviated as UW): Similar to EMS, the UW draws a pseudo‑random variable at every step $t$, seeded by a secret key $\mathcal{K}$ and the current context. It then reweights the model’s next‑token distribution in a way that guarantees marginally distortion-free, i.e., the next-token distribution is identical to the original language model when marginalized over the randomness of $\mathcal{K}$. UW proposes two reweighting schemes, both driven by the pseudo-random $\text{Uniform}(0,1)$ variable $E$ at each step: $\delta$‑reweight, which treats $E$ as a point on the unit interval and performs inverse‑CDF sampling, and the chosen token receives all probability mass for that step; and $\gamma$‑reweight, which uses $E$ to generate a permutation of the vocabulary, then discards the tokens corresponding to the left half of cumulative probability mass under that ordering, and only samples from the remaining tokens. For detection, UW computes a likelihood‑based score, so the verifier must recreate the original next‑token probabilities of the generating LM. In other words, effective detection assumes access to the underlying model logits.

    \item Adaptive Text Watermark (\citet{liu2024adaptive}, abbreviated as ATW): ATW is a green/red‑list watermarking scheme that introduces two key innovations: (1) Entropy‑gated insertion, which computes the entropy of the model’s next‑token distribution and inserts a watermark only when the entropy exceeds a pre-defined threshold $\alpha$. Intuitively, ATW only watermarks the high-uncertainty positions; (2) Semantic hashing for the green/red split, which, instead of using the raw context tokens to determine the vocabulary partition, ATW uses the semantic information extracted from the context. In the experiments, we implement their best-performing hyperparameter configurations: $\delta=1, \alpha=2$ and $\delta=5, \alpha=2$. 
    
    \item Dual Gradient Ascent (Algorithm \ref{alg:DualGA}, abbreviated as DualGA):  We use the same hash function as in SRL, and select the step size $\eta$ and the initial dual variable $\lambda_1$ based on the best-performing values on a small validation set of 16 samples. In practice, we find the DualGA algorithm is resilient to a relatively large step size (in the range $\eta\in [0.1, 1]$), and a small step size ($\eta<0.05$) makes the overall performance similar to the vanilla SRL. The $\lambda_1$ parameter is typically around $10\times \Delta$. The ratio $\gamma$ is chosen to minimize the certainty equivalent problem \eqref{opt:certainty_equivalent} given some $\Delta$ by numerically solving \eqref{eq:certainty_equivalent}. The experiments manually set the $\Delta$ values for demonstration purposes. However, in practical applications, $\Delta$ can be chosen to adapt to the specific generation length constraints and task requirements either theoretically or empirically.
\end{itemize}
\subsubsection{Detection Ability Metrics}
\label{subapx:detection_ps}
Detection of the watermark can be formulated as a hypothesis testing problem -- to test the null hypothesis $\mathcal{H}_0$: ``the text is unwatermarked'', against $\mathcal{H}_1$: ``the text is watermarked''. Following this formulation, all the algorithms share a similar detection paradigm: a score statistic is calculated from the text to be tested, and then a $p$-value is calculated based on the score. Here the $p$-value represents the probability of observing a score at least as extreme as the observed score under the null hypothesis and will be elaborated on for each algorithm.  A detection threshold set to match the false positive rate is applied. If the $p$-value falls below this threshold, the null hypothesis is rejected and the text is classified as watermarked; if not, it is classified as unwatermarked. Different watermarking algorithms adopt different score statistics.
\begin{itemize}
    \item SRL and DualGA: The original SRL \citep{kirchenbauer2023watermark} uses the $z$-score as the test statistic. Under the null hypothesis, the event whether $y_t$ belongs to the green list is i.i.d., implying that the $z$-score defined in \eqref{eq:z_score} asymptotically follows a standard normal distribution. Consequently, for a text yielding a $z$-score of $z_0$, its corresponding $p$-value is calculated by
    \begin{equation}\label{eq:normal_z_pcal}
        p_0 = \mathbb{P}(Z > z_0), \quad Z\sim \mathcal{N}(0, 1).
    \end{equation}
   However, \citet{fernandez2023three} challenges the definition which is built for the asymptotic behavior, by presenting evidence that the empirical FPRs significantly exceed theoretical predictions. The gap between the theoretical guarantees and the performance in practice suggests that the $z$-score might be further improved. To mitigate the gap, they use the number of tokens in the green list $S=|\bm{y}|_G$ as the score statistic. Under the null hypothesis, $S$ follows a binomial distribution with $T$ of trials and $\gamma$ as the success probability (denoted by $\mathcal{B}(T, \gamma)$). Thus, a text with score $|\bm{y}|_G=s_0$ has a $p$-value given by
    \begin{equation}\label{eq:three_brick_p}
         p_0 = \mathbb{P}(S>s_0),\quad S\sim \mathcal{B}(T, \gamma).
    \end{equation}
    In our experiments, $p$-values are calculated following \eqref{eq:three_brick_p}.
    
    \item EMS: Detecting an EMS watermark needs to rerun the generating process of $\{r_{t,k}\}_{k=1}^{|\mathcal{V}|}$ with the given random seed for each step $t$, and calculate $-\log (1-r_{t, y_t})$ for each observed token $y_t$. Under the null hypothesis, variables $\{r_{t, y_t}\}_{t=1}^T$ can be treated as i.i.d. samples from the uniform distribution $\mathrm{Uniform}(0,1)$, thus $\{-\log (1-r_{t, y_t})\}_{t=1}^T$ are i.i.d. sampled from an exponential distribution with mean $1$. The originally proposed $z$-score for EMS is given by
    \begin{equation}\label{eq:ems_zscore}
        Z=\dfrac{S_T- T}{\sqrt{T}}, \quad \text{with } S_T \coloneqq \sum_{t=1}^T -\log (1-r_{t, y_t}).
    \end{equation}
    By the central limit theorem, the $z$-score defined in \eqref{eq:ems_zscore} asymptotically follows a standard normal distribution under the null hypothesis, and correspondingly the $p$-value can be calculated by \eqref{eq:normal_z_pcal}. But for the same reason as in SRL, \citet{fernandez2023three} suggests considering the finite sample distribution, where the $S_T$ follows a gamma distribution $\Gamma(T,1)$ under $\mathcal{H}_0$. For a text with a score $s_0$, its $p$-value is calculated through
    \begin{equation}\label{eq:ems_pval}
        p_0 = \mathbb{P}(S_T > s_0),\quad S_T\sim \Gamma(T, 1).
    \end{equation}
    In the experiments, our $p$-values of EMS are calculated from \eqref{eq:ems_pval}.
\end{itemize}

\subsection{Additional Experiment Results and Discussions}
\label{subapx:additional_exp}
In this subsection, we present the additional experiments omitted in the main body of the paper.

\subsubsection{Examples of Watermarked Responses}
\label{apxsub:Example_response}
We provide several examples of responses generated by the DualGA algorithm, highlighting each token with green or red coloring. Alongside the watermarked responses, we include the corresponding oracle (human) completions provided by the dataset, which are also color-coded using the same detection algorithm. The generated response does not significantly degrade in sentiment quality, while containing a much larger proportion of green words compared to the unwatermarked counterpart.

\begin{figure}[htbp]
  \centering

    \begin{tcolorbox}[colback=blue!5, colframe=blue!40!black, 
                      sharp corners] %title=Example of Text Repetition]
    \footnotesize
    \textbf{Prompt:} Were there any civilizations in the medieval period that actually lived inside of swamps?\\

\textbf{Watermarked Response:} \textcolor{green}{Yes, there were several civil}\textcolor{red}{izations}\textcolor{green}{in medieval period that lived }\textcolor{red}{inside or near swamps}\textcolor{green}{. Swamps were often }\textcolor{red}{seen }\textcolor{green}{as marginal or un}\textcolor{red}{product}\textcolor{green}{ive land that could be difficult to inhabit due to their wet and humid climate, poor }\textcolor{red}{drainage}\textcolor{green}{, and the }\textcolor{red}{presence }\textcolor{green}{of }\textcolor{red}{insects}\textcolor{green}{, sn}\textcolor{red}{akes}\textcolor{green}{, }\textcolor{red}{and }\textcolor{green}{other dangerous wildlife. }\textcolor{red}{However}\textcolor{green}{, some civilizations adapted} ... \\

\textbf{Human Response:} \textcolor{red}{Ven}\textcolor{green}{ice would }\textcolor{red}{be your major }\textcolor{green}{example}\textcolor{red}{! }\textcolor{green}{
}\textcolor{red}{
}\textcolor{green}{It }\textcolor{red}{was build }\textcolor{green}{on }\textcolor{red}{a mal}\textcolor{green}{arial }\textcolor{red}{sw}\textcolor{green}{amp }\textcolor{red}{in }\textcolor{green}{the }\textcolor{red}{late classical and }\textcolor{green}{early }\textcolor{red}{medieval era and }\textcolor{green}{is still }\textcolor{red}{to }\textcolor{green}{this }\textcolor{red}{day still built }\textcolor{green}{on }\textcolor{red}{a}\textcolor{green}{l}\textcolor{red}{ago}\textcolor{green}{on. Which is why }\textcolor{red}{it }\textcolor{green}{is s}\textcolor{red}{inking... }\textcolor{green}{Martin}\textcolor{red}{, John Jeffries}\textcolor{green}{, }\textcolor{red}{and Dennis Romano}\textcolor{green}{, }\textcolor{red}{ed}\textcolor{green}{s. }\textcolor{red}{Ven}\textcolor{green}{ice }\textcolor{red}{Reconsidered: }\textcolor{green}{The History }\textcolor{red}{and Civilization of }\textcolor{green}{an }\textcolor{red}{Italian City-}\textcolor{green}{State, 12}\textcolor{red}{9}\textcolor{green}{7}\textcolor{red}{–179}\textcolor{green}{7}\textcolor{red}{. J}\textcolor{green}{H}\textcolor{red}{U Press}\textcolor{green}{, }\textcolor{red}{}\textcolor{green}{200}\textcolor{red}{2}

\end{tcolorbox}
\end{figure}

\begin{figure}[htbp]
  \centering
  \begin{tcolorbox}[colback=blue!5, colframe=blue!40!black, 
                      sharp corners] %title=Example of Text Repetition]
    \footnotesize
\textbf{Prompt:} Did Robert E. Lee ever say, about the Civil War, that he wished he owned every slave so that he could free them and avoid war?\\

\textbf{Watermarked Response:} \textcolor{red}{No}\textcolor{green}{, }\textcolor{red}{there }\textcolor{green}{is no historical }\textcolor{red}{evidence to }\textcolor{green}{support this claim. Lee did not express such a}\textcolor{red}{sentiment}\textcolor{green}{, and the claim has likely been }\textcolor{red}{invent}\textcolor{green}{ed or  }\textcolor{red}{dist}\textcolor{green}{orted by modern partisans or }\textcolor{red}{ideolog}\textcolor{green}{ues trying to reframe the past to }\textcolor{red}{suit }\textcolor{green}{their own political goals. Lee himself opposed }\textcolor{red}{sla}\textcolor{green}{very and freed }\textcolor{red}{some }\textcolor{green}{of }\textcolor{red}{his }\textcolor{green}{own slaves during } ...\\

\textbf{Human Response:} \textcolor{red}{No, Robert E}\textcolor{green}{. }\textcolor{red}{Lee never }\textcolor{green}{said }\textcolor{red}{this. }\textcolor{green}{This is }\textcolor{red}{a mis-quote from an }\textcolor{green}{"inter}\textcolor{red}{view}\textcolor{green}{" that a reverend named }\textcolor{red}{John Leyburn claimed }\textcolor{green}{to }\textcolor{red}{have had }\textcolor{green}{with }\textcolor{red}{Robert E}\textcolor{green}{. }\textcolor{red}{Lee }\textcolor{green}{in 1869}\textcolor{red}{, }\textcolor{green}{several years after }\textcolor{red}{the war was }\textcolor{green}{over, }\textcolor{red}{and a year }\textcolor{green}{before }\textcolor{red}{Lee}\textcolor{green}{'s }\textcolor{red}{death. }\textcolor{green}{}\textcolor{red}{John Ley }\textcolor{green}{burn had been born in Lexington}\textcolor{red}{, Virginia}\textcolor{green}{, }\textcolor{red}{and was }\textcolor{green}{a }\textcolor{red}{Pres}\textcolor{green}{byterian pastor}\textcolor{red}{. }\textcolor{green}{At the outbreak }\textcolor{red}{of the }\textcolor{green}{Civil War} ...

    \end{tcolorbox}

\end{figure}

\subsubsection{Runtime Overhead of Watermarking Algorithms}\label{subapx:computational_table}

In Table~\ref{tab:watermark-runtime}, we compare the generation time of the watermarking algorithms considered in the main text (and detailed in Appendix \ref{subapx:algs_implement}). We use Llama‑3‑8B as the LM for generation. All algorithms are tested on a single NVIDIA L40S GPU, with batch size $=16$ and the maximum generation length $=300$ tokens. Our DualGA algorithm incurs a 9.7\% overhead relative to the unwatermarked baseline, indicating that the majority of the computation time is dominated by the underlying language model inference rather than the watermarking procedure. Moreover, DualGA demonstrates competitive runtime performance compared to other benchmark methods.

\begin{table}[t]
  \centering
  \begin{tabular}{lrr}
    \toprule
    \textbf{Algorithm} & \textbf{Time (s)} & \textbf{Overhead (\%)}\\
    \midrule
    No watermark & 704 & --\\
    SRL          & 745 & 5.8\\
    EMS          & 816 & 15.9\\
    UW           & 726 & 3.1\\
    ATW          & 2\,047 & 190.7\\
    DualGA (ours) & 772 & 9.7\\
    \bottomrule
  \end{tabular}
  \caption{End‑to‑end generation time for each watermarking algorithm (16 samples, max 300 tokens) and the percentage overhead relative to the un‑watermarked baseline.}
    \label{tab:watermark-runtime}
\end{table}

\subsubsection{Replication of Table \ref{Tab:exp_performance} Using Llama-7B}
\label{subapx:addi_detectVSdistort}

We evaluate the performance of the algorithms listed in Table \ref{Tab:exp_performance}, replacing Llama-3-8B with Llama-7B as the language model. The results are presented in Table \ref{Tab:exp_performance_ext}. Interestingly, despite the significant difference in generation quality between Llama-7B and Llama-3-8B, the performance of the watermarking algorithms remains largely consistent across the two LMs.

\begin{table}[h!]
  \centering
\resizebox{\textwidth}{!}{
  \begin{tabular}{ll|ccc|ccc}
    \toprule
    \multirow{3}{*}{Method} & \multirow{3}{*}{Configuration}&\multicolumn{3}{c|}{C4} &\multicolumn{3}{c}{LFQA}  \\
    & & \multicolumn{2}{c}{TPR $\uparrow$}& \multirow{2}{*}{KL$\downarrow$} & \multicolumn{2}{c}{TPR$\uparrow$}& \multirow{2}{*}{KL$\downarrow$} \\ 
    \cmidrule(lr){3-4} \cmidrule(lr){6-7}
& &FPR$<10^{-4}$ &FPR$<10^{-6}$& &FPR$<10^{-4}$ &FPR$<10^{-6}$& \\
    \midrule
\multirow{9}{*}{SRL}   
& $\delta=1, \gamma=0.1$&0.15 & 0.04 & 0.04 & 0.31 & 0.10 & 0.04 \\
 &  $\delta=1,\gamma=0.5$&0.49 & 0.18 & 0.07 & 0.56 & 0.22 & 0.07 \\
 & $\delta=1,\gamma=0.7$&0.22 & 0.04 & 0.05 & 0.33 & 0.11 & 0.05 \\
& $\delta=5,\gamma=0.1$&0.99 & 0.98 & 1.3 & 1.0 & 0.98 & 1.3 \\
   & $\delta=5,\gamma=0.5$&1.0 & 1.0 & 0.70 & 1.0 & 1.0 & 0.69 \\
   & $\delta=5,\gamma=0.7$&0.96 & 0.92 & 0.39 & 0.98 & 0.96 & 0.39 \\
& $\delta=10,\gamma=0.1$&1.0 & 0.99 & 1.7 & 1.0 & 0.98 & 1.5 \\
& $\delta=10,\gamma=0.5$&1.0 & 0.99 & 1.1 & 0.99 & 0.99 & 0.90 \\
& $\delta=10,\gamma=0.7$&0.99 & 0.96 & 0.62 & 0.96 & 0.94 & 0.51 \\
    \midrule
 \multirow{3}{*}{EMS} 
 & $\tau=0.5$&0.44 & 0.21 & 0.65 & 0.26 & 0.16 & 0.47 \\
 & $\tau=1$&0.94 & 0.87 & 0.98 & 0.94 & 0.94 & 0.91 \\
 & $\tau=1.5$&0.98 & 0.98 & 2.4 & 1.0 & 1.0 & 1.6 \\
    \midrule
\multirow{2}{*}{UW} 
 & $\delta$-reweight & 1.0 & 1.0 & 2.0 & 1.0 & 0.99 & 1.6 \\
 & $\gamma$-reweight & 1.0 & 0.99 & 1.3 & 0.99 & 0.99 & 1.1 \\
 \midrule
 \multirow{2}{*}{ATW} 
 & $\delta=1,\alpha=2$ & 0.34 & 0.13 & 0.05 & 0.41 & 0.27 & 0.04\\
 & $\delta=5,\alpha=2$ & 0.93 & 0.90 & 0.32 & 0.93 & 0.91 & 0.31 \\
\midrule 
 \multirow{4}{*}{DualGA} 
 & $\Delta=0.2$&0.90 & 0.70 & 0.15 & 0.91 & 0.72 & 0.13 \\
 & $\Delta=0.3$&0.97 & 0.96 & 0.33 & 0.98 & 0.96 & 0.30 \\
 & $\Delta=0.4$&1.0 & 0.98 & 0.61 & 1.0 & 1.0 & 0.55 \\
 & $\Delta=0.5$ &1.0 & 1.0 & 0.93 & 1.0 & 0.98 & 0.83 \\
    \bottomrule
  \end{tabular}}
  \caption{The detection ability (TPR) and model distortion (KL) of watermarking algorithms across different hyperparameter configurations, using Llama-7B as the LM. The TPR under different FPR thresholds measures the detection ability (the higher the better). The KL measures the distortion of the watermarked text (the lower the better).}
  \label{Tab:exp_performance_ext}
\end{table}

\begin{comment}
\begin{figure}
    \centering
  \includegraphics[width=1\linewidth]{}
  \caption{Detection ability (Realized DG) and distortion (KL) on the C4 dataset at the population level (left) and the individual prompt level (right). }\label{fig:C4_Pareto}
\end{figure}
\end{comment}

\subsubsection{Pareto Optimality and Stability of DualGA}
\label{subapx:stability_apx}

In Figure \ref{fig:LFQA_boxplot_zscore}, we include a performance comparison between SRL, EMS and our DualGA algorithms on the C4 dataset, employing $z$-scores (defined in Appendix \ref{subapx:detection_ps}) as a measure of detection ability since EMS does not divide the vocabulary into green/red lists (and thus no DG). Figure \ref{fig:LFQA_boxplot_zscore} uses boxplots to show the detection ability across algorithms, where the horizontal axis's location is computed by the mean of KL divergences over 500 samples. It highlights DualGA's stability in detection ability, as evidenced by its smaller dispersion in $z$-scores compared to benchmark algorithms.

\begin{figure}[!htb]
  \centering
  \includegraphics[width=1\linewidth]{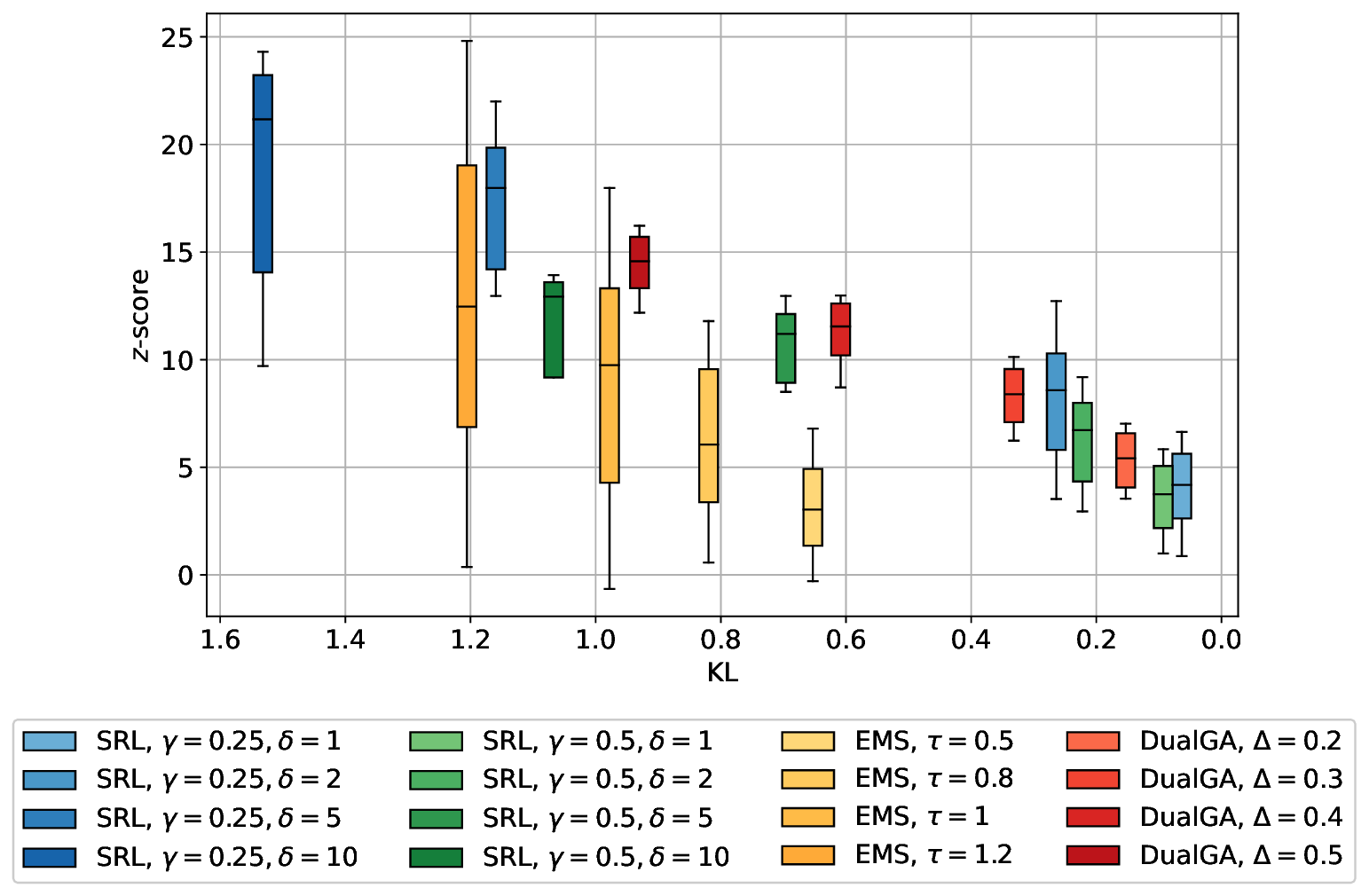}
  \caption{The dispersion in detection ability across different algorithms. Boxplots (showing the 0th, 10th, 50th, 90th, and 100th percentiles) are applied to represent the distributions of $z$-scores calculated from 500 samples of C4, illustrating the dispersion in detection ability ($z$-scores) among several algorithms. The x-axis is the averaged KL across 500 test prompts. }
  \label{fig:LFQA_boxplot_zscore}
\end{figure}

\subsubsection{Details of Watermark-Removal Attacks}
\label{subapx:robustness_attack}
We provide the implementation details of the watermark-removal attack considered in Section \ref{subsec:body_text_robust}. 

\begin{itemize}
    \item Deletion: A certain percentage of tokens, referred to as the Attack Rate, is randomly selected from the watermarked text and deleted to produce the text for detection.
    \item Insertion: We first randomly sample a position in the watermarked text and insert a randomly sampled token from the vocabulary list. This process is repeated a number of times equal to the Attack Rate multiplied by the original length of the watermarked text.
    \item Substitution:  Similar to the deletion approach, a proportion of tokens specified by the Attack Rate is randomly selected from the watermarked text. These tokens are then replaced with randomly chosen tokens from the vocabulary to create the text.
    
    \item Paraphrasing: To perform a paraphrasing attack with a controllable Attack Rate, we directly query GPT-4 via the OpenAI API using the prompt: ``Paraphrase the sentence, replacing only ... percent of the words'', where the specified percentage corresponds to the desired Attack Rate. 
    
    \item Entropy-based Substitution: Use Llama-3-8B to estimate the conditional entropy at each position and replace the token once the entropy exceeds a predefined threshold. This threshold is set to ensure the desired Attack Rate. The token is replaced with the most probable candidate from the red list.

    \item Color-based Substitution: This method first identifies the green words in a sentence. It then queries GPT-4 to generate paraphrases by replacing only those green words. The process is repeated 8 times, and the paraphrase with the highest number of green words successfully replaced by red words is selected.    
\end{itemize}

\subsubsection{Detectability with Sentence Length}

Figure \ref{fig:z_dynamic_w_t} shows the dynamics of the $z$-score for the unwatermarked sentence and watermarked sentences under varying substitution‐attack levels, as a function of sentence length $T_{\text{max}}$. The dynamic of $z$-score is shown for the responses of a single prompt, but the trend generally holds for most of the data samples. The result shows that the $z$-score of the unwatermarked sentence fluctuates within the 95\% confidence interval ($[-1.96,1.96]$). In contrast, the $z$-scores for watermarked sentences increase with $T_{\max}$, indicating a higher detection ability. This aligns with the intuition that longer watermarked texts are easier to detect and more resistant to watermark attacks. Specifically, the watermark remains detectable (with FPR$<5\%$) under a random substitution of less than $30\%$ of the total tokens, but loses detectability when more than $50\%$ of the tokens are attacked.

\begin{figure}[!htb]
  \centering
  \includegraphics[width=1\linewidth]{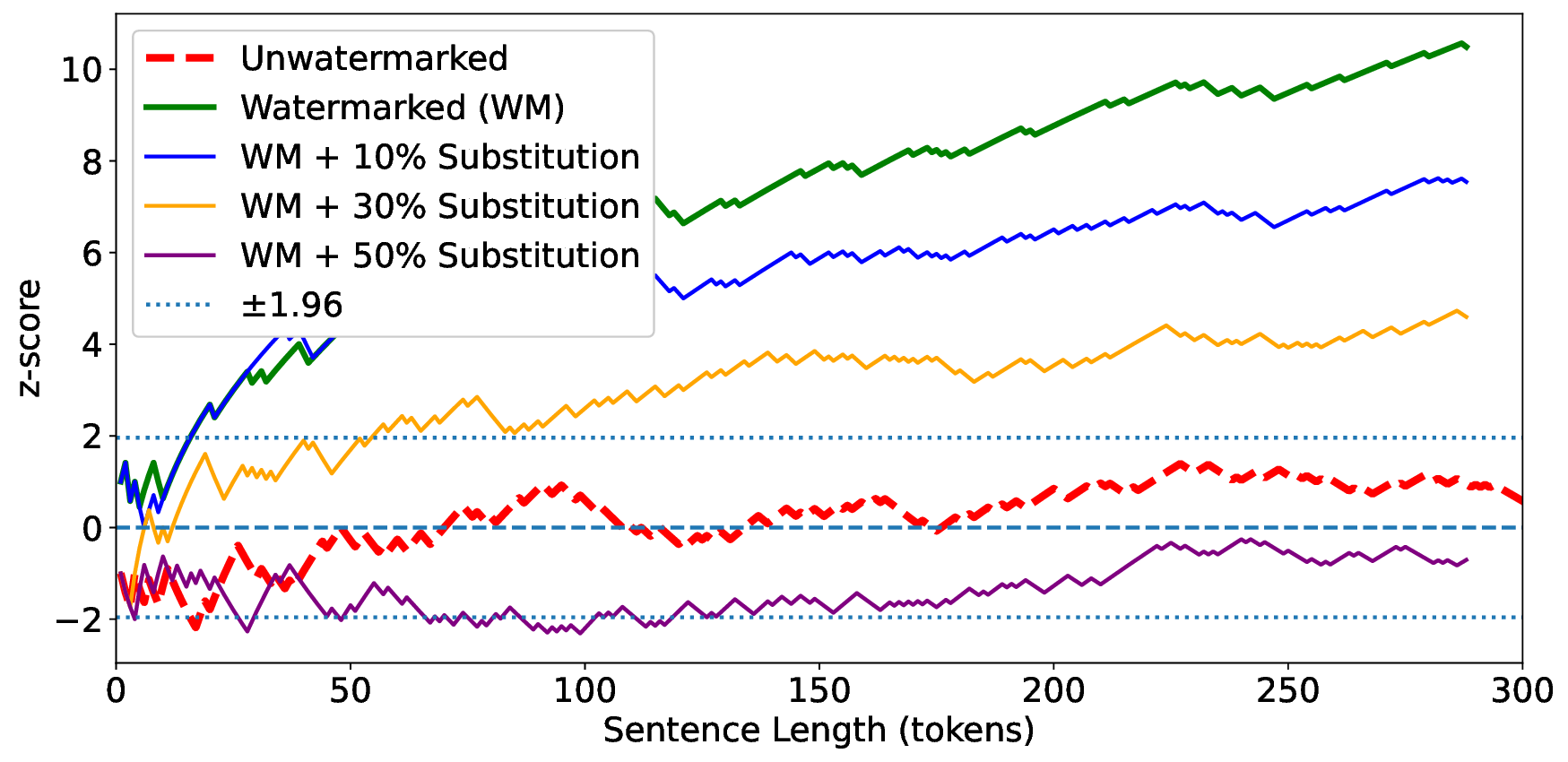}
  \caption{Evolution of the $z$-score with sentence length (in tokens) for the unwatermarked response and watermarked responses under varying substitution‐attack levels. As the sentence length increases, the watermarked text’s $z$-score rises, indicating a higher likelihood of detection. With up to $30\%$ of tokens randomly replaced, the watermark remains detectable beyond 50 tokens; however, it becomes undetectable once more than $50\%$ of tokens are substituted.}
  \label{fig:z_dynamic_w_t}
\end{figure}

\subsubsection{Repetition Detection via Dual-Variable Dynamics} \label{subsec:body_text_repetition_detection}

Most (if not all) of the existing watermarking algorithms rely on hashing-based rules for generating watermarks. For example, the green word-based watermarking algorithms generate the partition of the vocabulary for each token based on a hashing-based pseudo-random function. This type of hashing-based design can cause the issue of text repetition, as empirically observed by \citet{kuditipudi2023robust}. Specifically, if a sequence of tokens typically appears together and the hashing code for the final token in the sequence happens to skew the vocabulary distribution significantly towards the first token, this can lead to the sampling of a repetitive chunk, creating a cyclical pattern (illustrated in Figure \ref{fig:body_text_dual_lambda} (left)). While such cycles may emerge infrequently at the population level, their detection with the subsequent fixing is crucial to ensure quality on the individual (prompt) level. Traditional global metrics like entropy or KL-divergence are insufficient for identifying these repetitions, as they might happen only two or three times before the generating process exits the cycle.

\begin{comment}
\begin{figure}[ht!]
    \begin{adjustbox}{valign=t}
    \begin{minipage}{0.5\textwidth}
        \centering
        \includegraphics[width=\textwidth]{} 
    \end{minipage}
    \end{adjustbox}
    \begin{adjustbox}{valign=t}
    \begin{minipage}{0.44\textwidth}
        \centering
        \includegraphics[width=\textwidth]{}
    \end{minipage}
    \end{adjustbox}
      \caption{Left: the repeated chunks of tokens generated by DualGA. Right: The abnormal consistent rise in ${\lambda_t}$.}
  \label{fig:body_text_dual_lambda}
\end{figure}
\end{comment}

\begin{figure}[htbp]
  \centering
  % Left: the text box
  \begin{minipage}[b]{0.6\textwidth}
    \begin{tcolorbox}[colback=blue!5, colframe=blue!40!black, 
                      sharp corners] %title=Example of Text Repetition]
    \footnotesize
    \textbf{Prompt:} Why is the ``Canaanite shift'', or the shift of ``a'' to ``o'', more complete in Ashkenazi and Yemenite Hebrew than in Sephardi and modern Israeli Hebrew?\\

      \textbf{Generated Response:} The reason for the contrast between Mizrachi and European Jews ... pronunciation of the shahar or of the shetaf is \textcolor{red}{linguistics, and nothing else}, just the same for linguistics and nothing else for the Russian reflexive suffix *ynyj' is linguistics and nothing else, just the same for linguistics and nothing else for the Arabic dable (debt, lend); \textcolor{red}{linguistics, and nothing else}, just the same for linguistics and nothing else for the Russian reflexive ...

    \end{tcolorbox}
  \end{minipage}\hfill
  % Right: the figure
  \begin{minipage}[b]{0.39\textwidth}
    \centering
    \includegraphics[width=\linewidth]{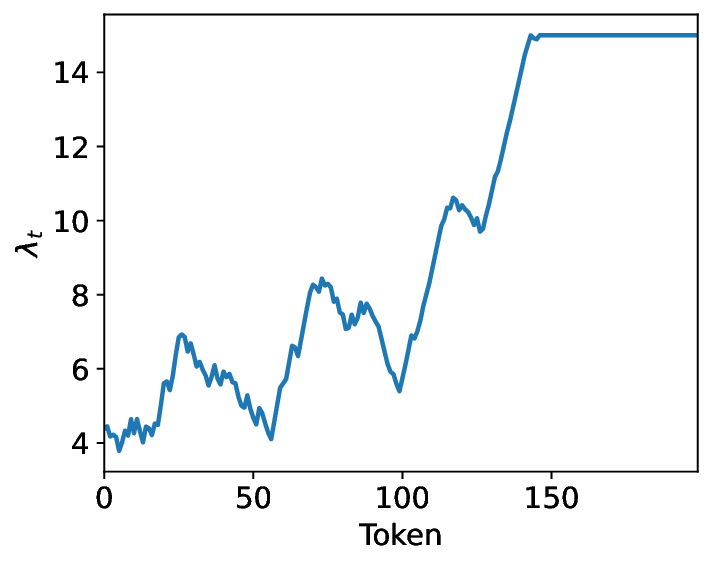} % replace with your image file
  \end{minipage}
  \caption{Left: Example of repeated token chunks generated by DualGA using Llama-3-8B. The initial non-repetitive text and the full extent of the repeated sequences are abbreviated with “...”. Red words indicate the start of each repeated chunk. Right: Abnormal rise in ${\lambda_t}$. Values of $\lambda_t$ that exceed 15 are truncated to 15 to ensure algorithmic stability.}
  \label{fig:body_text_dual_lambda}
\end{figure}

\begin{comment}
    \textbf{Prompt:} Are spider's individual eyes independent of one-another? \\

      \textbf{Generated Response:} spiders do appear to detect movements that a single compounded eye ... with different parts of the pupil or iris, so my conclusion is that these creatures eyes appear to work more along an independent theme rather than a compounded spectacle idea, \\
      
      so my conclusion is that these creatures eyes appear to work more along an independent theme rather than a compounded spectacle idea,  \\
      
      so my conclusion is that these creatures eyes appear to work more along an independent theme rather than a compounded spectacle idea ... \\
\end{comment}

Figure \ref{fig:body_text_dual_lambda} (right) plots the dual variables $\lambda_t$ of our algorithm for the sequence generated on Figure \ref{fig:body_text_dual_lambda} (left). We can see the dual variable significantly jumps up when the repetition starts. This phenomenon is indeed general (we provide more examples in Figure \ref{fig:apx_rep_exps} ) and reasonable. The rationale is that, generally, a token sequence appearing with high certainty may pose a great challenge to the watermarking process: modest watermarking strength (smaller dual variable $\lambda_t$) may not suffice to override the certainty inherent in selecting the subsequent word. Therefore, the algorithm needs to increase the dual variable significantly. The repetitions can be viewed as a very deterministic generation that requires increasing the dual variable. In this light, the dual variable works as a monitor for the repetition phenomenon of the watermarking process.

Below we give more examples  of text repetition in Figure \ref{fig:apx_rep_exps}. The yellow blocks on the left show the generated text, while the record curves on the right show the dynamics of $\lambda_t$ with step $t$. We select from the group of text generated from DualGA with $\Delta=0.5$ whose $\{\lambda_t\}$ series ever reaches above $12$, and in all 4 examples that we pick out, text repetition is observed.

\begin{figure}[ht!]
    \begin{adjustbox}{valign=t}
    \begin{minipage}{0.5\textwidth}
        \centering
        \includegraphics[width=\textwidth]{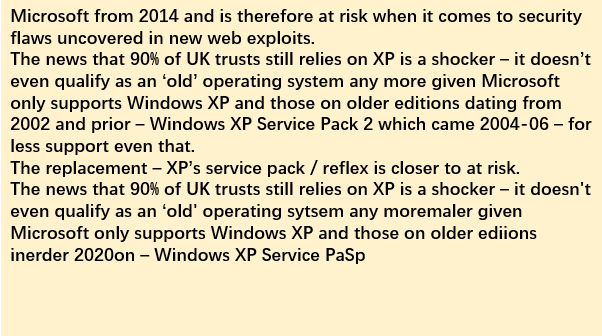} 
    \end{minipage}
    \end{adjustbox}
    \hfill
    \begin{adjustbox}{valign=t}
    \begin{minipage}{0.4\textwidth}
        \centering
        \includegraphics[width=\textwidth]{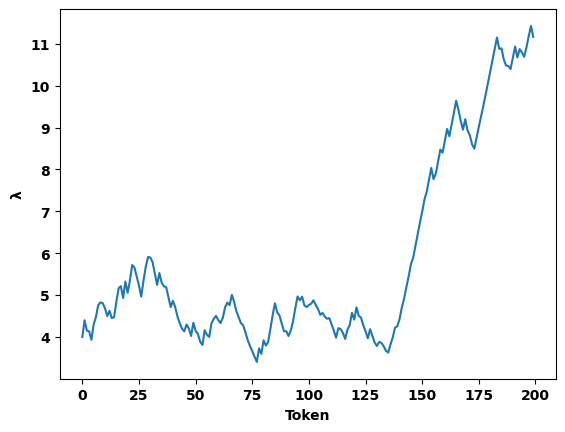}
    \end{minipage}
    \end{adjustbox}
    
        \begin{adjustbox}{valign=t}
    \begin{minipage}{0.5\textwidth}
        \centering
        \includegraphics[width=\textwidth]{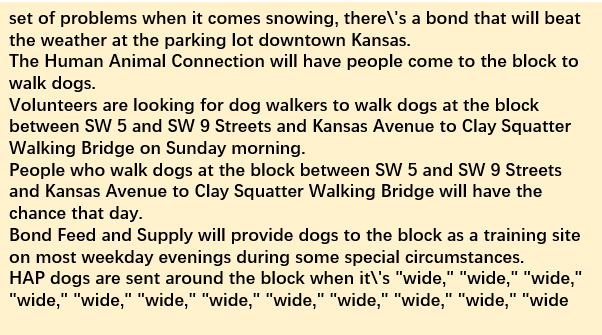} 
    \end{minipage}
    \end{adjustbox}
    \hfill
    \begin{adjustbox}{valign=t}
    \begin{minipage}{0.4\textwidth}
        \centering
        \includegraphics[width=\textwidth]{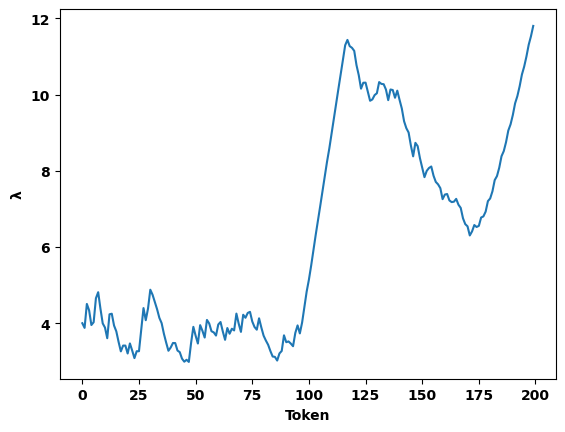}
    \end{minipage}
    \end{adjustbox}
        \begin{adjustbox}{valign=t}
    \begin{minipage}{0.5\textwidth}
        \centering
        \includegraphics[width=\textwidth]{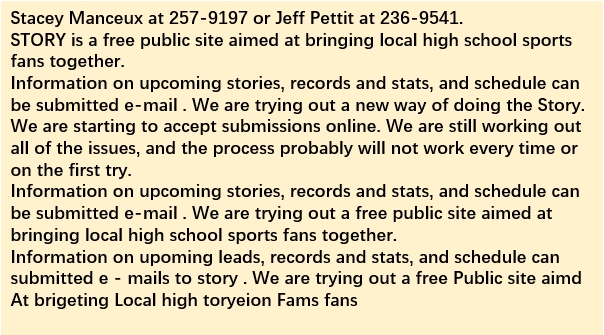} 
    \end{minipage}
    \end{adjustbox}
    \hfill
    \begin{adjustbox}{valign=t}
    \begin{minipage}{0.4\textwidth}
        \centering
        \includegraphics[width=\textwidth]{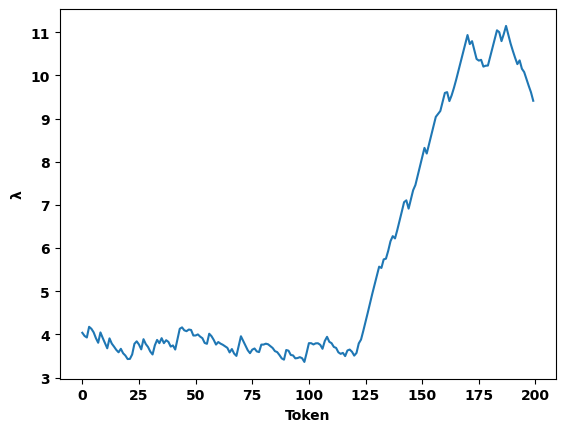}
    \end{minipage}
    \end{adjustbox}
    \begin{adjustbox}{valign=t}
    \begin{minipage}{0.5\textwidth}
        \centering
        \includegraphics[width=\textwidth]{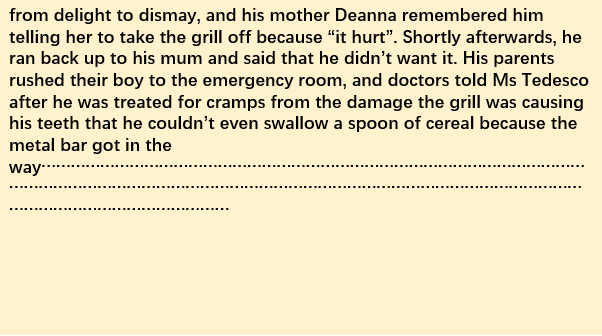}
    \end{minipage}
    \end{adjustbox}
    \hfill
    \begin{adjustbox}{valign=t}
    \begin{minipage}{0.4\textwidth}
        \centering
        \includegraphics[width=\textwidth]{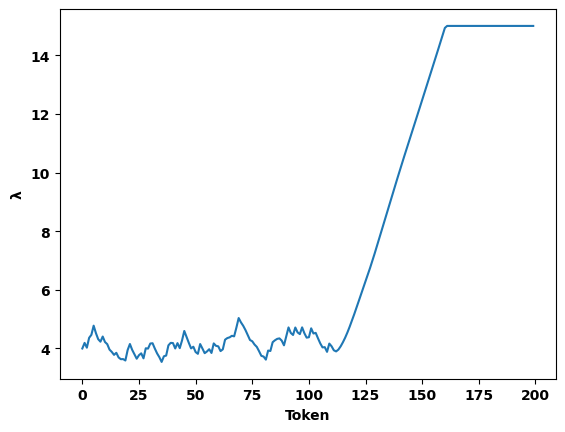}
    \end{minipage}
    \end{adjustbox}
    \caption{Examples for the repetitions (left) and the abnormal consistent rise in $\lambda_t$ (right).}
    \label{fig:apx_rep_exps}
\end{figure}

\begin{comment}
    We also find that the phenomenon quickly disappears after we regenerate the text (although the regenerated results are not shown here). To sum up, in the strong distortion regime of DualGA, we recommend monitoring the fluctuation of $\lambda_t$ and restarting the generation once a monotonic increase pattern is detected during the generating process. 
\end{comment}

\subsubsection{Sensitivity Study on Hyperparameters}
\label{appsec:robust_hyper}
In addition to the robustness across different prompts discussed in Section~\ref{sec:Consistent_Performance_exp}, another aspect of DualGA’s consistent performance is its robustness to its hyperparameter settings. As briefly mentioned in the previous section, the $\Delta$ parameter of DualGA serves as a handle to balance the trade-off but does not degrade the Pareto optimality. The $\gamma$ is set by solving \eqref{eq:certainty_equivalent} in Appendix \ref{subapx:choose_green_ratio}. The only hyperparameters that may affect the Pareto optimality are the initial dual variable $\lambda_1$ and the step size $\eta$. Empirically, we select these parameters by testing on a small set of 16 samples and choosing the configuration that performs best. Across all our experiments, the resulting $\eta$ always falls within the range $[0.1, 2]$, and $\lambda_1$ is typically around $10 \times \Delta$. In Figure \ref{fig:box_4}, we examine the robustness of DualGA across various choices of $\lambda_1$ and $\eta$. Within a reasonable range of parameter values (specifically, $\lambda_1 \in \{1, 2, 5\}$ and $\eta \in \{0.1, 1, 5\}$), the algorithm demonstrates stable performance and maintains the distortion levels (KL) close to optimal. However, for more extreme parameter settings, such as $\lambda_1 = 10$ and $\eta = 10$, the KL is noticeably suboptimal.

\begin{figure}[!htb]
\centering
\includegraphics[width=0.8\linewidth]{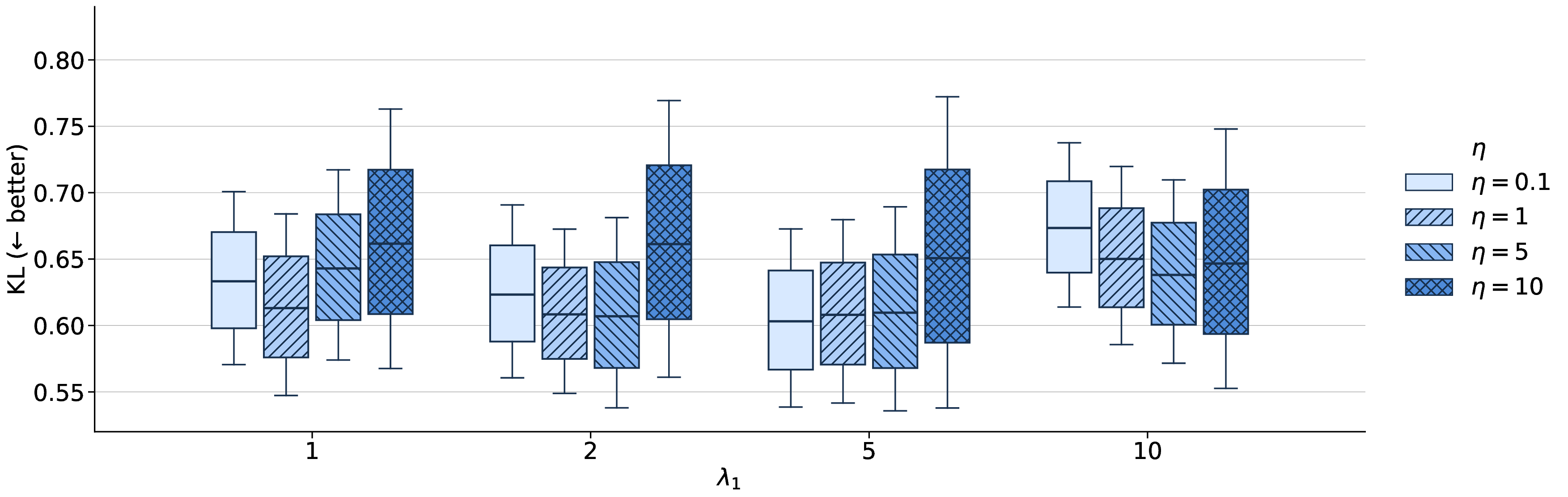}
\caption{Distributions of the distortion level (KL) under various hyperparameter settings for DualGA, evaluated on LFQA using Llama-3-8B and with $\Delta = 0.4$. Each box plot represents the KL distribution across different configurations of the initial dual variable $\lambda_1$ and the step size $\eta$. Boxes show the median (line) and the 25th-75th quantiles range; whiskers indicate the 10th–90th percentiles. The algorithm exhibits stable performance for reasonable hyperparameter choices, although for extreme values (e.g., $\eta = 10$ or $\lambda_1 = 10$), the resulting KL divergence remains noticeably larger than the optimal values.
}\label{fig:box_4}
\end{figure}

\section{Possibility for Other Variants of Watermarking}
\begin{algorithm}[ht!]
\caption{Dual Gradient Ascent on Autocovariance}
    \label{alg:DualGA_autocov}
    \begin{algorithmic}[1] 
    \Require Original LM $\bm{p}$, prompt $x$, pseudorandom function $f_{\gamma}$ to determine the green list with ratio $\gamma$, random seed $\mathcal{K}$, DG constraint $\Delta$, step size $\eta$, initial dual variable $\lambda_1$
    \Ensure $y_1, \dots, y_{T_{\text{max}}}$
    \State Initialize $t\gets1$, $y_0 \gets \mathcal{K}$, color $y_0$ with \emph{red}
    \While{$t \leq T_{\text{max}}$ \& $y_{t-1} \neq v_{\text{term}}$}
        \State Decide the green list via $f(y_{t-1})$ and observe the green list probability $G_t$ and the red list probability $R_t$
        \[G_t \gets \sum_{k \text{ green at }t} p_{t,k},\quad R_t \gets 1-G_t\]
        \State Set $\delta_t$ so that the Lagrangian is maximized
        \[\delta_t \gets \lambda_t\]
        \State Compute the logit vector generated by the original LM $\bm{p}$. Based on the color of $y_{t-1}$:
        \If{$y_{t-1}$ is \emph{red}}
            \State Produce probability vector $\bm{q}^{\delta_t}_{t}$ by
            \[
            q^{\delta_t}_{t,k} \gets 
            \frac{\exp(l_{t,k} + \delta_t \cdot \mathbbm{1}\{k \text{ green at }t\})}{\sum_{k^\prime=1}^{|\mathcal{V}|} \exp(l_{t,k^\prime} + \delta_t \cdot \mathbbm{1}\{k^\prime \text{ green at }t\})}
            \]
          \Else
             \State Produce probability vector $\bm{q}^{\delta_t}_{t}$ by
            \[
            q^{\delta_t}_{t,k} \gets 
            \frac{\exp(l_{t,k} + \delta_t \cdot \mathbbm{1}\{k \text{ red at }t\})}{\sum_{k^\prime=1}^{|\mathcal{V}|} \exp(l_{t,k^\prime} + \delta_t \cdot \mathbbm{1}\{k^\prime \text{ red at }t\})}
            \]
          \EndIf
          
        \State Sample out a token and record its color
        \[y_t \sim \bm{q}^{\delta_t}_{t}\]
        \State Compute the gradient of the dual function
        \[\mathrm{gd}_t \gets \Delta + \mathrm{DA}_t(\delta_t)\]
        \State Update the dual variable via gradient ascent
        \[\lambda_{t+1} \gets \lambda_{t} + \eta \cdot \mathrm{gd}_t\]
        \State $t\gets t+1$
    \EndWhile
    \If{$t<T_{\text{max}}$}
        \State Set all remaining $y_{t+1}, \dots, y_{T_{\text{max}}}$ to be $v_{\text{term}}$
    \EndIf
    \end{algorithmic}
\end{algorithm}

In this section, we introduce a variant for the design of watermark algorithms other than increasing the green list probability only. We hope this section serves as an inspiration for further developments of watermarking schemes. The red-green list scheme guarantees that under the null hypothesis, variables $N_t \coloneqq \mathbbm{1}\{y_t \in f_{\gamma}(y_{t-1})\}-\gamma$'s are i.i.d. and mean-zero random variables. In other words, a violation of this i.i.d. assumption in any sense can serve as a watermarking scheme. Specifically, for any statistic $S_T=S(\{N_t\}_{t=1}^{T})$ and a corresponding watermarking paradigm such that for watermarked texts the distribution of $S_T$ deviates from the distribution of $S(\{N_t\}_{t=1}^{T})$ under the null hypothesis, we can also derive a reasonable watermarking algorithm. In both SRL (Algorithm \ref{alg:kirchen}) and DualGA (Algorithm \ref{alg:DualGA}), we construct the statistic by $S_T=\sum_{t=1}^T N_t$. Other variants of watermarking algorithms exploiting different statistics could also be developed, which may serve well in certain circumstances.

To give an example, consider using the sample autocovariance as the score statistic
$$
S_T = \dfrac{1}{T}\sum_{t=1}^{T-1}(N_t-\overline{N})(N_{t+1}-\overline{N}), \quad \overline{N}\coloneqq \dfrac{1}{T}\sum_{t=1}^T N_t,
$$
which, by Theorem 7.2.1 in \citet{BrockwellTimSeries}, has the following asymptotic property
$$
\lim_{T\to\infty}\dfrac{\sqrt{T}\cdot S_T}{\gamma (1-\gamma)} \stackrel{\text{d}}{\Longrightarrow} \mathcal{N}(0, 1).
$$
Define $Z_T\coloneqq \dfrac{\sqrt{T}\cdot S_T}{\gamma (1-\gamma)}$. The above property says that under the null hypothesis (where $N_t$'s are i.i.d. and mean-zero), the statistic $Z_T$ asymptotically has a standard normal distribution. In light of this, a watermarking algorithm is designed to induce the largest deviation of $Z_T$ from a standard normal distribution, at the least cost of model distortion (measured in KL-divergence). One possible approach is to greedily minimize the stepwise autocovariance. Once we observe that $y_t$ is realized into the green list, we add $\delta_{t+1}$ to the red list logits of the next token and encourage the model to pick a different color; on the contrary, once we observe that $y_t$ is realized as red, we add $\delta_{t+1}$ to the green list logits for the next token.

For the test statistic, we define $S_T$ as $S_T \coloneqq \dfrac{1}{T}\sum_{t=1}^{T-1}N_t\cdot N_{t+1}$, which satisfies the asymptotically normal property under the null hypothesis. When applying watermarks, our objective is to minimize the expectation of $S_T$, which can be decomposed into stepwise minimizations of \emph{difference of the autocovariance} $\mathrm{DA}_t$, defined by
$$
\begin{aligned}
    -\mathrm{DA}_t(\bm{q}_t, y_{t-1}) \coloneqq & \gamma\cdot \mathbbm{1}\{y_{t-1} = \text{red}\}\cdot (\sum_{k = \text{green},  k \in \mathcal{V}} q_{t,k} - \sum_{k = \text{green},  k \in \mathcal{V}} p_{t,k}) \\
    & + (1-\gamma)\cdot \mathbbm{1}\{y_{t-1} = \text{green}\}\cdot (\sum_{k = \text{red},  k \in \mathcal{V}} q_{t,k} - \sum_{k = \text{red},  k \in \mathcal{V}} p_{t,k}).
\end{aligned}
$$
The new algorithm, DualGA-AutoCov is presented as Algorithm \ref{alg:DualGA_autocov}. The algorithm is evaluated based on its detection ability and distortion level on the C4 dataset. The comparison of DualGA-AutoCov between DualGA and other benchmarks (Figure \ref{fig:ACF_combine} (left)) shows that DualGA-AutoCov has a similar trade-off pattern to SRL. Although the average $p$-value of DualGA-AutoCov is larger than the original DualGA at the same distortion extent (when KL$=0.5$), it performs consistently better than EMS under different configurations. In the right $3$ figures of Figure \ref{fig:ACF_combine}, the robustness of DualGA-AutoCov is tested under the three common attacks given in Section \ref{subapx:robustness_attack}. The new DualGA-AutoCov remains detectable at a significance level of $10^{-4}$ when the attack rate is smaller than $0.3$. This observation validates the reasonable robustness of this new formulation under mild attacks.

\begin{figure}[ht!]
     \centering
         \includegraphics[width=\textwidth]{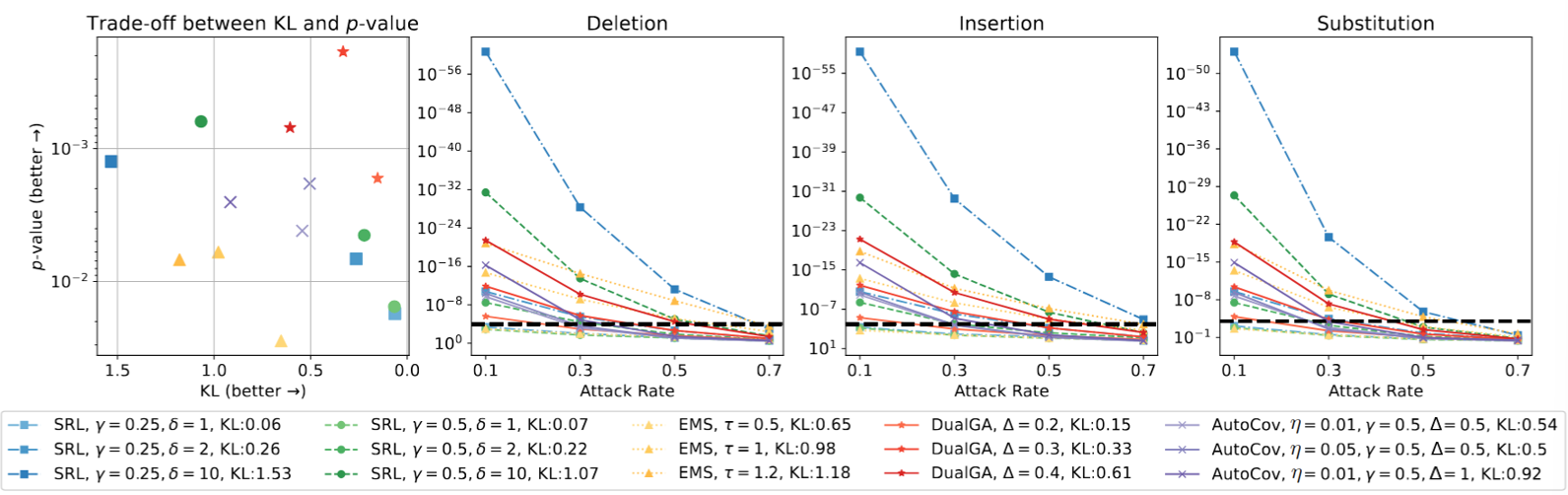}
        \caption{Evaluation of DualGA-AutoCov algorithm (abbreviated as AutoCov in the legends) on the C4 dataset. The leftmost figure gives the population-level distortion (KL) and detection ability ($p$-value) of DualGA-AutoCov. At the same distortion extent, a smaller $p$-value marks a better detection ability. The three figures on the right show
        the robustness of different watermarking algorithms under the three types of attack. The attack rate denotes the proportion of the attacked tokens and the black dashed line represents a median $p$-value of $10^{-4}$ across the generated samples. Together the above figures validate that DualGA-AutoCov is no worse than SRL in terms of the detection-distortion trade-off, and is reasonably robust under mild attacks. 
        }
        \label{fig:ACF_combine}
\end{figure}

\end{document}